\documentclass{article} % For LaTeX2e
\usepackage{iclr2026_conference,times}

\usepackage{amsmath,amsfonts,bm}

\def\eqref#1{equation~\ref{#1}}
\def\Eqref#1{Equation~\ref{#1}}
\def\1{\bm{1}}

\DeclareMathAlphabet{\mathsfit}{\encodingdefault}{\sfdefault}{m}{sl}
\SetMathAlphabet{\mathsfit}{bold}{\encodingdefault}{\sfdefault}{bx}{n}

\usepackage{microtype}
\usepackage{graphicx}
\usepackage{subcaption}
\usepackage{booktabs} % for professional tables

\usepackage{hyperref}

\usepackage{smile}
\usepackage{bm}
\usepackage{amsmath}
\usepackage{amssymb}
\usepackage{mathtools}
\usepackage{amsthm}

\usepackage[capitalize,noabbrev]{cleveref}

\theoremstyle{plain}
\usepackage[textsize=tiny]{todonotes}

\newcommand{\name}{\textsc{OPRIDE}}

\usepackage{pgfplots}
\usepackage{natbib}
\usepackage{mathtools}
\usepackage{color,xcolor}

\usepackage{tikz}
\usepackage{subcaption}
\definecolor{myred}{RGB}{216, 27, 96}
\definecolor{myblue}{RGB}{30, 136, 229}
\definecolor{myorange}{RGB}{255, 193, 7}
\definecolor{mygreen}{RGB}{0,177,64}

\usetikzlibrary{decorations.pathreplacing, automata, positioning, arrows}
\pgfplotsset{compat=1.17}

\def\##1\#{\begin{align}#1\end{align}}
\def\$#1\${\begin{align*}#1\end{align*}}

\title{OPRIDE:
 Offline Preference-based Reinforcement Learning via In-Dataset Exploration}

\iclrfinalcopy

\author{
	Yiqin Yang$^1$\footnotemark[1], Hao Hu$^3$\footnotemark[1], Yihuan Mao$^2$, Jin Zhang$^2$, Chengjie Wu$^2$, Yuhua Jiang$^2$, \\ \textbf{Xu Yang$^2$, Runpeng Xie$^1$, Yi Fan$^4$\footnotemark[2], Bo Liu$^5$, Yang Gao$^2$, Bo Xu$^1$, Chongjie Zhang$^6$}\\
    $^1$The Key Laboratory of Cognition and Decision Intelligence for Complex Systems, \\ 
    \ \ Institute of Automation, Chinese Academy of Sciences \\
    $^2$Tsinghua University \quad $^3$Moonshot AI \quad $^4$Amazon \quad $^5$University of Arizona\\ 
    $^6$Washington University in St. Louis \\
    \texttt{yiqin.yang@ia.ac.cn}
}

\begin{document}

\maketitle

\renewcommand{\thefootnote}{\fnsymbol{footnote}}
\footnotetext[1]{These authors contributed equally.}
\footnotetext[2]{The work does not relate to their position at Amazon.}

\begin{abstract}
  Preference-based reinforcement learning (PbRL) can help avoid sophisticated reward designs and align better with human intentions, showing great promise in various real-world applications. However, obtaining human feedback for preferences can be expensive and time-consuming, which forms a strong barrier for PbRL. 
  In this work, we address the problem of low query efficiency in offline PbRL, pinpointing two primary reasons: inefficient exploration and overoptimization of learned reward functions.
  In response to these challenges, we propose a novel algorithm, \textbf{O}ffline \textbf{P}b\textbf{R}L via \textbf{I}n-\textbf{D}ataset \textbf{E}xploration (OPRIDE), designed to enhance the query efficiency of offline PbRL. OPRIDE consists of two key features: a principled exploration strategy that maximizes the informativeness of the queries and a discount scheduling mechanism aimed at mitigating overoptimization of the learned reward functions. 
  Through empirical evaluations, we demonstrate that OPRIDE significantly outperforms prior methods, achieving strong performance with notably fewer queries. Moreover, we provide theoretical guarantees of the algorithm's efficiency. Experimental results across various locomotion, manipulation, and navigation tasks underscore the efficacy and versatility of our approach.

  % We identify two main reasons that lead to low query efficiency in offline PbRL: (1) inefficient use of the query budget due to lack of exploration, and (2) overoptimization over the learned reward function. 
  % We show that such a strategy leads to strong performance with significantly fewer queries than prior methods. We further show theoretically that such an strategy is provably efficient. Experiments on various tasks demonstrate that our approach achieves strong performance on various locommotion, manipulation and navigation tasks.
\end{abstract}

\section{Introduction}
Reinforcement learning (RL) has proven effective across a range of sequential decision-making tasks, from mastering games like Go \citep{silver2016mastering} to controlling complex systems such as robots \citep{ahn2022can} and plasma reactors \citep{degrave2022magnetic}. However, in many real-world applications, designing an appropriate reward function is a daunting challenge, as these tasks often involve objectives that are difficult to formalize with numerical rewards~\citep{yang2021believe,yang2023flow}.

Preference-based RL (PbRL) \citep{akrour2012april,christiano2017deep} has emerged as a promising paradigm, leveraging human feedback in the form of pairwise preferences, which are inherently more interpretable yet still information-rich. This paradigm allows agents to learn from relative judgments rather than numerical reward signals, significantly reducing the complexity of reward design.
Recent advancements in PbRL have illustrated its efficacy in enabling agents to learn novel behaviors \citep{christiano2017deep,kim2023preference} and in achieving better alignment with human preferences \citep{ouyang2022training,yang2025fewer}, which are often difficult to encapsulate in a reward function. Despite these advantages, PbRL methods still face critical challenges, particularly in acquiring human feedback efficiently. Querying human preferences is both time-consuming and resource-intensive, limiting the scalability of PbRL in real-world applications.

To address this challenge, we propose \textbf{Offline PbRL via In-Dataset Exploration (OPRIDE)}, a novel algorithm designed to systematically enhance the query efficiency of offline PbRL, as depicted in Figure~\ref{fig:pairwise_exploration}.
OPRIDE introduces a principled exploration strategy that identifies the most informative queries by analyzing value differences between trajectories, ensuring that each query maximally contributes to learning the optimal policy. Additionally, to prevent overoptimization of the learned reward function~\citep{gao2023scaling,zhu2024iterative}, particularly in regions with high uncertainty, we incorporate a discount factor scheduling mechanism that dynamically adjusts the discount based on the variance in the reward estimation.
Based on the pessimistic property of the smaller discount factor, we can address the overestimation issue of the value function and, subsequently, a better policy performance and higher query efficiency.

Experimental evaluations on diverse locomotion and manipulation tasks, including AntMaze \citep{fu2020d4rl} and Meta-World \citep{yu2019meta}, demonstrate the efficacy of our approach in achieving strong performance with significantly fewer queries compared to state-of-the-art baselines. Remarkably, our method achieves compelling results with as few as ten queries on Meta-World tasks, underscoring its efficiency and scalability. Furthermore, we provide theoretical insights into the efficiency of our algorithm, demonstrating that our exploration strategy is provably efficient under mild assumptions. 

Our contributions are threefold: 
(1) We introduce OPRIDE, a novel offline PbRL algorithm that achieves superior query efficiency through in-dataset exploration; 
(2) We conduct extensive ablation studies that highlight the effectiveness of each component, providing insights into the factors driving query efficiency; and 
(3) We provide theoretical analyses establishing the provable efficiency of our algorithm involving a principled exploration strategy under mild assumptions.

\begin{figure*}[t]
\centering
\includegraphics[width=1.0\textwidth]{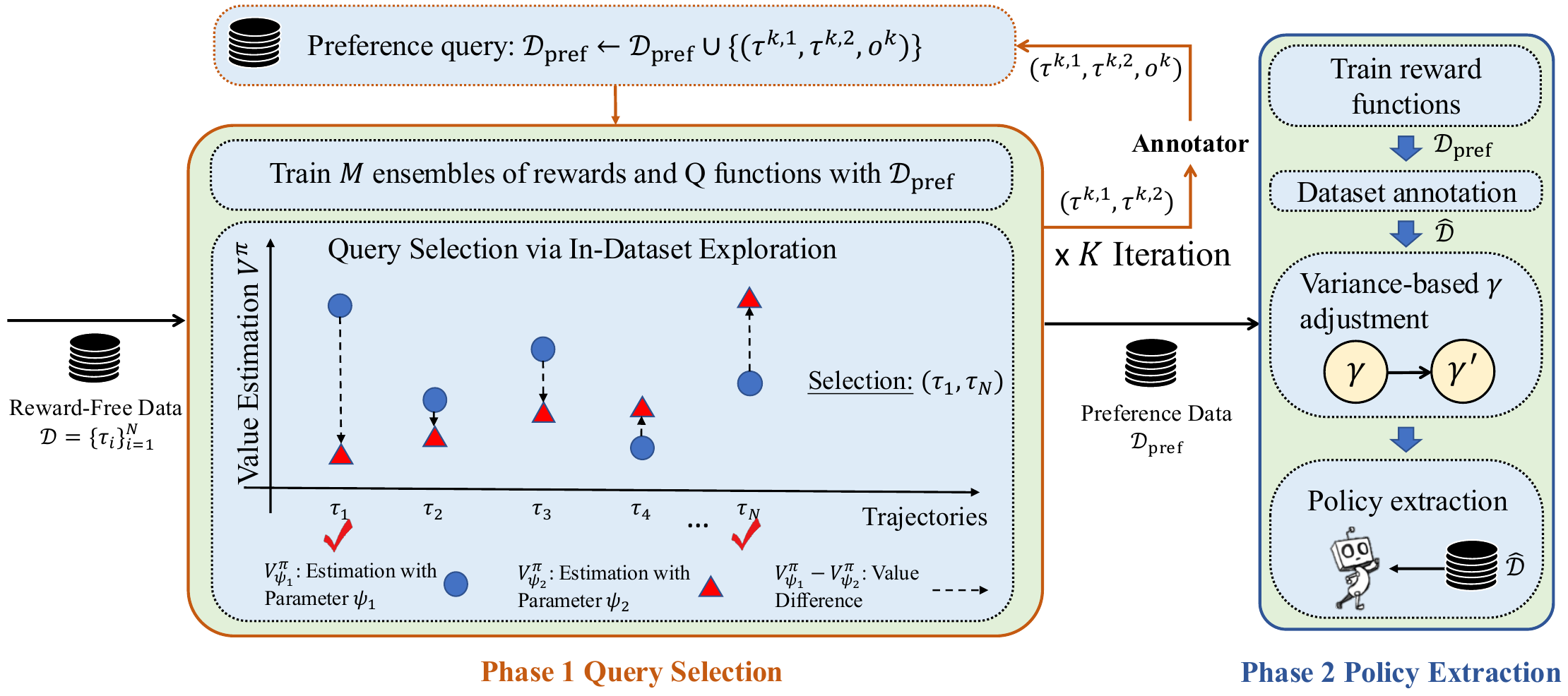}
\caption{    
The procedure of OPRIDE consists of two phases.
In the first offline phase, we select query based on exploration mechanism.
The blue circles \textcolor{blue}{$\bullet$} and red triangles \textcolor{red}{$\blacktriangle$} represent the value estimation
$V_{\psi_1}$ and $V_{\psi_2}$, respectively. 
In the second stage, we first learn an reward function based on the preference dataset and then annotate the reward-free dataset.
Next, we adjust the discount factor to reduce the impact of noise in the reward learning.
} 
\label{fig:pairwise_exploration}
\end{figure*}

\subsection{Related Work} 

\textbf{Preference-based RL.}
Various methods have been proposed to leverage human preferences~\citep{akrour2012april,ibarz2018reward} and have demonstrated success in tackling complex control tasks~\citep{christiano2017deep,pmlr-v139-lee21i} and in aligning large language models~\citep{NEURIPS2020_1f89885d,ouyang2022training,NEURIPS2023_a85b405e,rafailov2024r}. 
In the realm of offline Preference-based Reinforcement Learning (PbRL), a benchmark including several baselines~(e.g., disagreement based method) is introduced by OPRL~\citep{shin2023benchmarks}, which selects queries based on disagreement between the reward models and is inefficient in determining the optimal policy.
\citet{kim2023preference} apply Transformer models to effectively capture preferences for better credit assignment. \citet{kang2023beyond} present a direct approach to learning policy based on preferences. A recent work by \citet{lindner2021information} proposes an information-directed query selection method for PbRL, using the Laplacian approximation and the Hessian matrix for posterior computation. 
In contrast, our method selects queries to maximize the information gain about the optimal policy rather than the reward function, ensuring higher query efficiency.

Our work is also related to recent two-stage offline PbRL methods that enhance reward learning or query design. CLARIFY~\citep{muclarify} employs contrastive learning to disambiguate preferences in noisy queries by refining trajectory representations, differing from our direct exploration of value differences within the dataset. LiRE~\citep{choi2024listwise} introduces listwise ranking to replace pairwise queries, improving feedback efficiency through sequential comparisons. Differently, we focus on maximizing policy-relevant information per query via value ensemble disagreement. Additionally, we note emerging single-stage paradigms that bypass explicit reward modeling: IPL~\citep{hejna2023inverse} and CPL~\citep{hejnacontrastive} derive policies directly from preferences via inverse learning or contrastive objectives, while DPPO~\citep{an2023direct} optimizes policies using preference logits without reward intermediates. These methods represent an alternative direction, whereas OPRIDE retains the two-stage structure to leverage established offline RL algorithms.

In addition to empirical achievements, prior studies have also explored the theoretical aspects of PbRL. \citet{pacchiano2021dueling} propose a provable PbRL algorithm tailored for linear MDPs. \citet{chen2022human} extend this approach to scenarios where the Eluder dimension is finite. \citet{zhan2023provable} delve into the study of PbRL within an offline setting where a preference dataset is provided.
\citet{wang2023rlhf} propose an efficient randomized algorithm for PbRL in linear MDP and an efficient TS-based algorithm for nonlinear cases with finite Eluder dimensions. 
\citet{sekhari2023contextual} provides a PbRL algorithm with PAC guarantees. 
\citet{novoseller2020dueling} proposes the dueling posterior sampling algorithm that has an information-theoretic guarantee. 
\citet{xu2020preference} provide a gap-dependent analysis for preference-based contextual bandit and imitation learning. 
\citet{wu2023making} analyze the complexity of learning with utility-based preferences and general preferences. 

% \textbf{Semi-supervised offline RL.}
% In reward-free offline RL setting, \citet{yu2022leverage} and \citet{hu2023provable} utilize reward-free data to aid offline learning, assuming the availability of a labeled offline dataset for reward function learning.
% \citet{ajay2020opal} and \citet{yang2023flow} utilize reward-free datasets by extracting valuable behaviors.
% \citet{yebecome} and \citet{ghosh2023reinforcement} use reward-free offline data for pre-training, followed by online RL, where rewards are attainable.
% \citet{ghosh2023reinforcement} focuses on using reward-free offline data for representation learning, while \citet{yebecome} explores the use of reward-free offline data for learning a latent dynamics model.
% In contrast, the offline PbRL setting will provide human feedback in the form of pairwise preference, which allows agents to learn from relative judgments rather than numerical reward signal.

% \textbf{RLHF.}
% Reinforcement learning from human feedback (RLHF) has made significant strides following the outstanding success of ChatGPT~\citep{openai2024gpt4}.
% This approach has emerged as a key method for aligning AI behavior more closely with human preferences, where PbRL~\citep{christiano2017deep,pmlr-v139-lee21i} plays a central role.
% This approach has been further refined and standardized in influential frameworks such as InstructGPT~\citep{ouyang2022training}, Claude~\citep{Claude}, and LLaMA2~\citep{touvron2023llama}, etc.
\section{Preliminaries}
We consider infinite-horizon Markov Decision Processes (MDPs), defined by the tuple $(\mathcal{S},\mathcal{A}, \gamma,\mathcal{P},r),$ with state space $\mathcal{S}$, action space $\mathcal{A}$, horizon $H$, transition function $\mathcal{P}: \mathcal{S}\times\mathcal{A}\rightarrow \Delta(\mathcal{S})$ and reward function $r:\mathcal{S}\times\mathcal{A}\rightarrow [0, 1]$. Without loss of generality, we assume a fixed start state $s_0$.

A policy $\pi: \cS\rightarrow \Delta{(\cA)}$ specifies a decision-making strategy in which the agent chooses actions adaptively based on the current state, that is, $a \sim \pi(\cdot \given s)$. The value function $V^\pi: \cS \rightarrow \RR$ and the action-value function ($Q$-function) $Q^\pi :\cS\times \cA\to \RR$ are defined as
\begin{align}
V^\pi(s) = \EE_{\pi}\Big[ \sum_{t=1}^{\infty} r(s_t, a_t)\Biggiven s_0=s  \Big],\quad Q^\pi(s,a) = \EE_\pi\Big[\sum_{t=1}^{\infty} r(s_t, a_t)\Biggiven s_0=s, a_0=a  \Big],
\label{eq:def_value_fct}
\end{align}
where the expectation is w.r.t. the trajectory $\tau$ induced by $\pi$.
We define the Bellman evaluation operator as 
\begin{align}
    (\mathbb{T}^\pi f)(s,a) = \EE_{s'\sim \cP(\cdot|s, a),a'\sim \pi(\cdot|s')}\bigl[r(s, a) + \gamma f(s',a')\bigr].
    \label{eq:def_bellman_eval_op}
\end{align}

We use $\pi^\star$, $Q^\star$, and $V^\star$ to denote an optimal policy, the corresponding optimal Q-function and optimal value function, respectively. We have the Bellman optimality equation
\begin{equation}
 V^\star(s) = \max_{a\in \cA}Q^\star(s,a),\quad Q^\star(s,a) = \EE_{s'\sim \cP(\cdot|s, a)}\bigl[r(s, a) + \gamma V^\star(s')\bigr].
\label{eq:dp_optimal_values}
\end{equation}

Meanwhile, the optimal policy $\pi^\star$ satisfies $\pi^\star (\cdot \given s)=\argmax_{\pi}\langle Q^\star(s, \cdot),\pi(\cdot\given s)\rangle_{\cA}$.
We aim to learn a policy $\pi$ from the candidate policy class $\Pi$ that maximizes the expected cumulative reward. Correspondingly, we define the performance metric as the sub-optimality compared with the optimal policy, i.e.,
\begin{equation}
\text{SubOpt}(\pi) = V^{\pi^\star}(s_0) - V^\pi(s_0).
\label{eq:def_regret}
\end{equation}

\subsection{Bellman Consistent Pessimism}
A unique challenge in offline RL is that the learned policy may induce a state-action density that is different from the data distribution $\mu$, which may lead to large extrapolation errors when we do not impose any coverage assumption on $\mu$. Therefore, it is important to carefully characterize the distribution shift, which we measure using the coverage coefficient. Specifically, we adopt the one used in \cite{xie2021bellman} that considers the distribution shift of Bellman errors:
\begin{definition}[Bellman shift coefficient \citep{xie2021bellman}]
    \label{def:coverage}
We define $\mathcal{C}(\nu; \mu, \mathcal{Q}, \pi)$ as follows to measure the distribution shift from an arbitrary distribution $\nu$ to the data distribution $\mu$, w.r.t. $\mathcal{Q}$ and $\pi$,
\begin{equation*}
\mathcal{C}(\nu; \mu, \mathcal{Q}, \pi) \coloneqq \max_{q \in \mathcal{Q}} \frac{\|q - \mathbb{T}^\pi q\|^2_{2,\nu}}{\|q - \mathbb{T}^\pi q\|^2_{2,\mu}}.
\end{equation*}
\end{definition}

Here $\cQ$ is the Q-function approximation class we consider. Intuitively, $\mathcal{C}(\nu; \mu, \mathcal{Q}, \pi)$ measures how well Bellman errors under $\pi$ transfer between the distributions $\nu$ and $\mu$. For instance, a small value of $\mathcal{C}(d^\pi; \mu, \mathcal{Q}, \pi)$ enables accurate policy evaluation for $\pi$ using data collected under $\mu$. Definition~\ref{def:coverage} is a generalization compared to prior works that is defined specific to linear function approximation \citep{agarwal2021theory,jin2021pessimism}. More generally, we have 
$
\mathcal{C}(\nu; \mu, \mathcal{Q}, \pi) \le \|\nu / \mu\|_{\infty} \coloneqq \sup_{s,a} \frac{\nu(s,a)}{\mu(s,a)}$ holds
for any $\pi$ and $\mathcal{Q}$.

\subsection{Preference-Based Reinforcement Learning}
To learn reward functions from preference labels, we consider the Bradley-Terry pairwise preference model ~\citep{bradley1952rank} as used by most prior works~\citep{christiano2017deep, ibarz2018reward,palan2019learning}. Specifically, the preference label between two given trajectories $\tau_i$ and $\tau_j$ is defined as
\begin{equation}
    \mathbb{P}\left(\tau_i \succ \tau_j \Biggiven R \right)  = \frac{1}{\exp{(R(\tau_j)-R(\tau_i))}+1},
\end{equation}
where $\tau = (s_t,a_t)_{t=0}^T$ is a trajectory and $R(\tau) = \sum_{t=0}^T \gamma^t r(s_t,a_t)$ is the return function. 
To simplify the theoretical analysis, we consider learning a \textit{return} model instead of a \textit{reward} model. 
The return model $\widehat{R}$ is trained to minimize the cross-entropy loss between the predicted preference and the ground truth with a given preference dataset $\cD_{\text{pref}}$ as follows:

\begin{align}
\mathcal{L}_{\texttt{CE}}(R)=-\mathop{\mathbb{E}}\limits_{(\tau^1,\tau^2,o)\sim \cD_{\text{pref}}}\left[o\log \mathbb{P}\left(\tau_1 \succ \tau_2 \Biggiven R \right)+ (1-o)\log(1- \mathbb{P}\left(\tau_1 \succ \tau_2 \Biggiven R \right))\right],
\label{crossen}
\end{align}
where $o$ is the ground truth label given by human labelers. 
We assume that the difference of return functions $\Delta \mathcal{R} \coloneqq \{\Delta R(\tau_1,\tau_2):\text{Traj} \times  \text{Traj} \rightarrow \mathbb{R}| \exists R\in\mathcal{R}, \Delta R(\tau_1,\tau_2) = R(\tau_1)-R(\tau_2)\}$ has a finite Eluder dimension, which is a common general function approximation class in RL literature~\citep{russo2013eluder,chen2022human}.

\begin{definition}[Eluder Dimension~\citep{russo2013eluder}]
        Suppose $\mathcal{F}$ is a function class defined in $\mathcal{X}$, the $\alpha$-Eluder dimension $d_{\text{Elu}}(\cF,\alpha)$ is the longest sequence $\{x_1, x_2, \cdots, x_n\} \in \mathcal{X}$ such that there exists $\alpha' \ge \alpha$ where $x_i$ is $\alpha'$-independent of $\{x_1, \cdots, x_{i-1}\}$ for all $i \in [n]$.
\end{definition}
The above defined Eluder dimension is used to establish a suboptimality upper bound for our proposed algorithm in Section~\ref{sec:theory_analysis}. 
The following generalized linear preference model considered by many prior works~\citep{pacchiano2021dueling,zhan2023query} is a special case of finite Eluder dimension~\citep{chen2022human}.
We include it to show that our analysis in Section~\ref{sec:theory_analysis} is general.
\begin{definition}[Generalized Linear Preference Model]
\label{remark: linear preference}
 In $d$-dimensional generalized linear models, the preference function can be represented as $\mathbb{P}(\tau_1\succ\tau_2|\theta) = \sigma(\langle\phi(\tau_1,\tau_2), {{R}}\rangle)$ where $\sigma$ is an increasing Lipschitz continuous function, $\phi: \operatorname{Traj} \times \operatorname{Traj} \rightarrow \mathbb{R}^d$ is a known feature map satisfying $\|\phi(\tau_1,\tau_2)\|_2 \leq H$ and ${{\theta}} \in \mathbb{R}^d$ is the unknown parameter.
\end{definition}

% \section{Theory}
% \input{theory}

\section{Method}
\label{sec: algorithm}
In this section, we present our proposed algorithm, Offline Preference-based Reinforcement Learning with In-Dataset Exploration (OPRIDE), illustrated in Figure~\ref{fig:pairwise_exploration}.
The key idea of OPRIDE is to enhance the query efficiency of offline PbRL by conducting optimistic exploration with in-dataset queries and then utilizing the learned reward function pessimistically with discount factor scheduling. 
% Exploration is essential for gathering enough information about the optimal policy, while discount factor scheduling is crucial for mitigating the overoptimization of the learned reward function.
The overall algorithm is shown in Algorithm~\ref{alg: PbRL2}. 
% In the sequel, we describe our method for query selection and utilization in detail.

\subsection{Offline Query Selection with In-Dataset Exploration}
\label{sec: query selection}
Generating informative queries is crucial for calibrating the reward function. Various methods have been proposed to generate queries for offline preference-based RL, like disagreement-based approaches~\citep{christiano2017deep} and information-gain-based approaches~\citep{, wilson2012bayesian,shin2023benchmarks}, but
it may lead the algorithm to focus on refining reward estimates in regions of the state space that are irrelevant to the optimal policy.
This naturally leads to the idea of employing an exploration objective~\citep{akrour2011preference} into offline query selection, where we maximize the information gain about the \textit{optimal policy} rather than the \textit{reward function}.

Inspired by principled exploration strategies for PbRL, analyzed in Section~\ref{sec:theory_analysis}, we propose to use the difference of value differences as the exploration criteria. Specifically, we first train a set of reward functions $\{r_{\theta_i}\}_{i=1}^M$ using bootstapping, then train a set of value functions $\{V_{\psi_i}\}_{i=1}^M$ using offline algorithms like IQL~\citep{kostrikov2021offline,ma2021offline} with the reward functions. Finally, we select two trajectories $\tau_1$ and $\tau_2$ that maximize the difference of value differences between the two trajectories:

\begin{equation} 
\begin{aligned} \label{eqn:compute traj} 
\argmax_{(\tau_1, \tau_2) \in \mathcal{D}} \argmax_{i,j \in [M]} \left|\left(V_{\psi_i}(\tau_1) - V_{\psi_j}(\tau_1)\right) - \left(V_{\psi_i}(\tau_2) - V_{\psi_j}(\tau_2)\right)\right|,
\end{aligned}
\end{equation}
The reward function $r_{\theta_i}$ and the value functions $Q_{\phi_i}, V_{\psi_i}$ are iteratively updated after each preference query.
Intuitively, Equation~\ref{eqn:compute traj} aims to select queries that most effectively minimize the diameter of the uncertainty set for the value function. The diameter represents the maximum possible disagreement between any two candidate value functions on any two policies. Reducing this diameter is proportional to the information gain from a query. Therefore, minimizing this diameter is equivalent to maximizing the information gain for each query.
This objective is directly linked to the information gain via information ratio $\Gamma$~\citep{lu2019information, russo2016information}. The diameter of the uncertainty set is upper-bounded by $P\left(\text{diam}(\mathcal{R}) \leq \Gamma_{\delta} \sqrt{I(\mathcal{R};\mathcal{D})}\right) \geq 1-\delta$, where $\text{diam}(\mathcal{R}) = \max_{R_1,R_2 \in \mathcal{R}} \max_{\pi_1,\pi_2 \in \Pi} |(R_1(\pi_1)-R_1(\pi_2)) - (R_2(\pi_1)-R_2(\pi_2))|$, and $I(\mathcal{R};\mathcal{D})$ is the mutual information between the reward function class $\mathcal{R}$ and the query dataset $\mathcal{D}$. Maximizing the information gain $I$ directly corresponds to reducing the diameter.
Mathematically, the sample complexity of reward function estimation is proportional to the Eluder dimension of the reward function class, $d_{\text{Elu}}(\mathcal{R})$, while Equation~\ref{eqn:compute traj}'s complexity relates to the Eluder dimension of the optimal value function class. It is often the case that $d_{\text{Elu}}(\mathcal{V}^*) \ll d_{\text{Elu}}(\mathcal{R})$.
Please refer to Section~\ref{sec:theory_analysis} for the complete theoretical analysis.

\subsection{Policy Extraction with Variance-based Discount Scheduling}

After obtaining the preference feedback, we can train the reward function using the cross-entropy loss in Equation~\ref{crossen} and annotate the reward-free dataset $\mathcal{D}=\{\{(s^n_t,a^n_t)\}_{t=0}^{T}\}_{n=1}^N$ to obtain a labeled dataset $\hat{\mathcal{D}}=\{\{(s^n_t,a^n_t,\widehat{r}^n_t)\}_{t=0}^{T}\}_{n=1}^N$ where $\widehat{r}= 1/M \sum_{i=1}^M r_{\theta_i}$.
However, it is well-known that a learned reward function is prone to overoptimization~\citep{gao2023scaling, zhu2024iterative}, leading to overestimation of the value function and, subsequently, a suboptimal policy.

Learning from preference feedback is more vulnerable to this issue, as the feedback is binary and sparse. 
% Empirically, we find that using a pessimistic ensemble of the reward function is insufficient to fully mitigate the overestimation issue in offline PbRL, as shown in Table~\ref{tab: ablation}. 
To solve this issue, we propose to adjust the discount factor based on the variance of the value function estimates that serve as a stronger regulator. Using a smaller discount factor is known to provide pessimistic and robust guarantees and performs well in various settings like imitation learning~\citep{liu2024imitation}.
Specifically, we reduce the discount factor where there is a higher variance in value estimation, thereby alleviating the impact of reward function overestimation.
\begin{align}
    \label{eqn:iql_q}
    \hat{\gamma}(s,a) &= \begin{cases}
    \gamma_{\text{\rm small}}, & \text{if}\quad \text{\rm Var}\{Q_{\phi_i}(s,a)\}_{i=1}^M > \text{\rm Top}~m\%(\{\text{\rm Var}_j\{Q_{\phi_i}(s_j,a_j)\}_{i=1}^M\}_{j=1}^{|\text{\rm Batch}|})\\
    \gamma, & \text{\rm else}
    \end{cases}
\end{align}
where $\hat{\gamma}$ is the adjusted discount factor.
Please note that if the variance of the value estimation for a data point is greater than the top $m\%$ in the batch, we consider that the reward function for this data point has overestimation noise and reduces the corresponding discount factor.
Subsequently, we can learn a corresponding Q-value function and extract the policy from the labeled datasets $\hat{\mathcal{D}}$ by adopting the standard offline reinforcement learning algorithms, like IQL~\citep{kostrikov2021offline}.
We also consider a softer confidence discount mechanism, which is detailed in the Appendix~\ref{sec: add exp}.

% \begin{equation}
% \begin{aligned}
% \label{eqn:iql_pi}
% L_V(\psi) &= \EE_{(s,a)\sim    \hat{D}} \left[L^\tau_2(Q_\phi(s,a)-V_\psi(s))\right]\\
% &=\EE_{(s,a)\sim    \hat{D}} \left[|\tau - \mathbb{I}(Q_\phi(s,a)-V_\psi(s) < 0)|\right](Q_\phi(s,a)-V_\psi(s))^2\\
% L_Q(\phi) &= \EE_{(s,a,\hat{r})\sim    \hat{D}} \left[\left(\hat{r}(s,a) + \hat{\gamma}(s,a) V_\psi(s') - Q_{\phi}(s,a)\right)^2\right]\\
% L_\pi(\xi) &= \EE_{(s,a)\sim \hat{\cD}} \left[\exp(\alpha A_{\psi,\phi}(s,a))\log(\pi_\xi(a|s))\right]
% \end{aligned},
% \end{equation}
% where $\pi_\xi$ is extracted in a advantage-weighted manner and $A(s,a) = Q_\theta(s,a)-V_\psi(s)$ is the advantage function.
% $L^\tau_2(u)=|\tau-\mathbb{I}(u<0)|u^2$~is the expectile regression loss, which is used to balance the conservatism and generalization in offline RL.
\section{Theoretical Analysis}
\label{sec:theory_analysis}
In this section, we investigate the theoretical guarantees for generating queries with an explorative objective.
% To simplify theoretical analysis, we consider the setting where we can make \textit{online} queries along with access to a preference-free \textit{offline} dataset.
% This is a good approximation when the available unsupervised trajectories for preference queries are abundant.
% For a principled exploration strategy under such a setting, we can combine the wisdom from online PbRL and pessimistic value estimation for offline value estimation. 
Specifically, we consider the strategy consisting of the following procedures: (1) construct a confidence set for the return function based on existing queries; (2) construct a candidate policy set using pessimistic value estimation as the criteria; and (3) select a pair of policies that maximize disagreement on values for new queries.
A detailed strategy description is available in Algorithm~\ref{alg: PbRL}.

\textbf{Construct Confidence Set.} \ 
For the return function, we can use the cross entropy loss as in~\Eqref{crossen} to get the maximum likelihood estimator (MLE) for the return function $\widehat{R}_k$:

\begin{equation}
    \widehat{{{{R}}}}_k = \argmin_{{{{R}}}\in\cR} L_k({{{R}}}),
\end{equation}
where $L_k({{{R}}}) = \sum_{i=1}^{k} (o_i \log \mathbb{P}(\tau_i^1\succ\tau_i^2;{{{R}}}) + (1-o_i) \log (1-\mathbb{P}(\tau_i^1\succ\tau_i^2;{{{R}}})))$ is the MLE loss.
Then we can constuct the confidence set for the reward function as follows:

\begin{equation}
    \label{eqn:c_k}
    \mathcal{C}_k(\cR) = \left\{R\in\mathcal{R}\Biggiven\sum_{i=1}^k{\left((R(\tau_i^1) -R(\tau_i^2)) - (\widehat{R}_k(\tau_i^1) -\widehat{R}_k(\tau_i^2))\right)^2} \leq \beta_k\right\}
\end{equation}
where $\beta_k$ is the confidence parameter to be specified later.
Given the confidence set for the return function ${{{R}}}$, we can subsequently construct a confidence set for policies using a pessimistic value function. Specifically, we consider the pessimistic value function $\widehat{q}_{{{{R}}}}$ that leads to the worst-case value for the optimal policy over the Bellman uncertainty of the value function. Please refer to Algorithm~\ref{alg:BCP} in Appendix~\ref{appendix:algo_describ} for more details. The candidate policy set $\Pi_k$ is constructed as follows:

\begin{align}\label{eqn:poilicy_set}
\Pi_k = \Big\{\widehat{\pi} \mid & \exists~{{{R}}}\in \cC_k(\cR), \widehat{\pi} = \argmax_{\pi\in\Pi}  \widehat{q}_{R}(s_1,\pi). \Big\}.
\end{align}
Intuitively speaking, $\Pi_k$ consists of policies that are possibly optimal within the current level of uncertainty over reward and dynamics. 
By constraining exploration policies in $\Pi_k$, we achieve proper exploitation by avoiding unnecessary explorations.

\textbf{Selecting Exploratory Policies.} \
For a given pair of policies $(\pi_1,\pi_2)$ in $\Pi_k$, we determine their exploration power by measuring how much disagreement can be made for different reward functions in the confidence set. Specifically, we select explorative policies via the following criteria:  

\begin{equation} 
\begin{aligned} \label{eqn:compute policies} 
    \pi_1^k, \pi_2^k = \argmax_{\pi_1,\pi_2 \in \Pi_k} \max_{{{{R}}}_1,{{{R}}}_2 \in \cC_k(\cR)} \left(\left(\widehat{v}^{\pi_1}_{{{{R}}}_1} - \widehat{v}^{\pi_1}_{{{{R}}}_2}\right)  - \left(\widehat{v}^{\pi_2}_{{{{R}}}_1} - \widehat{v}^{\pi_2}_{{{{R}}}_2}\right)\right).\\
\end{aligned}
\end{equation}
    
Intuitively, we choose two policies $\pi_1,\pi_2$ such that there is a ${{{R}}}_1\in\cC_k(\cR)$ that strongly prefers $\pi_1$ over $\pi_2$, and there is a ${{{R}}}_2\in\cC_k(\cR)$ that strongly prefer $\pi_2$ over $\pi_1$.
We sample two trajectories $\tau^{k,1}\sim \pi^{k,1}, \tau^{k,2}\sim \pi^{k,2}$, query the preference between them, and add it to the preference dataset.
Choosing the pair of trajectories that maximize disagreement helps us explore efficiently.
Then, we have the following theoretical guarantee:

\begin{algorithm}[t]
    \caption{Offline Preference-Based Reinforcement Learning with In-Dataset Exploration}
    \label{alg: PbRL2}
      \begin{algorithmic}[1]
        \STATE {\bf Input}: Unlabeled offline dataset $\cD=\{\tau_n = \{(s^n_t,a^n_t)\}_{t=0}^{T}\}_{n=1}^N$, query budget $K$, ensemble number $M$
        \STATE Initialized the preference dataset $\cD_{\text{pref}} \leftarrow \varnothing$.
        \FOR{episode $k = 1,\cdots, K$}
        \STATE Train $M$ ensembles of reward network $r_{\theta_i}$ with $\cD_{\text{pref}} $ using $\mathcal{L}_{\text{CE}}$ in~\Eqref{crossen}.
        \STATE Train $M$ corresponding value functions $V_{\psi_i}, Q_{\phi_i}$ with each reward function $r_{\theta_i}$.
        \STATE Select trajectories $\tau^{k,1},\tau^{k,2}$ that maximize the exploration objective according to \Eqref{eqn:compute traj}.
        \STATE Receive the preference $o_k$ between $\tau^{k,1}$ and $\tau^{k,2}$ and add it to the preference dataset, i.e.,
        $$\cD_{\text{pref}} \leftarrow \cD_{\text{pref}} \cup \{(\tau^{k,1}, \tau^{k,2},o_k)\}.$$
        \ENDFOR

        \STATE Annotate the unlabeled offline dataset $\cD$ with the reward function $\widehat{\theta}$ and obtain $\hat{\cD}$.
        \STATE Adjust the discount facto $\gamma$ to $\widehat{\gamma}$ based on Equation~\ref{eqn:iql_q}. 
        \STATE Extract policy $\pi_{\xi}$ via offline RL from $\hat{\cD}$.
        \STATE {\bf Output}: The learned policy $\pi_{\xi}$
      \end{algorithmic}
        \label{algo:general}
    \end{algorithm}

\begin{theorem}
\label{theorem:1}
Let $\beta_k = c_1\sqrt{\log (K |\Delta \cR|)/K}$ and $\epsilon = c_2\sqrt{\log(N|\Pi||\cQ|)/N}$, where $c_1,c_2$ are universal constants.
Then the expected suboptimality of $\bar{\pi}$ from Algorithm~\ref{alg: PbRL} is upper bounded by 
\begin{equation}
\label{eqn:subopt}
    \emph{\text{SubOpt}}(\bar{\pi}) \leq {\mathcal{O}}\left(\underbrace{\sqrt{\frac{C^\dagger\log (N|\cQ||\Pi|) }{N(1-\gamma)^2}}}_{\text{Offline Error}} + \underbrace{\sqrt{\frac{\kappa d_{\text{Elu}}(\Delta\cR,1/K)\log \left( K |\Delta\cR| \right)}{K (1-\gamma)}}}_{\text{Preference Error}}\right),
\end{equation}

where $\kappa$ is the degree of non-linearity of the link function $\sigma$, $C^\dagger$ is the coverage coefficient in Definition~\ref{def:coverage}, $N$ is the size of the offline dataset and $K$ is the number of queries.

\end{theorem}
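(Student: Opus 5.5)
The plan is to split the suboptimality into an offline (dynamics/Bellman) part and a preference (reward-identification) part, and to control each with a standard tool. Let $R^\star$ be the true return function. Let $\bar\pi$ be the output of Algorithm~\ref{alg: PbRL}, taken as a uniform mixture (or a pessimistic selection) over the exploratory policies $\pi^{k,1},\pi^{k,2}$, $k\in[K]$. Write $v^{\pi}_{R}$ for the true value and $\widehat v^{\pi}_R$ for the pessimistic estimate from Algorithm~\ref{alg:BCP}. The decomposition is
\begin{align*}
\text{SubOpt}(\bar\pi) = \underbrace{\bigl(v^{\pi^\star}_{R^\star}-\widehat v^{\pi^\star}_{R^\star}\bigr)}_{\text{(i)}} + \underbrace{\bigl(\widehat v^{\pi^\star}_{R^\star}-\widehat v^{\bar\pi}_{R^\star}\bigr)}_{\text{(ii)}} + \underbrace{\bigl(\widehat v^{\bar\pi}_{R^\star}-v^{\bar\pi}_{R^\star}\bigr)}_{\text{(iii)}}.
\end{align*}

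Terms (i) and (iii) are handled by Bellman-consistent pessimism \citep{xie2021bellman}. With $\epsilon=c_2\sqrt{\log(N|\Pi||\cQ|)/N}$, a union bound over $\cQ\times\Pi$ and a concentration argument for the squared Bellman-error loss show that, with high probability, the version space used in Algorithm~\ref{alg:BCP} contains $Q^{\pi}_{R}$ for every $\pi\in\Pi$. Pessimism then makes (iii) nonpositive. For (i), the standard telescoping (performance-difference) argument gives $\frac{1}{1-\gamma}\|q-\mathbb{T}^{\pi^\star}q\|_{2,d^{\pi^\star}}$. Converting the norm to $\mu$ via Definition~\ref{def:coverage} costs a factor $\sqrt{C^\dagger}$, and the $\mu$-norm is at most $\sqrt{\epsilon}$-type. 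This yields the Offline Error term $\sqrt{C^\dagger\log(N|\cQ||\Pi|)/(N(1-\gamma)^2)}$. The argument needs uniformity over $R\in\Delta\cR$ only through the policy class, since rewards enter linearly.

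Term (ii) is the preference error, handled by optimism plus an Eluder-dimension argument. First, the MLE confidence set is valid: by the Bradley--Terry likelihood analysis of \citet{chen2022human}, with $\beta_k$ as stated, $R^\star\in\cC_k(\cR)$ for all $k$ with high probability. The factor $\kappa$ enters when converting the log-likelihood gap into squared differences of $\Delta R$, via strong convexity of the link, which is bounded by $1/\kappa$. Consequently the pessimistic-optimal policy for $R^\star$ lies in $\Pi_k$. Since both $\pi^{k,1},\pi^{k,2}\in\Pi_k$, each is the maximizer for some $R\in\cC_k$. Hence the gap $\widehat v^{\pi^\star}_{R^\star}-\widehat v^{\pi^{k,i}}_{R^\star}$ is at most the width
\begin{align*}
w_k := \max_{\pi_1,\pi_2\in\Pi_k}\max_{R_1,R_2\in\cC_k}\bigl[(\widehat v^{\pi_1}_{R_1}-\widehat v^{\pi_1}_{R_2})-(\widehat v^{\pi_2}_{R_1}-\widehat v^{\pi_2}_{R_2})\bigr],
\end{align*}
which is exactly the quantity maximized by the selection rule \eqref{eqn:compute policies}. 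The width $w_k$ is then related to the realized trajectory-pair differences $\Delta R(\tau^{k,1},\tau^{k,2})$. Taking expectations over $\tau^{k,i}\sim\pi^{k,i}$ gives the gap up to a martingale term of order $\sqrt{\log(1/\delta)/K}$, handled with Azuma. The $1/(1-\gamma)$ factor comes from the return scale.

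The last step is to sum the widths. By the Eluder-dimension pigeonhole lemma \citep{russo2013eluder}, with $\beta_K$ and $d=d_{\text{Elu}}(\Delta\cR,1/K)$,
\begin{align*}
\sum_{k=1}^K \min\{1,w_k\}\lesssim \sqrt{d\,\beta_K K}+d,
\end{align*}
and averaging over $k$ gives $\sqrt{\kappa d\log(K|\Delta\cR|)/(K(1-\gamma))}$.

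I expect the main obstacle to be the transfer between estimated and true values inside $w_k$. The confidence set constrains true trajectory returns, but the selection uses pessimistic estimates $\widehat v$. I would first bound $|\widehat v^{\pi}_{R_1}-\widehat v^{\pi}_{R_2}-(v^{\pi}_{R_1}-v^{\pi}_{R_2})|$ by the offline error, using the same event as in (i) and (iii), so that this mismatch is absorbed into the first term. The normalization of $\beta_k$ must also be matched with the $1/(1-\gamma)$ scaling, and that constant-level bookkeeping may be delicate.
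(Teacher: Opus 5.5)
Your plan is essentially the paper's proof. You bound each exploratory policy's gap by the pessimistic-value width that \Eqref{eqn:compute policies} maximizes, replace $\widehat v$ by true values at the cost of a few $\mathcal{E}_{\text{off}}$ terms, pass to the realized pair $(\tau^{k,1},\tau^{k,2})$ with Azuma--Hoeffding, sum the widths with the Eluder lemma, and average, which the paper phrases as an online regret bound plus regret-to-PAC conversion. That conversion is also how you should treat $\bar\pi$: apply your three-term split to each $\pi^{k,i}$ and average the true values, since $\widehat v$ need not be linear in the mixture. Your detour through the BCP-optimal policy for $R^\star$, which lies in $\Pi_k$ once $R^\star\in\cC_k(\cR)$, is more careful than the paper, which feeds $(\pi^\star,\tilde\pi)$ into the selection rule as if $\pi^\star\in\Pi_k$.

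The remaining caveats are shared with the paper. First, the paper handles the obstacle you flag exactly as you propose, by asserting $|V^\pi_R-\widehat v^\pi_R|\le\mathcal{E}_{\text{off}}$ for all $\pi$ and $R$. That requires the coverage coefficient to control every $\pi\in\Pi_k$, because the events behind (i) and (iii) give the needed lower bound on $\widehat v^{\pi}_{R}$ only for $\pi=\pi^\star$. Second, your rates correspond to $\beta\asymp\kappa\log(K|\Delta\cR|/\delta)$ as in Lemma~\ref{lemma:return_confidence_set}; with the stated $\beta_k=c_1\sqrt{\log(K|\Delta\cR|)/K}$, $R^\star\in\cC_k(\cR)$ is not guaranteed. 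Likewise, your offline rate requires $\epsilon$ of order $\log(N|\Pi||\cQ|)/N$ as in Lemma~\ref{lemma:value_confidence_set}, not the $\epsilon$ written in the statement.
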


\begin{proof}
    See Appendix~\ref{appendix:proof} for a detailed proof.
\end{proof}

\Eqref{eqn:subopt} decomposes the suboptimality of Algorithm~\ref{alg: PbRL} into two terms nicely: the offline error term and the preference error term. The first error is due to the finite sample bias of the dataset, and the preference error is due to the limited amount of preference queries. Compared to pure online learning, the preference error is reduced by a factor of $1/(1-\gamma)$. Therefore, querying with an offline dataset can be much more sample-efficient than pure online queries when $N\gg K$. This is because the offline dataset contains rich information about dynamics and can reduce the effective horizon of the problem~\citep{hu2023provable}. This also aligns with our empirical findings that $\sim 10$ queries are usually sufficient for reasonable performance in offline settings. 

\section{Experiments}
In this section, we aim to answer the following questions:
(1) How does our method perform on various navigation and manipulation tasks compared to other offline PbRL methods?
(2) How effective is the proposed exploration-based query selection and discounted-based pessimism?
(3) How does our method perform across different numbers of queries?

\begin{table*}[t]
    \centering
    \begin{tabular}{c|c|ccccc}
        \toprule
          Domain & Task & OPRL & PT & PT+PDS & IDRL & OPRIDE \\ % & Oracle \\
         \midrule
\multirow{10}{*}{Metaworld} &   lever-pull & \textbf{63.2$\pm$10.4} & 49.2$\pm$3.7 & 51.7$\pm$0.1 & 33.1$\pm$1.2 & 51.8$\pm$1.6 \\% & 39.7$\pm$0.1 \\
    & peg-insert-side & 3.5$\pm$1.8 & 16.8$\pm$0.1 & 12.4$\pm$1.4 & 67.4$\pm$0.1 & \textbf{79.0$\pm$0.2} \\ % & 73.5$\pm$1.1\\
    & plate-slide & 77.4$\pm$1.6 & 4.9$\pm$0.0 & 37.3$\pm$2.3 & \textbf{79.6$\pm$3.5} & \textbf{79.9$\pm$4.6} \\% & 79.3$\pm$3.6\\
    & push & 10.6$\pm$1.5 & 16.7$\pm$5.0 & 1.8$\pm$0.4 & 30.7$\pm$5.3 & \textbf{59.1$\pm$5.4} \\ % & 59.8$\pm$1.4\\
    & push-back & 0.8$\pm$0.0 & 1.1$\pm$0.4 & 1.1$\pm$0.1 & 14.0$\pm$1.1 & \textbf{17.7$\pm$2.0} \\% & 55.2$\pm$10.1\\
    & push-wall & 7.4$\pm$4.2 & 74.8$\pm$14.4 & 3.4$\pm$0.9 & 89.2$\pm$3.2 & \textbf{102.2$\pm$1.2} \\% & 90.8$\pm$0.1\\
    & reach & 63.5$\pm$2.9 & 82.0$\pm$0.8 & 84.3$\pm$0.9 & 75.8$\pm$1.8& \textbf{88.0$\pm$0.5} \\ % & 87.3$\pm$1.4\\
    & soccer & 34.3$\pm$4.0 & \textbf{51.3$\pm$4.1} & 41.5$\pm$11.9 & 44.3$\pm$2.1 & 45.4$\pm$3.9 \\ % & 70.5$\pm$15.6 \\
    & sweep-into & 37.1$\pm$13.9 & 9.8$\pm$0.2 & 9.2$\pm$0.1 & 63.1$\pm$3.5 & \textbf{71.6$\pm$0.1} \\ % & 77.5$\pm$0.1\\
    & sweep & 6.8$\pm$1.8 & 8.0$\pm$0.4 & 8.0$\pm$0.1 & 73.0$\pm$2.8 & \textbf{78.5$\pm$1.0} \\ % & 85.8$\pm$0.4\\
   \midrule
Average &  & 30.4$\pm$4.2 & 31.4$\pm$2.9 & 25.0$\pm$1.8 & 57.0$\pm$6.3 & \textbf{65.3$\pm$3.3} \\ % & 71.9$\pm$3.3 \\
    \bottomrule
    \end{tabular}
    \caption{
    Performance of offline RL algorithm on the reward-labeled dataset with different preference reward learning methods on the Meta-World tasks over five random seeds.}
    \label{tab: meta-world complete result}
\end{table*}

\subsection{Experimental Details}

\paragraph{Environment Setup.}
We perform empirical evaluations on Meta-World~\citep{yu2019meta} and the Antmaze task on the D4RL benchmark~\citep{fu2020d4rl}. 
In the preference query, we use a segment length of 50 for all tasks.
Our setup begins with a dataset of unlabeled trajectories. We use a preference-based RL method (e.g., OPRIDE) to select queries from this dataset. These queries are then used to train a reward model. Subsequently, this learned reward model is used to relabel the entire trajectory dataset with rewards, which is then used to train a policy with a standard offline RL algorithm.
We adopt the normalized score metric proposed by the D4RL benchmark.
Please refer to Appendix~\ref{sec: exp details} for more experimental details.

\begin{table*}[t]
\centering
\begin{tabular}{c|c|ccccc}
\toprule
Domain & Task & OPRL & PT & PT+PDS & IDRL & OPRIDE \\ % & Oracle \\
\midrule
\multirow{6}{*}{Antmaze} & umaze & 76.3$\pm$3.7 & 77.5$\pm$4.5 & 84.5$\pm$8.5 & 85.5$\pm$3.4 & \textbf{87.5$\pm$5.6} \\ % & 87.5$\pm$4.3 \\
& umaze-diverse & 72.5$\pm$3.4 & 68.0$\pm$3.0 & \textbf{78.0$\pm$6.0} & 69.1$\pm$4.2 & 73.1$\pm$2.4 \\ % & 62.2$\pm$4.1 \\
& medium-play & 0.0$\pm$0.0 & 63.5$\pm$0.5 & \textbf{72.5$\pm$6.5} & 63.8$\pm$4.1 & 62.2$\pm$2.0 \\ % & 73.8$\pm$4.4 \\
& medium-diverse & 0.0$\pm$0.0 & 63.5$\pm$4.5 & 58.0$\pm$4.0 & 65.7$\pm$4.1 & \textbf{69.4$\pm$5.2} \\ % & 68.1$\pm$10.1 \\
& large-play & 7.3$\pm$0.9 & 6.5$\pm$2.5 & 9.0$\pm$8.0 & 18.7$\pm$3.4 & \textbf{27.5$\pm$12.5} \\ % & 48.7$\pm$4.3 \\
& large-diverse & 6.9$\pm$2.4 & \textbf{23.5$\pm$0.5} & 8.5$\pm$2.5 & 14.3$\pm$2.5 & 21.5$\pm$1.5 \\ % & 44.3$\pm$4.4 \\
\midrule
Average & & 27.1$\pm$1.7 & 50.4$\pm$2.5 & 51.7$\pm$5.9 & 52.8$\pm$3.6 & \textbf{56.8$\pm$4.8} \\ % & 64.1$\pm$5.2 \\
\bottomrule
\end{tabular}
\caption{
Performance of offline RL algorithm on the reward-labeled dataset with different preference reward learning methods on the Antmaze tasks over five random seeds. 
} 
\label{tab: antmaze result}
\end{table*}

\begin{table*}[t]
\centering
\begin{tabular}{c|ccccc}
\toprule
Task & PT & \makecell[c]{PDS + \\Random Query} & \makecell[c]{VDS + \\Random Query} & \makecell[c]{VDS + \\Disagreement}  & \makecell[c]{\name~\\ (VDS+IDE)}~  \\
\midrule
bin-picking & 31.9$\pm$16.2 & 53.4$\pm$19.0 & 71.9$\pm$9.0 & 78.5$\pm$17.8 & \textbf{93.3$\pm$3.2}\\ % & 88.0$\pm$7.4
button-press-wall & 58.8$\pm$0.9 & 59.4$\pm$0.9 & \textbf{77.2$\pm$0.8} & 67.4$\pm$5.4 & \textbf{77.7$\pm$0.1}\\ % & 77.4$\pm$0.5
door-close & 65.1$\pm$10.1 & 62.4$\pm$8.7  & 72.3$\pm$0.1 & 88.3$\pm$0.7 & \textbf{94.8$\pm$1.1} \\ % & 75.6$\pm$3.6
faucet-close & 57.8$\pm$0.9 & 46.2$\pm$0.2 & 59.4$\pm$8.5 & 48.7$\pm$0.6 & \textbf{73.1$\pm$0.8} \\ % & 65.0$\pm$0.6
peg-insert-side & 16.8$\pm$0.1 & 12.4$\pm$1.4 & 13.8$\pm$4.4 & 9.7$\pm$8.5 & \textbf{79.0$\pm$0.2} \\ % & 20.4$\pm$0.5
% push-wall & 74.8$\pm$14.4 & 3.4$\pm$0.9 & 6.9$\pm$3.7 & 10.7$\pm$2.2 & 102.2$\pm$1.2 \\ % & 24.3$\pm$0.5
reach & 82.0$\pm$0.8 & 84.3$\pm$0.9 & 83.3$\pm$0.1 & \textbf{86.6$\pm$0.1} & \textbf{88.0$\pm$0.5} \\ % & 86.1$\pm$0.2
sweep & 8.0$\pm$0.4 & 8.0$\pm$0.1 & 28.7$\pm$1.8 & 18.2$\pm$2.9 & \textbf{78.5$\pm$1.0} \\ % & 23.4$\pm$0.7
\bottomrule
\end{tabular}
\caption{Ablation of the query selection module on the Meta-World tasks.
We report the performance of offline RL algorithm on the reward-labeled dataset with various query selection and policy extration mechanism.
IDE and VDS represent the In-Dataset Exploration module and the Variance-based Discount Scheduling module proposed in Section~\ref{sec: query selection}, respectively.
}
\label{tab: ablation}
\end{table*}

\begin{table*}[t]
\centering
\begin{tabular}{c|c|ccccc}
\toprule
Domain & Tasks & Zero & Random & Negative & OPRIDE\\
\midrule
\multirow{8}{*}{Metaworld} & coffee-push & 7.6$\pm$4.3 & 5.8$\pm$2.7 & 0.7$\pm$0.1 & \textbf{59.4$\pm$24.8}\\
 & disassemble & 9.3$\pm$0.4 & 16.8$\pm$7.3 & 10.1$\pm$0.2 & \textbf{26.6$\pm$4.9}\\
 & hammer & 38.1$\pm$6.4 & 46.1$\pm$2.4 & 22.6$\pm$1.8 & \textbf{50.3$\pm$3.2}\\
 & push & \textbf{57.5$\pm$1.5} & 34.4$\pm$17.3 & 4.6$\pm$2.3 & \textbf{59.1$\pm$5.4}\\
 & push-wall & 81.9$\pm$3.8 & 80.1$\pm$0.9 & 17.6$\pm$1.9 & \textbf{102.2$\pm$1.2}\\
 & soccer & 33.3$\pm$1.6 & 41.1$\pm$8.8 & 44.0$\pm$6.4 & \textbf{45.4$\pm$3.9}\\
 & sweep & 29.0$\pm$0.2 & 29.0$\pm$2.6 & 24.9$\pm$0.3 & \textbf{78.5$\pm$1.0}\\
\bottomrule
\end{tabular}
\caption{Comparison between the survival instinct and OPRIDE.}
\vspace{-0.5cm}
\label{tab: survial instinct}
\end{table*}

\paragraph{Baselines.} 
We compare OPRIDE with several state-of-the-art offline PbRL methods, including Offline Preference-based Reinforcement Learning~\citep[OPRL;][]{shin2023benchmarks}, Preference Transformer~\citep[PT;][]{kim2023preference} and Information Directed Reward Learning~\citep[IDRL;][]{lindner2021information}.
To illustrate the effectiveness of our proposed variance-based discount, we compare our method with Provable Data Sharing~\citep[PDS;][]{hu2023provable} as a baseline algorithm.
For OPRL, we employ disagreement-based query selection, as it yields the best performance.
We adopt the same architecture as in Preference Transformer~(PT) for a fair comparison.

% Offline Preference-based Reinforcement Learning~\citep[OPRL;][]{shin2023benchmarks} is a representative algorithm in offline PbRL, which proposes various mechanisms to select queries~(e.g., disagreement technique).
% Preference Transformer~\citep[PT;][]{kim2023preference} uses the transformer architecture to model the potential non-Markovian reward function. Our method adopts the same architecture as in Preference Transformer~(PT). 
% To illustrate the effectiveness of our proposed variance-based discount, we compare our method with Provable Data Sharing~\citep[PDS;][]{hu2023provable} as a baseline algorithm, which proposes to use a pessimistic ensemble to account for uncertainties in the reward function, thus reducing the potential overoptimization issues in the learned reward function.
% We adopt the same architecture as in Preference Transformer~(PT) for a fair comparison.
% In addition, we also adopt IDRL~\citep{lindner2021information} as our baseline, which proposes an information-directed query selection method and uses the Laplacian approximation and the Hessian matrix for posterior computation.

\subsection{Experimental Results}
\textbf{Answer to Question 1:}
To show that \name~can generate valuable rewards with a few queries, we conducted a comprehensive comparative analysis of \name~ against several baseline methods, utilizing Meta-World and Antmaze tasks as our testing grounds. Specifically, we use a budget of 10 queries on each task for all offline preference-based reinforcement learning methods. Then, we let all algorithms employ the IQL algorithm for subsequent offline training for a fair comparison.
The experimental results in Table~\ref{tab: meta-world complete result} and Table~\ref{tab: antmaze result} are normalized episode returns averaged over five random seeds.
In most tasks in Meta-World and Antmaze, \name~ demonstrates superior performance compared to baseline algorithms.
Moreover, unlike IDRL, which relies on the Laplacian approximation and the Hessian matrix for posterior computation, our method leverages critic values for query selection, ensuring easier implementation and superior empirical performance, as demonstrated in our comparative experiments.

We also compare OPRIDE with the recent research work Survival Instinct~\citep{li2024survival} since they find that wrong rewards can also lead to good offline RL performance.
Specifically, we used three types of rewards, the same as the author: (1) zero: the zero reward, (2) random: labeling each transition with a reward value randomly sampled from Unif~[0, 1], and (3) negative: the negation of true reward.
Then, we trained the same offline learning algorithm as OPRIDE on the reward-labeled dataset.
The experimental results in Table~\ref{tab: survial instinct} indicate that OPRIDE still outperforms these baselines in most tasks.
We attribute the above experimental results to the challenging nature of the dataset we created. Specifically, in~\cite{li2024survival}, the perturbed script policy data accounts for 100\% of the dataset. 
However, in our created dataset, the perturbed script policy data only accounts for 5\% of the dataset.
We conduct additional experiments on Mujoco and Kitchen tasks.
Please refer to Appendix~\ref{sec: add exp} for the complete experimental results.

\begin{figure*}[t]\centering
\includegraphics[width=0.24\textwidth]{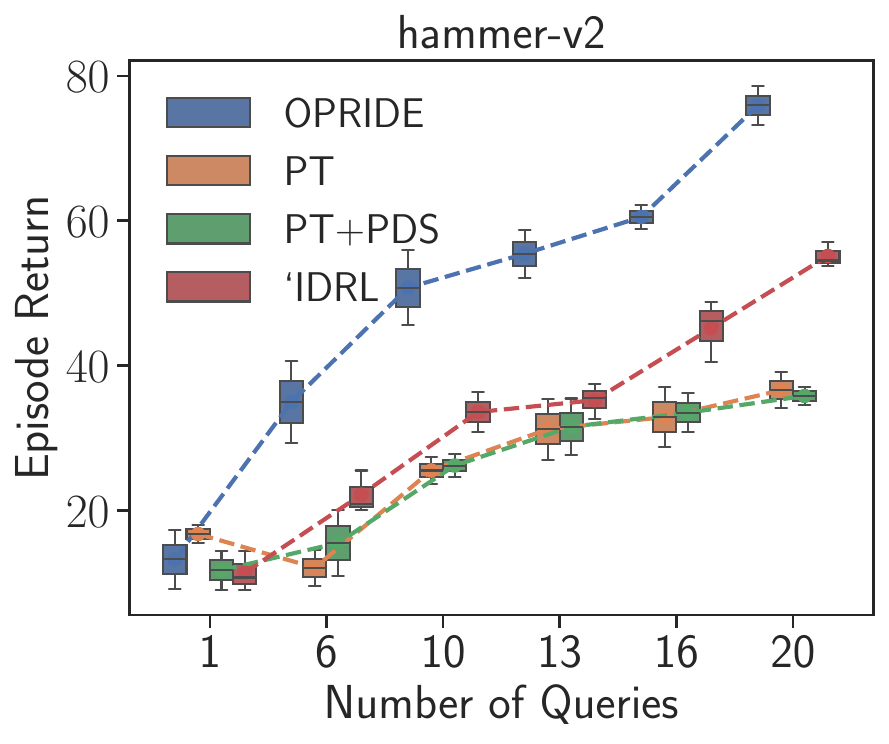}
\hfill\includegraphics[width=0.24\textwidth]{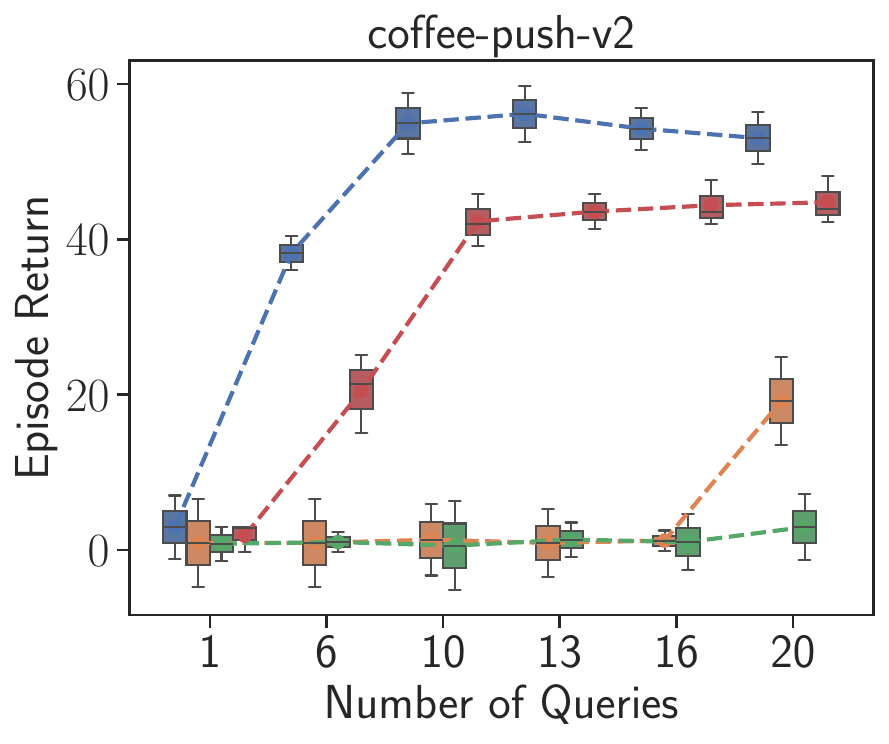}\hfill\includegraphics[width=0.24\textwidth]{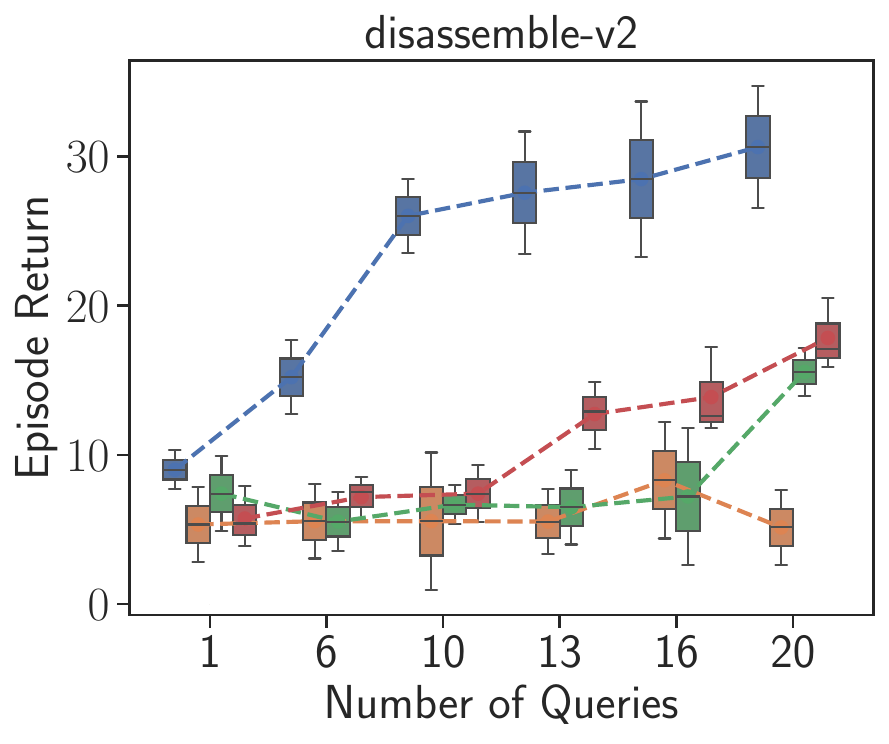}\hfill\includegraphics[width=0.24\textwidth]{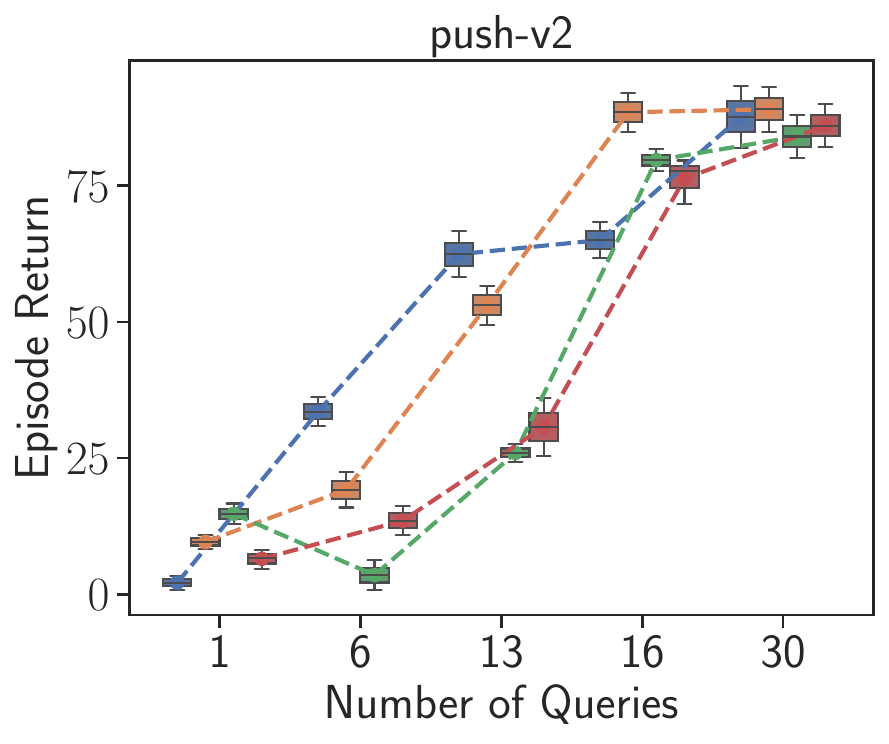}
\caption{Performance of offline preference-based RL algorithms with various queries. \name~ achieves a better query efficiency across tasks and number of queries.
}
\label{fig: query number}
\end{figure*}

\textbf{Answer to Question 2:}
To study the contribution of each component in our framework, we conduct several ablation studies to verify the effectiveness of each part, as shown in Table~\ref{tab: ablation}.
Comparing our method with the \texttt{VDS + Random Query} and the \texttt{VDS + Disagreement} baseline, we can see that disagreement-based approaches offer little improvement over the random query selection baseline, while our exploration criteria lead to vast performance improvement, showcasing that our method is able to collect useful information within a few queries. Comparing the \texttt{PDS + Random Query} and the \texttt{VDS + Random Query} baseline, we can conclude that while PDS is helpful on some tasks like \texttt{bin-picking-v2}, it fails to prevent reward overoptimization and makes the performance worse on some other tasks. On the contrary, \texttt{VDS + Random Query} is able to improve over the \texttt{PT} baseline on most tasks, showing its robust ability to reduce reward overestimation.
 % Overall, our method achieves the best performance compared to other ablation baselines, demonstrating the effectiveness of each part of our algorithm.

We have conducted ablation experiments to determine the sensitivity of the discount factor hyperparameter. Specifically, we vary the $\gamma_{\text{\rm small}}$ values from 0.5 to 0.95 for the data points with the high variance.
The experimental results in Table~\ref{tab: discount} in Appendix~\ref{sec: add exp} indicate that 0.7$\sim$0.8 is an appropriate range for $\gamma_{\text{\rm small}}$, and the performance is robust across different $\gamma_{\text{\rm small}}$ values.
We conduct additional ablation studies for the In-Dataset Exploration module, the Variance-based Discount Scheduling module and the hyperparameter $m$.
Please refer to Appendix~\ref{sec: add exp} for the complete results.

% \begin{table*}[t]
% \centering
% \begin{tabular}{c|cccccc}
% \toprule
% $\gamma_{\text{\rm small}}$ & 0.5 & 0.6 & 0.7 & 0.8 & 0.9 & 0.95 \\
% \midrule
% bin-picking & 72.1$\pm$23.9 & 87.8$\pm$2.7 & 93.3$\pm$3.2 & 84.6$\pm$9.4 & 74.8$\pm$33.5 & 70.9$\pm$9.4\\
% button-press-wall & 77.6$\pm$0.3 & 77.4$\pm$0.3 & 77.7$\pm$0.1 & 71.0$\pm$0.7 & 69.2$\pm$0.9 & 68.1$\pm$9.7 \\
% door-close & 88.4$\pm$0.8 & 89.9$\pm$0.7 & 94.8$\pm$1.1 & 90.0$\pm$2.1 & 91.1$\pm$1.5 & 87.6$\pm$0.7 \\
% faucet-close & 58.1$\pm$5.2 & 58.2$\pm$12.5 & 73.1$\pm$0.8 & 61.4$\pm$2.7 & 55.1$\pm$3.7 & 57.4$\pm$12.1 \\
% \bottomrule
% \end{tabular}
% \caption{Performance of offline RL algorithm on the reward-labeled dataset with various discount factor values $\gamma_{\text{\rm small}}$ on the high variance data points.}
% \vspace{-0.5cm}
% \label{tab: discount}
% \end{table*}

\textbf{Answer to Question 3:}
To investigate how the number of queries affects \name~'s overall performance, we vary the number of queries and compare our method with various baselines.
The results presented in Figure~\ref{fig: query number} demonstrate that \name~ achieves a superior query efficiency and significantly outperforms the baselines across various numbers of queries. 
In most tasks, \name~ achieves good performance with just ten queries, and its performance continues to improve as the number of queries increases.
In contrast, the baseline methods require multiple times the number of queries to achieve performance on par with \name~~(e.g., $\texttt{hammer-v2}$).
Even with 20 queries, the baseline algorithm shows no significant improvement on some hard tasks~(e.g., $\texttt{coffee-push-v2}$).

\textbf{Computational Cost:}
To improve computational efficiency, we implement the following modifications.
(1) When training reward functions, instead of traversing the entire dataset, we sample $S$ segments at each iteration.
(2) Furthermore, we utilize a multi-head mechanism instead of separate training for each reward function. This means different reward functions share the same backbone, with only the last layer being distinct. 
Therefore, the computational cost of our overall method is $S \times (1+(M-1)/L) \times K$ instead of $N \times M \times K$, where $L$ is the number of neural network layers.
Please refer to Appendix~\ref{sec: add exp} for the complete computational cost results.

% \section{{Limitation}
% While OPRIDE demonstrates significant improvements in query efficiency, our current approach relies on neural networks for both reward learning and value estimation.  
% As noted in prior work~\citep{lyleunderstanding, lyle2023understanding}, these models may suffer from plasticity loss (reduced ability to learn new patterns after initial training) and capacity loss (degradation of existing representations during continual learning).  
% These limitations could potentially hinder the reward model's adaptation to newly acquired preference data and impair the value function ensemble's ability to accurately reflect updated reward estimates during iterative querying.  
% Addressing this represents a crucial future direction.  We plan to investigate principled solutions inspired by continual learning and explore dynamic neural architecture designs that preserve plasticity.

\section{Conclusion}
This paper proposes a new framework, in-dataset exploration, to improve query efficiency in offline PbRL. Compared with disagreement-based approaches, using an exploration strategy helps reduce the burden of learning an accurate reward function in the low-return region, improving learning efficiency. Our proposed algorithm, OPRIDE, conducts in-dataset exploration by weighted trajectory queries, and a principled exploration strategy deals with pairwise queries. Our method has provable guarantees, and our practical variant achieves strong empirical performance on various tasks. Compared to prior methods, our method significantly reduces the required queries. 
Overall, our method provides a promising and principled way to reduce queries required from human labelers in PbRL.

\section*{Reproducibility statement}
We have provided the source code in the supplementary materials, which will be made public after the paper is accepted. We have provided theoretical analysis in the Appendix~\ref{sec: algorithm} and Appendix~\ref{appendix:proof}.
We have also provided implementation details in the Appendix~\ref{sec: exp details}.

\section*{Acknowledgments}
This work is supported by the National Key R\&D Program of China (No.2022ZD0116405) and Amazon Research Award.

% \clearpage
% In the unusual situation where you want a paper to appear in the
% references without citing it in the main text, use \nocite
\bibliography{reference}

\begin{thebibliography}{58}
\providecommand{\natexlab}[1]{#1}
\providecommand{\url}[1]{\texttt{#1}}
\expandafter\ifx\csname urlstyle\endcsname\relax
  \providecommand{\doi}[1]{doi: #1}\else
  \providecommand{\doi}{doi: \begingroup \urlstyle{rm}\Url}\fi

\bibitem[Agarwal et~al.(2020)Agarwal, Kakade, Krishnamurthy, and Sun]{agarwal2020flambe}
Agarwal, A., Kakade, S., Krishnamurthy, A., and Sun, W.
\newblock Flambe: Structural complexity and representation learning of low rank mdps.
\newblock \emph{Advances in neural information processing systems}, 33:\penalty0 20095--20107, 2020.

\bibitem[Agarwal et~al.(2021)Agarwal, Kakade, Lee, and Mahajan]{agarwal2021theory}
Agarwal, A., Kakade, S.~M., Lee, J.~D., and Mahajan, G.
\newblock On the theory of policy gradient methods: Optimality, approximation, and distribution shift.
\newblock \emph{Journal of Machine Learning Research}, 22\penalty0 (98):\penalty0 1--76, 2021.

\bibitem[Ahn et~al.(2022)Ahn, Brohan, Brown, Chebotar, Cortes, David, Finn, Fu, Gopalakrishnan, Hausman, et~al.]{ahn2022can}
Ahn, M., Brohan, A., Brown, N., Chebotar, Y., Cortes, O., David, B., Finn, C., Fu, C., Gopalakrishnan, K., Hausman, K., et~al.
\newblock Do as i can, not as i say: Grounding language in robotic affordances.
\newblock \emph{arXiv preprint arXiv:2204.01691}, 2022.

\bibitem[Akrour et~al.(2011)Akrour, Schoenauer, and Sebag]{akrour2011preference}
Akrour, R., Schoenauer, M., and Sebag, M.
\newblock Preference-based policy learning.
\newblock In \emph{Machine Learning and Knowledge Discovery in Databases: European Conference, ECML PKDD 2011, Athens, Greece, September 5-9, 2011. Proceedings, Part I 11}, pp.\  12--27. Springer, 2011.

\bibitem[Akrour et~al.(2012)Akrour, Schoenauer, and Sebag]{akrour2012april}
Akrour, R., Schoenauer, M., and Sebag, M.
\newblock April: Active preference learning-based reinforcement learning.
\newblock In \emph{Machine Learning and Knowledge Discovery in Databases: European Conference, ECML PKDD 2012, Bristol, UK, September 24-28, 2012. Proceedings, Part II 23}, pp.\  116--131. Springer, 2012.

\bibitem[An et~al.(2023)An, Lee, Zuo, Kosaka, Kim, and Song]{an2023direct}
An, G., Lee, J., Zuo, X., Kosaka, N., Kim, K.-M., and Song, H.~O.
\newblock Direct preference-based policy optimization without reward modeling.
\newblock \emph{Advances in Neural Information Processing Systems}, 36:\penalty0 70247--70266, 2023.

\bibitem[Bradley \& Terry(1952)Bradley and Terry]{bradley1952rank}
Bradley, R.~A. and Terry, M.~E.
\newblock Rank analysis of incomplete block designs: I. the method of paired comparisons.
\newblock \emph{Biometrika}, 39\penalty0 (3/4):\penalty0 324--345, 1952.

\bibitem[Cai et~al.(2020)Cai, Yang, Jin, and Wang]{cai2020provably}
Cai, Q., Yang, Z., Jin, C., and Wang, Z.
\newblock Provably efficient exploration in policy optimization.
\newblock In \emph{International Conference on Machine Learning}, pp.\  1283--1294. PMLR, 2020.

\bibitem[Chen et~al.(2022)Chen, Zhong, Yang, Wang, and Wang]{chen2022human}
Chen, X., Zhong, H., Yang, Z., Wang, Z., and Wang, L.
\newblock Human-in-the-loop: Provably efficient preference-based reinforcement learning with general function approximation.
\newblock In \emph{International Conference on Machine Learning}, pp.\  3773--3793. PMLR, 2022.

\bibitem[Choi et~al.(2024)Choi, Jung, Ahn, and Moon]{choi2024listwise}
Choi, H., Jung, S., Ahn, H., and Moon, T.
\newblock Listwise reward estimation for offline preference-based reinforcement learning.
\newblock In \emph{International Conference on Machine Learning}, pp.\  8651--8671. PMLR, 2024.

\bibitem[Christiano et~al.(2017)Christiano, Leike, Brown, Martic, Legg, and Amodei]{christiano2017deep}
Christiano, P.~F., Leike, J., Brown, T., Martic, M., Legg, S., and Amodei, D.
\newblock Deep reinforcement learning from human preferences.
\newblock \emph{Advances in neural information processing systems}, 30, 2017.

\bibitem[Degrave et~al.(2022)Degrave, Felici, Buchli, Neunert, Tracey, Carpanese, Ewalds, Hafner, Abdolmaleki, de~Las~Casas, et~al.]{degrave2022magnetic}
Degrave, J., Felici, F., Buchli, J., Neunert, M., Tracey, B., Carpanese, F., Ewalds, T., Hafner, R., Abdolmaleki, A., de~Las~Casas, D., et~al.
\newblock Magnetic control of tokamak plasmas through deep reinforcement learning.
\newblock \emph{Nature}, 602\penalty0 (7897):\penalty0 414--419, 2022.

\bibitem[Fu et~al.(2020)Fu, Kumar, Nachum, Tucker, and Levine]{fu2020d4rl}
Fu, J., Kumar, A., Nachum, O., Tucker, G., and Levine, S.
\newblock D4rl: Datasets for deep data-driven reinforcement learning.
\newblock \emph{arXiv preprint arXiv:2004.07219}, 2020.

\bibitem[Gao et~al.(2023)Gao, Schulman, and Hilton]{gao2023scaling}
Gao, L., Schulman, J., and Hilton, J.
\newblock Scaling laws for reward model overoptimization.
\newblock In \emph{International Conference on Machine Learning}, pp.\  10835--10866. PMLR, 2023.

\bibitem[Hejna \& Sadigh(2023)Hejna and Sadigh]{hejna2023inverse}
Hejna, J. and Sadigh, D.
\newblock Inverse preference learning: Preference-based rl without a reward function.
\newblock \emph{Advances in Neural Information Processing Systems}, 36:\penalty0 18806--18827, 2023.

\bibitem[Hejna et~al.()Hejna, Rafailov, Sikchi, Finn, Niekum, Knox, and Sadigh]{hejnacontrastive}
Hejna, J., Rafailov, R., Sikchi, H., Finn, C., Niekum, S., Knox, W.~B., and Sadigh, D.
\newblock Contrastive preference learning: Learning from human feedback without reinforcement learning.
\newblock In \emph{The Twelfth International Conference on Learning Representations}.

\bibitem[Hu et~al.(2022)Hu, Yang, Zhao, and Zhang]{hu2022role}
Hu, H., Yang, Y., Zhao, Q., and Zhang, C.
\newblock On the role of discount factor in offline reinforcement learning.
\newblock In \emph{International Conference on Machine Learning}, pp.\  9072--9098. PMLR, 2022.

\bibitem[Hu et~al.(2023)Hu, Yang, Zhao, and Zhang]{hu2023provable}
Hu, H., Yang, Y., Zhao, Q., and Zhang, C.
\newblock The provable benefits of unsupervised data sharing for offline reinforcement learning.
\newblock \emph{arXiv preprint arXiv:2302.13493}, 2023.

\bibitem[Ibarz et~al.(2018)Ibarz, Leike, Pohlen, Irving, Legg, and Amodei]{ibarz2018reward}
Ibarz, B., Leike, J., Pohlen, T., Irving, G., Legg, S., and Amodei, D.
\newblock Reward learning from human preferences and demonstrations in atari.
\newblock \emph{Advances in neural information processing systems}, 31, 2018.

\bibitem[Jiang et~al.(2015)Jiang, Kulesza, Singh, and Lewis]{jiang2015dependence}
Jiang, N., Kulesza, A., Singh, S., and Lewis, R.
\newblock The dependence of effective planning horizon on model accuracy.
\newblock In \emph{Proceedings of the 2015 international conference on autonomous agents and multiagent systems}, pp.\  1181--1189, 2015.

\bibitem[Jin et~al.(2018)Jin, Allen-Zhu, Bubeck, and Jordan]{jin2018q}
Jin, C., Allen-Zhu, Z., Bubeck, S., and Jordan, M.~I.
\newblock Is q-learning provably efficient?
\newblock \emph{Advances in neural information processing systems}, 31, 2018.

\bibitem[Jin et~al.(2021)Jin, Yang, and Wang]{jin2021pessimism}
Jin, Y., Yang, Z., and Wang, Z.
\newblock Is pessimism provably efficient for offline rl?
\newblock In \emph{International Conference on Machine Learning}, pp.\  5084--5096. PMLR, 2021.

\bibitem[Kang et~al.(2023)Kang, Shi, Liu, He, and Wang]{kang2023beyond}
Kang, Y., Shi, D., Liu, J., He, L., and Wang, D.
\newblock Beyond reward: Offline preference-guided policy optimization.
\newblock \emph{arXiv preprint arXiv:2305.16217}, 2023.

\bibitem[Kim et~al.(2023)Kim, Park, Shin, Lee, Abbeel, and Lee]{kim2023preference}
Kim, C., Park, J., Shin, J., Lee, H., Abbeel, P., and Lee, K.
\newblock Preference transformer: Modeling human preferences using transformers for rl.
\newblock \emph{arXiv preprint arXiv:2303.00957}, 2023.

\bibitem[Kostrikov et~al.(2021)Kostrikov, Nair, and Levine]{kostrikov2021offline}
Kostrikov, I., Nair, A., and Levine, S.
\newblock Offline reinforcement learning with implicit q-learning.
\newblock \emph{arXiv preprint arXiv:2110.06169}, 2021.

\bibitem[Lee et~al.(2021)Lee, Smith, and Abbeel]{pmlr-v139-lee21i}
Lee, K., Smith, L.~M., and Abbeel, P.
\newblock Pebble: Feedback-efficient interactive reinforcement learning via relabeling experience and unsupervised pre-training.
\newblock In Meila, M. and Zhang, T. (eds.), \emph{Proceedings of the 38th International Conference on Machine Learning}, volume 139 of \emph{Proceedings of Machine Learning Research}, pp.\  6152--6163. PMLR, 18--24 Jul 2021.
\newblock URL \url{https://proceedings.mlr.press/v139/lee21i.html}.

\bibitem[Li et~al.(2024)Li, Misra, Kolobov, and Cheng]{li2024survival}
Li, A., Misra, D., Kolobov, A., and Cheng, C.-A.
\newblock Survival instinct in offline reinforcement learning.
\newblock \emph{Advances in neural information processing systems}, 36, 2024.

\bibitem[Lindner et~al.(2021)Lindner, Turchetta, Tschiatschek, Ciosek, and Krause]{lindner2021information}
Lindner, D., Turchetta, M., Tschiatschek, S., Ciosek, K., and Krause, A.
\newblock Information directed reward learning for reinforcement learning.
\newblock \emph{Advances in Neural Information Processing Systems}, 34:\penalty0 3850--3862, 2021.

\bibitem[Liu et~al.(2023)Liu, Dong, Hu, Wen, Yin, Zhang, and Gao]{liu2023imitation}
Liu, Y., Dong, W., Hu, Y., Wen, C., Yin, Z.-H., Zhang, C., and Gao, Y.
\newblock Imitation learning from observation with automatic discount scheduling.
\newblock \emph{arXiv preprint arXiv:2310.07433}, 2023.

\bibitem[Liu et~al.(2024)Liu, Dong, Hu, Wen, Yin, Zhang, and Gao]{liu2024imitation}
Liu, Y., Dong, W., Hu, Y., Wen, C., Yin, Z.-H., Zhang, C., and Gao, Y.
\newblock Imitation learning from observation with automatic discount scheduling, 2024.

\bibitem[Lu \& Van~Roy(2019)Lu and Van~Roy]{lu2019information}
Lu, X. and Van~Roy, B.
\newblock Information-theoretic confidence bounds for reinforcement learning.
\newblock \emph{Advances in neural information processing systems}, 32, 2019.

\bibitem[Ma et~al.(2021)Ma, Yang, Hu, Liu, Yang, Zhang, Zhao, and Liang]{ma2021offline}
Ma, X., Yang, Y., Hu, H., Liu, Q., Yang, J., Zhang, C., Zhao, Q., and Liang, B.
\newblock Offline reinforcement learning with value-based episodic memory.
\newblock \emph{arXiv preprint arXiv:2110.09796}, 2021.

\bibitem[Mu et~al.()Mu, Hu, Hu, Yang, XU, and Jia]{muclarify}
Mu, N., Hu, H., Hu, X., Yang, Y., XU, B., and Jia, Q.-S.
\newblock Clarify: Contrastive preference reinforcement learning for untangling ambiguous queries.
\newblock In \emph{Forty-second International Conference on Machine Learning}.

\bibitem[Novoseller et~al.(2020)Novoseller, Wei, Sui, Yue, and Burdick]{novoseller2020dueling}
Novoseller, E., Wei, Y., Sui, Y., Yue, Y., and Burdick, J.
\newblock Dueling posterior sampling for preference-based reinforcement learning.
\newblock In \emph{Conference on Uncertainty in Artificial Intelligence}, pp.\  1029--1038. PMLR, 2020.

\bibitem[Ouyang et~al.(2022)Ouyang, Wu, Jiang, Almeida, Wainwright, Mishkin, Zhang, Agarwal, Slama, Ray, et~al.]{ouyang2022training}
Ouyang, L., Wu, J., Jiang, X., Almeida, D., Wainwright, C., Mishkin, P., Zhang, C., Agarwal, S., Slama, K., Ray, A., et~al.
\newblock Training language models to follow instructions with human feedback.
\newblock \emph{Advances in Neural Information Processing Systems}, 35:\penalty0 27730--27744, 2022.

\bibitem[Pacchiano et~al.(2021)Pacchiano, Saha, and Lee]{pacchiano2021dueling}
Pacchiano, A., Saha, A., and Lee, J.
\newblock Dueling rl: Reinforcement learning with trajectory preferences.
\newblock \emph{arXiv preprint arXiv:2111.04850}, 2021.

\bibitem[Palan et~al.(2019)Palan, Landolfi, Shevchuk, and Sadigh]{palan2019learning}
Palan, M., Landolfi, N.~C., Shevchuk, G., and Sadigh, D.
\newblock Learning reward functions by integrating human demonstrations and preferences.
\newblock \emph{arXiv preprint arXiv:1906.08928}, 2019.

\bibitem[Rafailov et~al.(2023)Rafailov, Sharma, Mitchell, Manning, Ermon, and Finn]{NEURIPS2023_a85b405e}
Rafailov, R., Sharma, A., Mitchell, E., Manning, C.~D., Ermon, S., and Finn, C.
\newblock Direct preference optimization: Your language model is secretly a reward model.
\newblock In Oh, A., Naumann, T., Globerson, A., Saenko, K., Hardt, M., and Levine, S. (eds.), \emph{Advances in Neural Information Processing Systems}, volume~36, pp.\  53728--53741. Curran Associates, Inc., 2023.
\newblock URL \url{https://proceedings.neurips.cc/paper_files/paper/2023/file/a85b405ed65c6477a4fe8302b5e06ce7-Paper-Conference.pdf}.

\bibitem[Rafailov et~al.(2024)Rafailov, Hejna, Park, and Finn]{rafailov2024r}
Rafailov, R., Hejna, J., Park, R., and Finn, C.
\newblock From $r$ to $q^*$: Your language model is secretly a q-function, 2024.

\bibitem[Russo \& Van~Roy(2013)Russo and Van~Roy]{russo2013eluder}
Russo, D. and Van~Roy, B.
\newblock Eluder dimension and the sample complexity of optimistic exploration.
\newblock In \emph{NIPS}, pp.\  2256--2264. Citeseer, 2013.

\bibitem[Russo \& Van~Roy(2014)Russo and Van~Roy]{russo2014learning}
Russo, D. and Van~Roy, B.
\newblock Learning to optimize via posterior sampling.
\newblock \emph{Mathematics of Operations Research}, 39\penalty0 (4):\penalty0 1221--1243, 2014.

\bibitem[Russo \& Van~Roy(2016)Russo and Van~Roy]{russo2016information}
Russo, D. and Van~Roy, B.
\newblock An information-theoretic analysis of thompson sampling.
\newblock \emph{Journal of Machine Learning Research}, 17\penalty0 (68):\penalty0 1--30, 2016.

\bibitem[Sekhari et~al.(2023)Sekhari, Sridharan, Sun, and Wu]{sekhari2023contextual}
Sekhari, A., Sridharan, K., Sun, W., and Wu, R.
\newblock Contextual bandits and imitation learning via preference-based active queries.
\newblock \emph{arXiv preprint arXiv:2307.12926}, 2023.

\bibitem[Shin et~al.(2023)Shin, Dragan, and Brown]{shin2023benchmarks}
Shin, D., Dragan, A.~D., and Brown, D.~S.
\newblock Benchmarks and algorithms for offline preference-based reward learning.
\newblock \emph{arXiv preprint arXiv:2301.01392}, 2023.

\bibitem[Silver et~al.(2016)Silver, Huang, Maddison, Guez, Sifre, Van Den~Driessche, Schrittwieser, Antonoglou, Panneershelvam, Lanctot, et~al.]{silver2016mastering}
Silver, D., Huang, A., Maddison, C.~J., Guez, A., Sifre, L., Van Den~Driessche, G., Schrittwieser, J., Antonoglou, I., Panneershelvam, V., Lanctot, M., et~al.
\newblock Mastering the game of go with deep neural networks and tree search.
\newblock \emph{nature}, 529\penalty0 (7587):\penalty0 484--489, 2016.

\bibitem[Stiennon et~al.(2020)Stiennon, Ouyang, Wu, Ziegler, Lowe, Voss, Radford, Amodei, and Christiano]{NEURIPS2020_1f89885d}
Stiennon, N., Ouyang, L., Wu, J., Ziegler, D., Lowe, R., Voss, C., Radford, A., Amodei, D., and Christiano, P.~F.
\newblock Learning to summarize with human feedback.
\newblock In Larochelle, H., Ranzato, M., Hadsell, R., Balcan, M., and Lin, H. (eds.), \emph{Advances in Neural Information Processing Systems}, volume~33, pp.\  3008--3021. Curran Associates, Inc., 2020.
\newblock URL \url{https://proceedings.neurips.cc/paper_files/paper/2020/file/1f89885d556929e98d3ef9b86448f951-Paper.pdf}.

\bibitem[Wang et~al.(2023)Wang, Liu, and Jin]{wang2023rlhf}
Wang, Y., Liu, Q., and Jin, C.
\newblock Is rlhf more difficult than standard rl?
\newblock \emph{arXiv preprint arXiv:2306.14111}, 2023.

\bibitem[Wilson et~al.(2012)Wilson, Fern, and Tadepalli]{wilson2012bayesian}
Wilson, A., Fern, A., and Tadepalli, P.
\newblock A bayesian approach for policy learning from trajectory preference queries.
\newblock \emph{Advances in neural information processing systems}, 25, 2012.

\bibitem[Wu \& Sun(2023)Wu and Sun]{wu2023making}
Wu, R. and Sun, W.
\newblock Making rl with preference-based feedback efficient via randomization.
\newblock \emph{arXiv preprint arXiv:2310.14554}, 2023.

\bibitem[Xie et~al.(2021)Xie, Cheng, Jiang, Mineiro, and Agarwal]{xie2021bellman}
Xie, T., Cheng, C.-A., Jiang, N., Mineiro, P., and Agarwal, A.
\newblock Bellman-consistent pessimism for offline reinforcement learning.
\newblock \emph{Advances in neural information processing systems}, 34:\penalty0 6683--6694, 2021.

\bibitem[Xu et~al.(2020)Xu, Wang, Yang, Singh, and Dubrawski]{xu2020preference}
Xu, Y., Wang, R., Yang, L., Singh, A., and Dubrawski, A.
\newblock Preference-based reinforcement learning with finite-time guarantees.
\newblock \emph{Advances in Neural Information Processing Systems}, 33:\penalty0 18784--18794, 2020.

\bibitem[Yang et~al.(2021)Yang, Ma, Li, Zheng, Zhang, Huang, Yang, and Zhao]{yang2021believe}
Yang, Y., Ma, X., Li, C., Zheng, Z., Zhang, Q., Huang, G., Yang, J., and Zhao, Q.
\newblock Believe what you see: Implicit constraint approach for offline multi-agent reinforcement learning.
\newblock \emph{Advances in Neural Information Processing Systems}, 34:\penalty0 10299--10312, 2021.

\bibitem[Yang et~al.(2023)Yang, Hu, Li, Li, Yang, Zhao, and Zhang]{yang2023flow}
Yang, Y., Hu, H., Li, W., Li, S., Yang, J., Zhao, Q., and Zhang, C.
\newblock Flow to control: Offline reinforcement learning with lossless primitive discovery.
\newblock In \emph{Proceedings of the AAAI Conference on Artificial Intelligence}, volume~37, pp.\  10843--10851, 2023.

\bibitem[Yang et~al.(2025)Yang, Wang, Li, Hu, Wu, Jiang, Zhong, Zhang, Zhao, Zhang, et~al.]{yang2025fewer}
Yang, Y., Wang, Q., Li, C., Hu, H., Wu, C., Jiang, Y., Zhong, D., Zhang, Z., Zhao, Q., Zhang, C., et~al.
\newblock Fewer may be better: Enhancing offline reinforcement learning with reduced dataset.
\newblock \emph{arXiv preprint arXiv:2502.18955}, 2025.

\bibitem[Yu et~al.(2019)Yu, Quillen, He, Julian, Hausman, Finn, and Levine]{yu2019meta}
Yu, T., Quillen, D., He, Z., Julian, R., Hausman, K., Finn, C., and Levine, S.
\newblock Meta-world: A benchmark and evaluation for multi-task and meta reinforcement learning.
\newblock In \emph{Conference on Robot Learning (CoRL)}, 2019.
\newblock URL \url{https://arxiv.org/abs/1910.10897}.

\bibitem[Zhan et~al.(2023{\natexlab{a}})Zhan, Uehara, Kallus, Lee, and Sun]{zhan2023provable}
Zhan, W., Uehara, M., Kallus, N., Lee, J.~D., and Sun, W.
\newblock Provable offline reinforcement learning with human feedback.
\newblock \emph{arXiv preprint arXiv:2305.14816}, 2023{\natexlab{a}}.

\bibitem[Zhan et~al.(2023{\natexlab{b}})Zhan, Uehara, Sun, and Lee]{zhan2023query}
Zhan, W., Uehara, M., Sun, W., and Lee, J.~D.
\newblock How to query human feedback efficiently in rl?
\newblock \emph{arXiv preprint arXiv:2305.18505}, 2023{\natexlab{b}}.

\bibitem[Zhu et~al.(2024)Zhu, Jordan, and Jiao]{zhu2024iterative}
Zhu, B., Jordan, M.~I., and Jiao, J.
\newblock Iterative data smoothing: Mitigating reward overfitting and overoptimization in rlhf.
\newblock \emph{arXiv preprint arXiv:2401.16335}, 2024.

\end{thebibliography}
\bibliographystyle{icml2024}

\newpage
\appendix
\section{Additional Details}
\label{sec: algorithm}
In this section, we provide a detailed description for the theoretical version of OPRIDE as in Algorithm~\ref{alg: PbRL}.
\begin{algorithm}[h]
    \caption{Theoretical Analysis Version of OPRIDE}
    \label{alg: PbRL}
    \begin{algorithmic}[1]
    \STATE {\bf Input}: Unlabeled offline dataset $\cD$, query budget $K$
    \STATE Initialized preference dataset $\cD_{\text{pref}} \leftarrow \emptyset$.
    \FOR{$k = 1,\cdots, K$}
    \STATE Calculate confidence set $\cC_k(\cR)$ for reward function based on $\cD_{\text{pref}}$ with~\Eqref{eqn:c_k}.
    \STATE Calculate pessimistic value function $\widehat{q}(\cdot)$ using Algorithm~\ref{alg:BCP} for each reward function in $\cC_k(\cR)$.
    \STATE Construct the near-optimal policy set $\Pi_k$ using ~\Eqref{eqn:poilicy_set}.
    \STATE Select explorative policies $\pi_k^1, \pi_k^2$ within $\Pi_k$ based on ~\Eqref{eqn:compute policies}.
    \STATE Sample trajectories $\tau_k^1,\tau_k^2$ with selected policy $\pi_k^1,\pi_k^2$.
    \STATE Receive the preference $o_k$ between $\tau_k^1$ and $\tau_k^2$ and add it to the preference dataset 
    $$\cD_{\text{pref}} \leftarrow \cD_{\text{pref}} \cup \{(\tau_k^1,\tau_k^2,o_k)\}.$$
    \ENDFOR
    \STATE {\bf Output}: Average policy $\bar{\pi} = \frac{1}{2K}\cdot \sum_{k=1}^{K}(\pi_k^1+\pi_k^2)$.
    \end{algorithmic}
\end{algorithm}

\subsection{Details of Bellman-consistent Pessimism \citep[BCP;][]{xie2021bellman}}
\label{appendix:algo_describ}
In this section, we consider \textit{Bellman-consistent Pessimism} \citep[BCP;][]{xie2021bellman} as the backbone algorithm, described in Algorithm~\ref{alg:BCP}. It is a representative model-free offline algorithm with theoretical guarantees. PEVI uses negative bonus $\Gamma(\cdot,\cdot)$ over standard $Q$-value estimation $\hat{Q}(\cdot,\cdot) =  (\hat\BB \hat{V})(\cdot)$ to reduce potential bias due to finite data, where $\hat\BB$ is some empirical estimation of $\BB$ from dataset $\cD$. We use the following notion of  $\xi$-uncertainty quantifier as follows to formalize the idea of pessimism.

\begin{algorithm}[H]
    \caption{Bellman-consistent Pessimism (BCP)}
    \label{alg:BCP}
      \begin{algorithmic}[1]
        \STATE {\bf Input}:  Offline Dataset $\cD_{\text{off}}=\{\tau_k = \{(s^k_t,a^k_t)\}_{t=0}^{T}\}_{k=1}^K$, reward function $r$.
        % \STATE Set $\hat{v}_{H+1}(\cdot) =0$
        % \FOR{horizon $h = H,\cdots, 1$}
        \STATE Set the loss function as
            \begin{equation}
                \cL(q,q',\pi;\cD) = \sum_{k=1}^{K}\sum_{t=0}^T \left(q_t(s^k_t,a^k_t) - (r(s^k_t,a^k_t) + \gamma q'_{t+1}(s^k_{t+1},\pi_{t+1}))\right)^2.
            \end{equation}
        \STATE Set the confidence set of value functions  as
            \begin{equation}
                \cV(\pi,\epsilon) =\left\{q\in\cV: \cL(q,q,\pi;\cD) - \min_{q'\in\cV} \cL(q',q,\pi;\cD) \leq \epsilon\right\}.
            \end{equation}
        \STATE Compute pessimistic policy and value function as
            \begin{equation}
                \widehat{\pi} = \argmax_{\pi\in\Pi} \min_{q\in \cV(\pi,\epsilon)} q_1(s_1,\pi).
            \end{equation}
            and 
            \begin{equation}
                \widehat{q} = \argmin_{q\in \cV(\widehat{\pi},\epsilon)} q_1(s_1,\widehat{\pi}).
            \end{equation}
        \STATE {\bf Output}: $\widehat{\pi}$ and $\widehat{q}$.
      \end{algorithmic}
\end{algorithm}

\begin{lemma}
    \label{lemma:bellman_consistent_pessimism}
    Under conditions of Theorem, let $C^\dagger=\mathcal{C}(d_{\pi^\star}; \mu, \mathcal{Q}, \pi^\star) $, we have
    \begin{equation}
    V^\star(\pi^\star) - V^\star(\widehat{\pi}) \leq O \left( \sqrt{\frac{C^\dagger \log \frac{|\mathcal{Q}| |\Pi|}{\delta}}{N(1-\gamma)^2}} \right),
    \end{equation}
    where $\widehat{\pi}$ is the output of Algorithm~\ref{alg:BCP} with dataset $\cD_{\text{off}}$ and return function $R$.
    Similarly, we have 
    \begin{equation}
        V^\star(\pi) - \widehat{v}({\pi}) \leq O \left( \sqrt{\frac{C^\dagger \log \frac{|\mathcal{Q}| |\Pi|}{\delta}}{N(1-\gamma)^2}} \right),
    \end{equation}
    where $\widehat{v}$ is the output of Algorithm~\ref{alg:BCPE} with dataset $\cD_{\text{off}}$, policy $\pi$ and return function $R$.
    \end{lemma}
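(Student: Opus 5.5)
The plan is to obtain this lemma as a direct specialization of the main guarantee of Bellman-consistent pessimism \citep{xie2021bellman}, with the return function $R$ held fixed. The first step is concentration. For each fixed pair $(q,\pi)\in\cQ\times\Pi$, I would apply a Bernstein-type inequality to the squared-loss difference $\cL(q,q,\pi;\cD)-\cL(\mathbb{T}^\pi q, q,\pi;\cD)$, then take a union bound over $\cQ\times\Pi$ (and over the inner minimizer $q'\in\cQ$, which only squares $|\cQ|$ inside the log). Under realizability and Bellman completeness of $\cQ$ for every $\pi\in\Pi$, this shows that with probability $1-\delta$ and $\epsilon = c_2\log(|\cQ||\Pi|/\delta)/N$ (up to normalization):
\begin{itemize}
\item[(i)] the true $Q^\pi_R$ lies in $\cV(\pi,\epsilon)$ for every $\pi$;
\item[(ii)] every $q\in\cV(\pi,\epsilon)$ satisfies $\|q-\mathbb{T}^\pi q\|_{2,\mu}\lesssim\sqrt{\log(|\cQ||\Pi|/\delta)/N}$.
\end{itemize}
The scaling by $1/(1-\gamma)$ of the value range enters through the variance term here.

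Next, for the first inequality I would use the standard pessimism chain. Write $f(\pi)=\min_{q\in\cV(\pi,\epsilon)} q(s_0,\pi)$. By (i), $f(\pi)\le V^\pi(s_0)$ for every $\pi$, so
\[
V^{\pi^\star}-V^{\widehat\pi}\le V^{\pi^\star}-f(\widehat\pi)\le V^{\pi^\star}-f(\pi^\star),
\]
where the last step uses that $\widehat\pi$ maximizes $f$. To control the right-hand side, I would apply the performance-difference / telescoping identity: for any $q$,
\[
q(s_0,\pi^\star)-V^{\pi^\star}(s_0)=\tfrac{1}{1-\gamma}\,\mathbb{E}_{d_{\pi^\star}}\bigl[q-\mathbb{T}^{\pi^\star}q\bigr].
\]
I then bound $\bigl|\mathbb{E}_{d_{\pi^\star}}[q-\mathbb{T}^{\pi^\star}q]\bigr|\le\|q-\mathbb{T}^{\pi^\star}q\|_{2,d_{\pi^\star}}$ and transfer this to $\mu$ via Definition~\ref{def:coverage}, which gives $\le\sqrt{C^\dagger}\,\|q-\mathbb{T}^{\pi^\star}q\|_{2,\mu}$. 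Combining with (ii) yields the claimed $\sqrt{C^\dagger\log(|\cQ||\Pi|/\delta)/(N(1-\gamma)^2)}$.

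For the second inequality (the evaluation variant, BCPE, returning $\widehat v(\pi)=\min_{q\in\cV(\pi,\epsilon)}q(s_0,\pi)$), the same telescoping applies with $\pi$ in place of $\pi^\star$. One sign is free by pessimism, and the other follows from the Bellman-error transfer, reading $C^\dagger$ as the coverage coefficient for the evaluated policy, or for $\pi^\star$ when $\pi=\pi^\star$ as used downstream.

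I expect the main obstacle to be the concentration step (ii). The difficulty is that $\cL(q,q,\pi)-\min_{q'}\cL(q',q,\pi)$ involves the data-dependent minimizer $q'$ and a double-sampling bias. I would handle this as in \citet{xie2021bellman}: use completeness so that $\mathbb{T}^\pi q\in\cQ$, and then apply the fast-rate least-squares generalization bound to the excess-risk difference, which turns an $\epsilon$ of order $1/N$ into a Bellman-error bound of order $1/\sqrt N$. Everything after that is deterministic.
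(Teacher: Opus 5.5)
Your proposal is correct and is essentially the paper's proof, which adapts the argument of \citet{xie2021bellman}: optimality of $\widehat{\pi}$ plus pessimism reduce the gap to a version-space quantity at $\pi^\star$, which is telescoped into Bellman errors under $d_{\pi^\star}$, moved to $\mu$ by Cauchy--Schwarz and $C^\dagger$, and controlled by your facts (i)--(ii), which the paper simply cites as Theorems A.1--A.2 of Xie et al.; the only difference is that you bound $V^{\pi^\star}-\min_{q\in\cV(\pi^\star,\epsilon)}q(s_0,\pi^\star)$ directly, while the paper bounds the whole width $\max-\min$ of $\cV(\pi^\star,\epsilon)$. Your calibration $\epsilon\asymp\log(|\cQ||\Pi|/\delta)/N$ is the right one (the theorem's $\epsilon\asymp\sqrt{\log/N}$ would only yield an $N^{-1/4}$ rate by this argument), but the stated $(1-\gamma)^{-2}$ under the root requires (ii) to be horizon-free, i.e.\ values normalized to $O(1)$: if, as you say, the value range $1/(1-\gamma)$ enters the variance term, the chain gives $(1-\gamma)^{-4}$ under the root instead, a looseness the paper's own proof shares.
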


    \begin{proof}
        This proof is mainly adapted from the proof of Theorem 1 in~\cite{xie2021bellman} to the finite-horizon case. For simplicity we only prove the first part of the lemma. The second part can be proved similarly using the pessimistic property of the value function $\widehat{v}$.

Using the optimality of $\hat{\pi}$, we have
\[
\max_{v \in \mathcal{Q}_{\pi, \epsilon_r}} v(s_0, \pi) - \min_{v \in \mathcal{Q}_{\hat{\pi}, \epsilon_r}} v(s_0, \hat{\pi}) \leq \max_{v \in \mathcal{Q}_{\pi, \epsilon_r}} v(s_0, \pi) - \min_{v \in \mathcal{Q}_{\pi, \epsilon_r}} v(s_0, \pi).
\]

Now, let $v_{\min}(\pi) \coloneqq \argmin_{v \in \mathcal{Q}_{\pi, \epsilon_r}} v(s_0, \pi)$ and $v_{\max}(\pi) \coloneqq \argmax_{v \in \mathcal{Q}_{\pi, \epsilon_r}} v(s_0, \pi)$. 

Using a standard reward decomposition argument~\cite{cai2020provably}, we have 
\begin{align}
&v_{1, max}(\pi) - v_{1,min}(\pi) \notag  \\
=~& v_{1,max} - v_1(\pi) + v_1(\pi) - v_{1,min} \notag \\
=~& \EE_{d_\pi}\left[\sum_{h=1}^{H} (v_{h,max} - \mathbb{T}^{\pi} v_{h+1,max}) - \sum_{h=1}^{H} (v_{h,min} - \mathbb{T}^{\pi} v_{h+1,min}) \right]\notag \\
\leq~& \sum_{h=1}^{H} \|v_{h,max} - \mathbb{T}^{\pi} v_{h+1,max}\|_{2,d^\pi} + \|v_{h,min} - \mathbb{T}^{\pi} v_{h+1,min}\|_{2,d^\pi} \notag \\
\leq~&  \sqrt{\mathcal{C}(d^\pi;\mu,\cV,\pi)} \sum_{i=1}^H (\|v_{h,max} - \mathbb{T}^{\pi} v_{h+1,max}\|_{2,\mu}+\|v_{h,min} - \mathbb{T}^{\pi} v_{h+1,min}\|_{2,\mu}) \notag \\
\leq~&  \frac{1}{1-\gamma}\sqrt{\mathcal{C}(d^\pi;\mu,\cV,\pi)\epsilon_b},
\\
\end{align}
holds under event $\mathcal{E}_2$ in Lemma~\ref{lemma:value_confidence_set} and $\mathcal{E}_3$ in Lemma~\ref{lemma:bellman_error_confidence_set}. The second inequality follows from the definition of $\mathcal{C}(d^\pi;\mu,\cV,\pi)$ and the last inequality follows from Lemma~\ref{lemma:value_confidence_set} and Lemma~\ref{lemma:bellman_error_confidence_set}.
Let $\pi=\pi^\star$ and plug in the definition of $\epsilon_b$, we complete the proof.

\end{proof}

\begin{algorithm}[H]
    \caption{Bellman-consistent Pessimism Evaluation}
    \label{alg:BCPE}
      \begin{algorithmic}[1]
        \STATE {\bf Input}:  Offline Dataset $\cD_{\text{off}}=\{\tau_k = \{(s^k_t,a^k_t)\}_{t=0}^{T}\}_{k=1}^K$, reward function $r$, policy $\pi$
        % \STATE Set $\hat{v}_{H+1}(\cdot) =0$
        % \FOR{horizon $h = H,\cdots, 1$}
        \STATE Set the loss function as
            \begin{equation}
                \cL(q,q',\pi;\cD) = \sum_{k=1}^{K}\sum_{t=0}^T \left(q_t(s^k_t,a^k_h) - (r(s^k_t,a^k_t) + \gamma q'(s^k_{t+1},\pi_{t+1}))\right)^2.
            \end{equation}
        \STATE Set the confidence set of value functions  as
            \begin{equation}
                \cV(\pi,\epsilon) =\left\{q\in\cV: \cL(q,q,\pi;\cD) - \min_{q'\in\cV} \cL(q',q,\pi;\cD) \leq \epsilon\right\}.
            \end{equation}
        \STATE Compute pessimistic value function as
            \begin{equation}
                \widehat{q} = \argmin_{q\in \cV(\pi,\epsilon)} q(s_0,{\pi}).
            \end{equation}
        \STATE {\bf Output}: $\widehat{v}$ and $\widehat{q}$.
      \end{algorithmic}
\end{algorithm}

\clearpage
\section{Proof of Theorem~\ref{theorem:1}}
\label{appendix:proof}

\begin{theorem}[Restatement of Theorem~\ref{theorem:1}]
    \label{theorem:1_restate}
    Suppose (1) $Q^\star \in \cQ, \pi^\star \in \Pi$, and (2) $\mathbb{T}^\pi q \in \mathcal{Q}, \forall \pi\in\Pi, q\in\cQ$. Also suppose the difference of return functions has a finite Eluder dimension $d_{\text{Elu}}(\Delta R,\alpha)$ and the underlying distribution of the offline dataset admit a finite coverage coefficient $C^\dagger$. Let $\beta_k = c_1\sqrt{\log (K |\Delta \cR|)/K}$ and $\epsilon = c_2\sqrt{\log(N|\Pi||\cQ|)/N}$, where $c_1,c_2$ are universal constants.
    Then the expected suboptimality of $\bar{\pi}$ from Algorithm~\ref{alg: PbRL} is upper bounded by 
    
    \begin{equation}
    \label{eqn:subopt_restate}
        \emph{\text{SubOpt}}(\bar{\pi}) \leq {\mathcal{O}}\left(\sqrt{\frac{C^\dagger\log (N|\cQ||\Pi|) }{N(1-\gamma)^2}} + \sqrt{\frac{ d_{\text{Elu}}(\Delta\cR,1/K)\log \left( K |\Delta\cR|  \right)}{K(1-\gamma)}}\right),
    \end{equation}

    where $N$ is the size of the offline dataset and $K$ is the number of queries.
    \end{theorem}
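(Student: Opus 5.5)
The plan is to decompose the suboptimality of each selected policy against the true return $R^\star$ into an offline (pessimism) error and a preference (reward-uncertainty) error, and then average over $k$. First I set up two high-probability events.

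\emph{Event $\mathcal{E}_{\rm pref}$.} Standard MLE concentration for the Bradley--Terry model over the finite class $\Delta\cR$ gives $R^\star\in\cC_k(\cR)$ for all $k\le K$ with the stated $\beta_k$, where $\kappa$ enters through the curvature of $\sigma$.

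\emph{Event $\mathcal{E}_{\rm off}$.} Lemma~\ref{lemma:bellman_consistent_pessimism}, applied uniformly over all candidate returns $R$ and all $\pi\in\Pi$ via a union bound, gives the following. For every $R$ and $\pi$, the BCP estimate $\widehat{v}^\pi_R$ is pessimistic, $\widehat{v}^\pi_R\le V^\pi_R$. Moreover, the BCP policy for $R$ satisfies $V^{\pi^\star}_R-\widehat{v}^{\widehat\pi_R}_R\le \varepsilon_{\rm off}:=O\bigl(\sqrt{C^\dagger\log(N|\cQ||\Pi|)/(N(1-\gamma)^2)}\bigr)$.

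On $\mathcal{E}_{\rm pref}$, the policy $\widehat\pi_{R^\star}$ lies in $\Pi_k$. Fix any $\pi\in\{\pi_k^1,\pi_k^2\}\subseteq\Pi_k$ and let $R'\in\cC_k(\cR)$ be the return for which $\pi=\widehat\pi_{R'}$. Write
\begin{equation*}
V^{\pi^\star}_{R^\star}-V^{\pi}_{R^\star}=\bigl(V^{\pi^\star}_{R^\star}-\widehat v^{\widehat\pi_{R^\star}}_{R^\star}\bigr)+\bigl(\widehat v^{\widehat\pi_{R^\star}}_{R^\star}-\widehat v^{\pi}_{R^\star}\bigr)+\bigl(\widehat v^{\pi}_{R^\star}-V^{\pi}_{R^\star}\bigr).
\end{equation*}
The first term is at most $\varepsilon_{\rm off}$ and the third is nonpositive by pessimism. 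For the middle term, $\pi$ maximizes $\widehat v_{R'}$, so $\widehat v^{\widehat\pi_{R^\star}}_{R'}-\widehat v^{\pi}_{R'}\le 0$. Adding this gives
\begin{equation*}
\widehat v^{\widehat\pi_{R^\star}}_{R^\star}-\widehat v^{\pi}_{R^\star}\le \bigl(\widehat v^{\widehat\pi_{R^\star}}_{R^\star}-\widehat v^{\widehat\pi_{R^\star}}_{R'}\bigr)-\bigl(\widehat v^{\pi}_{R^\star}-\widehat v^{\pi}_{R'}\bigr).
\end{equation*}
The right-hand side is a difference of value differences over two policies in $\Pi_k$ and two returns in $\cC_k(\cR)$. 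By the selection rule \eqref{eqn:compute policies}, it is bounded by the selected width $w_k$, the maximal disagreement attained by $(\pi_k^1,\pi_k^2)$. Therefore $\mathrm{SubOpt}(\bar\pi)\le \varepsilon_{\rm off}+\frac1K\sum_k w_k$.

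It remains to bound $\sum_k w_k$ by an Eluder-dimension argument. The width $w_k$ is (up to the estimation slack from replacing true values by $\widehat v$, which is again controlled by $\varepsilon_{\rm off}$) the expected disagreement $\mathbb{E}_{\tau^1\sim\pi_k^1,\tau^2\sim\pi_k^2}\bigl[\Delta R_1(\tau^1,\tau^2)-\Delta R_2(\tau^1,\tau^2)\bigr]$ between two functions of the confidence set. The queried pair is drawn from exactly these policies. The standard Russo--Van Roy lemma then says the number of rounds with width exceeding $\alpha$ is at most $O\bigl(\beta_K d_{\rm Elu}(\Delta\cR,\alpha)/\alpha^2\bigr)$, which yields
\begin{equation*}
\sum_{k=1}^K w_k\le O\Bigl(\sqrt{K\,\beta\, d_{\rm Elu}(\Delta\cR,1/K)}\Bigr)+\text{martingale term}.
\end{equation*}
The martingale term comes from the sampled trajectories versus their expectations and is handled by Azuma, with range $O(1/(1-\gamma))$ since discounted returns lie in $[0,1/(1-\gamma)]$. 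Here $\beta\asymp\log(K|\Delta\cR|)$ in the unnormalized sum-of-squares form of \eqref{eqn:c_k}.

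Dividing by $K$ gives the preference term. I expect this pigeonhole step to be the main obstacle. Getting exactly the $1/(1-\gamma)$ factor inside the root, rather than $1/(1-\gamma)^2$, requires using the scale of the differences carefully, e.g.\ normalizing returns and exploiting that the confidence widths are in return units. Passing between expected widths and realized trajectory-level squared errors also needs a careful martingale/Freedman argument. I would first try to bound the widths by the realized squared errors in \eqref{eqn:c_k} and accept a possibly loose horizon dependence before refining the constants. Finally, a union bound over $\mathcal{E}_{\rm pref}$ and $\mathcal{E}_{\rm off}$ with $\delta=1/K$ converts the high-probability bound to the stated expectation bound.
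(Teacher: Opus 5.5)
Your argument has the same skeleton as the paper's proof: the confidence event for $R^\star$, BCP pessimism, reduction to the width of the selected pair, Azuma to pass from policy values to sampled trajectories, Lemma~\ref{lemma:eluder} on the realized query pairs, and averaging over $k$ with $\delta=1/K$. Your first step, however, uses a different comparator, and yours is the more careful choice. The paper telescopes $V^{\pi^\star}_{R^\star}-V^{\tilde\pi}_{R^\star}$ through $\widehat v^{\pi^\star}_{R^\star}$, $\widehat v^{\pi^\star}_{\tilde R}$, $\widehat v^{\tilde\pi}_{\tilde R}$, $\widehat v^{\tilde\pi}_{R^\star}$. It then bounds the disagreement on the pair $(\pi^\star,\tilde\pi)$ by the width of $(\pi_k^1,\pi_k^2)$ ``by definition''. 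That step tacitly needs $\pi^\star\in\Pi_k$, which need not hold, since $\Pi_k$ contains only BCP outputs. Comparing against $\widehat\pi_{R^\star}$ instead only needs $\widehat\pi_{R^\star}\in\Pi_k$, which is automatic once $R^\star\in\cC_k(\cR)$. Your first term still needs coverage of $\pi^\star$ only. It is even at most the paper's $V^{\pi^\star}_{R^\star}-\widehat v^{\pi^\star}_{R^\star}$, because $\widehat\pi_{R^\star}$ maximizes $\widehat v_{R^\star}$.

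Two issues are shared with the paper rather than resolved by it.

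First, converting $w_k$ from $\widehat v$ to true values needs $V^{\pi}_{R}-\widehat v^{\pi}_{R}\le\varepsilon_{\rm off}$ for $\pi\in\{\pi_k^1,\pi_k^2\}$ and $R\in\cC_k(\cR)$. This is not part of your $\mathcal{E}_{\rm off}$, and it does not follow from $C^\dagger$, which concerns $\pi^\star$ alone. The paper simply asserts $|V^\pi_R-\widehat v^\pi_R|\le\mathcal{E}_{\text{off}}$ for all $\pi$ and $R$.

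Second, the horizon factor you flag is not obtained by any device in the paper. It writes the Azuma term as $16\sqrt{K\log(4/\delta)/(1-\gamma)}$ and the Eluder term as $c_1\sqrt{\kappa\, d_{\text{Elu}} K\log(K|\Delta\cR|/\delta)}$, with no $(1-\gamma)$ in the latter. Increments of range $O(1/(1-\gamma))$ put $(1-\gamma)^{-2}$ under the root. So the looser bound you plan to prove first is what this argument honestly gives.

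You do not, however, need any Freedman-type step. Fix the maximizing $(R_1^k,R_2^k)$ of the selection rule; these are determined before $\tau^{k,1},\tau^{k,2}$ are drawn. Apply Azuma to the linear expected-minus-realized differences at those returns. Then apply Lemma~\ref{lemma:eluder} directly to the realized pairs on which $\cC_k(\cR)$ is built.
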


    \begin{remark}
    In Theorem~\ref{theorem:1_restate} we consider finite function classes for policy $\Pi$, Q-value $\cQ$ and return function $\cR$. However, it can be readily extended to infinite function classes by using the covering number of the function classes, as done in~\cite{chen2022human,xie2021bellman}. We also remark that while we consider the realizable and Bellman-complete setting where $Q^\star\in \cQ$ and $\mathbb{T}Q \in \mathcal{Q}$ for simplicity, we can extend the result to \textit{approximate} realizable and Bellman-complete setting as in~\cite{xie2021bellman}.  
    \end{remark}
    \begin{remark}
        The suboptimality bound uses the Eluder dimension of the difference function class $\Delta \cR$ of the original return function class $\cR$. This is because we can only determine  $R(\tau_1)-R(\tau_2)$ from the preference query between $\tau_1$ and $\tau_2$ and the absolute value for $R(\tau)$ can be free to choose.
    \end{remark}
\begin{proof}
    
    For simplicity we let $V^\pi \coloneqq V_1^\pi(s_1)$.

    For any return function $\tilde{R}\in \cC_k(\cR)$ and the policy $\tilde{\pi}= \text{BCP}(\cD,\tilde{R})$ generated by Algorithm~\ref{alg:BCP}, we have
    \begin{align}
        & V^{\pi^\star}_{{{R}}^\star} - V^{\tilde{\pi}}_{{{R}}^\star} \\
        = &  V^{\pi^\star}_{{{R}}^\star} - \widehat{v}^{\pi^\star}_{{{R}}^\star} + \widehat{v}^{\pi^\star}_{{{R}}^\star} - \widehat{v}^{\pi^\star}_{\tilde{{{R}}}} + \widehat{v}^{\pi^\star}_{\tilde{{{R}}}} - \widehat{v}^{\tilde{\pi}}_{\tilde{{{R}}}} + \widehat{v}^{\tilde{\pi}}_{\tilde{{{R}}}} -\widehat{v}^{\tilde{\pi}}_{{{R}}^\star} + \widehat{v}^{\tilde{\pi}}_{{{R}}^\star} - V^{\tilde{\pi}}_{{{R}}^\star} \notag \\
        \leq &   V^{\pi^\star}_{{{R}}^\star} - \widehat{v}^{\pi^\star}_{{{R}}^\star} + \widehat{v}^{\pi^\star}_{{{R}}^\star} - \widehat{v}^{\pi^\star}_{\tilde{{{R}}}} + \widehat{v}^{\pi^\star}_{\tilde{{{R}}}} - \widehat{v}^{\tilde{\pi}}_{\tilde{{{R}}}} + \widehat{v}^{\tilde{\pi}}_{\tilde{{{R}}}} -\widehat{v}^{\tilde{\pi}}_{{{R}}^\star} + 0\notag\\
        \leq &   V^{\pi^\star}_{{{R}}^\star} - \widehat{v}^{\pi^\star}_{{{R}}^\star} + \widehat{v}^{\pi^\star}_{{{R}}^\star} - \widehat{v}^{\pi^\star}_{\tilde{{{R}}}} + 0 \quad \quad \quad \ \ \  + \widehat{v}^{\tilde{\pi}}_{\tilde{{{R}}}} -\widehat{v}^{\tilde{\pi}}_{{{R}}^\star} \notag\\
        % & =   V^{\pi^\star}_{{{R}}^\star} - \widehat{v}^{\pi^\star}_{{{R}}^\star} + \EE_{\tau^\star\sim \pi^\star, \tilde{\tau}\sim \tilde{\pi}}\left(\widehat{v}_{{{R}}^*}(\tau^\star) - \widehat{v}_{\tilde{{{R}}}}(\tau^\star) + \widehat{v}_{\tilde{{{R}}}}(\tilde{\tau}) -\widehat{v}_{{{R}}^\star}(\tilde{\tau})\right) \notag\\
        \leq &   V^{\pi^\star}_{{{R}}^\star} - \widehat{v}^{\pi^\star}_{{{R}}^\star} + \max_{{{R}}_1,{{R}}_2\in \cC_k(\cR)} \left(\widehat{v}^{\pi^\star}_{{{R}}_1} - \widehat{v}^{\pi^\star}_{{{R}}_2} + \widehat{v}^{\tilde{\pi}}_{{{R}}_2} -\widehat{v}^{\tilde{\pi}}_{{{R}}_1}\right) \notag\\
        \leq &   V^{\pi^\star}_{{{R}}^\star} - \widehat{v}^{\pi^\star}_{{{R}}^\star} + \max_{{{R}}_1,{{R}}_2\in \cC_k(\cR)}\left(\widehat{v}^{\tilde{\pi}^{k,1}}_{{{R}}_1} - \widehat{v}^{\tilde{\pi}^{k,1}}_{{{R}}_2} + \widehat{v}^{\tilde{\pi}^{k,2}}_{{{R}}_2} -\widehat{v}^{\tilde{\pi}^{k,2}}_{{{R}}_1}\right), \label{eqn:subopt1}
    \end{align}
    which hold under event $\mathcal{E}_1$ in Lemma~\ref{lemma:return_confidence_set}. The first inequality follows from the pessimistic property of $\widehat{v}$, the second inequality follows from the fact that $\tilde{\pi}$ is the optimal policy with respect to $\widehat{v}_{\tilde{{{R}}}}$. The third inequality holds since $\tilde{{{R}}}, {{R}}^\star \in \cC_k(\cR)$ and the last inequality follows from the definition of $\tilde{\pi}^{k,1},\tilde{\pi}^{k,2} $.

    Following Lemma~\ref{lemma:bellman_consistent_pessimism}, we have  for all policy $\pi$ and reward function ${{R}}$, the following holds with probability at least $1-2\delta$:
    $$|V^\pi_{{R}} - \widehat{v}^\pi_{{R}}|\leq c\cdot \sqrt{\frac{C^\dagger \log (N|\Pi||\cQ|)}{N(1-\gamma)^2}} \coloneqq \mathcal{E}_{\text{off}}.$$
    Then we have
    \begin{align}
        & V^{\pi^\star}_{{{R}}^\star} - V^{\tilde{\pi}}_{{{R}}^\star}  \\
        \leq~&  V^{\pi^\star}_{{{R}}^\star} - \widehat{v}^{\pi^\star}_{{{R}}^\star} + \max_{{{R}}_1,{{R}}_2\in \cC_k(\cR)}\left(\widehat{v}^{\tilde{\pi}^{k,1}}_{{{R}}_1} - \widehat{v}^{\tilde{\pi}^{k,1}}_{{{R}}_2} + \widehat{v}^{\tilde{\pi}^{k,2}}_{{{R}}_2} -\widehat{v}^{\tilde{\pi}^{k,2}}_{{{R}}_1}\right) \notag\\
        \leq~&  \mathcal{E}_{\text{off}} + \max_{{{R}}_1,{{R}}_2\in \cC_k(\cR)}\left(\left({V}^{\tilde{\pi}^{k,1}}_{{{R}}_1} - {V}^{\tilde{\pi}^{k,1}}_{{{R}}_2} + {V}^{\tilde{\pi}^{k,2}}_{{{R}}_2} -{V}^{\tilde{\pi}^{k,2}}_{{{R}}_1}\right)+ \left(\widehat{v}^{\tilde{\pi}^{k,1}}_{{{R}}_1}-{V}^{\tilde{\pi}^{k,1}}_{{{R}}_1}\right) \right. \notag\\
        ~& \left. + \left(\widehat{v}^{\tilde{\pi}^{k,1}}_{{{R}}_2} - {V}^{\tilde{\pi}^{k,1}}_{{{R}}_2}\right) + \left( \widehat{v}^{\tilde{\pi}^{k,2}}_{{{R}}_2} -  {V}^{\tilde{\pi}^{k,2}}_{{{R}}_2}\right) + \left(\widehat{v}^{\tilde{\pi}^{k,2}}_{{{R}}_1}- {V}^{\tilde{\pi}^{k,2}}_{{{R}}_1}\right)
        \right) \notag\\
         \leq~& \mathcal{E}_{\text{off}} + \max_{{{R}}_1,{{R}}_2\in \cC_k(\cR)}\left(\left({V}^{\tilde{\pi}^{k,1}}_{{{R}}_1} - {V}^{\tilde{\pi}^{k,1}}_{{{R}}_2} + {V}^{\tilde{\pi}^{k,2}}_{{{R}}_2} -{V}^{\tilde{\pi}^{k,2}}_{{{R}}_1}\right) +4\mathcal{E}_{\text{off}}\right) \notag\\
         =~& 5\mathcal{E}_{\text{off}} +  \max_{{{R}}_1,{{R}}_2\in \cC_k(\cR)}\left({V}^{\tilde{\pi}^{k,1}}_{{{R}}_1} - {V}^{\tilde{\pi}^{k,1}}_{{{R}}_2} + {V}^{\tilde{\pi}^{k,2}}_{{{R}}_2} -{V}^{\tilde{\pi}^{k,2}}_{{{R}}_1}\right). \label{eqn:subopt_decompose}
    \end{align}
    Consider the online preference-based regret as 
    \begin{equation}
        \text{Reg}(K) \coloneqq \frac{1}{2} \sum_{k=1}^K {\left(V^{\pi^\star} - V^{\tilde{\pi}^{k,1}} + V^{\pi^\star} - V^{\tilde{\pi}^{k,2}}\right)},
    \end{equation}
    we have
    \begin{align}
        \text{Reg}(K)\\
        \leq~&  \sum_{k=1}^K {\max_{{{R}}_1,{{R}}_2\in \cC_k(\cR)}\left({V}^{\tilde{\pi}^{k,1}}_{{{R}}_1} - {V}^{\tilde{\pi}^{k,1}}_{{{R}}_2} + {V}^{\tilde{\pi}^{k,2}}_{{{R}}_2} -{V}^{\tilde{\pi}^{k,2}}_{{{R}}_1}\right)} + 5K\mathcal{E}_{\text{off}}   \notag\\
        =~&  \sum_{k=1}^K \max_{{{R}}_1,{{R}}_2\in \cC_k(\cR)}\left\{\left({V}_{{{R}}_1}({\tau^{k,1}}) - {V}_{{{R}}_2}({\tau^{k,1}}) + {V}_{{{R}}_2}({\tau^{k,2}}) -{V}_{{{R}}_1}({\tau^{k,2}})\right) +\right. \notag\\
        & +({V}^{\tilde{\pi}^{k,1}}_{{{R}}_1} - V_{{{R}}_1}({\tau^{k,1}})) - ({V}^{\tilde{\pi}^{k,1}}_{{{R}}_2} - V_{{{R}}_2}({\tau^{k,1}})) \notag\\
        & \left. +({V}^{\tilde{\pi}^{k,2}}_{{{R}}_1} - V_{{{R}}_1}({\tau^{k,2}})) - ({V}^{\tilde{\pi}^{k,2}}_{{{R}}_2} - V_{{{R}}_2}({\tau^{k,2}})) \right\} + 5K\mathcal{E}_{\text{off}}  \notag\\
        \leq & \sum_{k=1}^K \max_{{{R}}_1,{{R}}_2\in \cC_k(\cR)}\left({V}_{{{R}}_1}({\tau^{k,1}}) - {V}_{{{R}}_2}({\tau^{k,1}}) + {V}_{{{R}}_2}({\tau^{k,2}}) -{V}_{{{R}}_1}({\tau^{k,2}})\right) \notag\\
        & + 16\sqrt{\frac{K}{1-\gamma}\log{(\frac{4}{\delta})}} + 5K\mathcal{E}_{\text{off}} \notag\\
        =~& \sum_{k=1}^K \max_{{{R}}_1,{{R}}_2\in \cC_k(\cR)}\left((R_1({\tau^{k,1}}) -R_1({\tau^{k,2}})) - (R_2({\tau^{k,1}}) -R_2({\tau^{k,2}}))\right) \\
        & + 16\sqrt{\frac{K}{1-\gamma}\log{(\frac{4}{\delta})}} + 5K\mathcal{E}_{\text{off}} \notag\\
        \leq~&  c_1 \sqrt{\kappa d_{\Delta\cR} K  \log \left( K |\Delta\cR| / \delta \right)}   + 16\sqrt{\frac{K}{1-\gamma}\log{(\frac{4}{\delta})}} + 5K\mathcal{E}_{\text{off}}. 
        % = & \sum_{k=1}^K \max_{{{R}}_1,{{R}}_2\in \cC_k(\cR)} (\phi(\tau^{k,1})-\phi(\tau^{k,2}))^{\top}({{R}}_1 - {{R}}_2) + 5K\mathcal{E}_{\text{off}} + 16\sqrt{H^2K\log{(\frac{4}{\delta})}} \\
        % \leq & \sum_{k=1}^K \max_{{{R}}_1,{{R}}_2\in \cC_k(\cR)} \|\phi(\tau^{k,1})-\phi(\tau^{k,2})\|_{\Sigma_k^{-1}} \|{{R}}_1 - {{R}}_2\|_{\Sigma_k} + 5K\mathcal{E}_{\text{off}} + 16\sqrt{H^2K\log{(\frac{4}{\delta})}} \\
        % \leq & B(\delta) \sum_{k=1}^K \|\phi(\tau^{k,1})-\phi(\tau^{k,2})\|_{\Sigma_k^{-1}} + 5K\mathcal{E}_{\text{off}} + 16\sqrt{H^2K\log{(\frac{4}{\delta})}} \\
        % \leq & B(\delta) \sqrt{K  \sum_{k=1}^K\|\phi(\tau^{k,1})-\phi(\tau^{k,2})\|^2_{\Sigma_k^{-1}}} + 5K\mathcal{E}_{\text{off}} + 16\sqrt{H^2K\log{(\frac{4}{\delta})}} \\
        % \leq & B(\delta) \sqrt{2Kd\log(1+\frac{KH}{d})} + 5K\mathcal{E}_{\text{off}} + 16\sqrt{H^2K\log{(\frac{4}{\delta})}} \\
        % \leq & c \cdot \sqrt{d^2H^2K\zeta'_2} + 5K\mathcal{E}_{\text{off}} + 16\sqrt{H^2K\log{(\frac{4}{\delta})}} \\
        % \leq & c' \cdot \sqrt{d^2H^2K\zeta_2} + 5K\mathcal{E}_{\text{off}}, \\
    \end{align}
    % where $\zeta'_2 = \log{(T(1+2T)/\delta)}\log(1+\frac{KH}{d})$ and $\zeta_2 = \zeta'_2 + \log{(4/\delta)}$. 
    The first inequality follows from ~\Eqref{eqn:subopt_decompose}. The second inequality follows from Azuma-Hoeffding's inequality and the fact that $V_{{R}}(\tau) - V_{{R}}^\pi$ is a martingale when $\tau\sim\pi$. Please refer to \citet{cai2020provably} for a detailed derivation. The last inequality follows directly from Lemma~\ref{lemma:online_self_concentration}.
    
    % The third inequality follows from Cauchy-Schwarz inequality and the fourth and fifth inequalties ollows from Lemma~\ref{lemma:online_self_concentration}. The last inequality combines the first term and the third term together.
    
    Finally, set $\delta=1/K$ and follow a standard argument for regret to PAC conversion~\citep{jin2018q}, we can show that the expected suboptimality of average policy $\bar{\pi}$ generated by Algorithm~\ref{alg: PbRL} is upper bounded by

    \begin{align*}
        \text{SubOpt}(\bar{\pi}) \leq c_0\cdot \sqrt{\frac{C^\dagger\log (N|\cV||\Pi|) }{N(1-\gamma)^2}} + c_1 \cdot \sqrt{\frac{d_{\text{Elu}}(\Delta\cR,1/K)\log \left( K |\Delta\cR|  \right)}{K(1-\gamma)}}.
    \end{align*}
\end{proof}

\clearpage

\begin{theorem}[Performance Guarantees with Pure Offline Queries]
    \label{theorem:1_relax}
    Suppose (1) $Q^\star \in \cQ, \pi^\star \in \Pi$, and (2) $\mathbb{T}^\pi q \in \mathcal{Q}, \forall \pi\in\Pi, q\in\cQ$. Also we suppose the difference of return functions has a finite Eluder dimension $d_{\text{Elu}}(\Delta R,\alpha)$ and the underlying distribution of the offline dataset admits a finite coverage coefficient $C^\dagger$. Let $\beta_k = c_1\sqrt{\log (K |\Delta \cR|)/K}$ and $\epsilon = c_2\sqrt{\log(N|\Pi||\cQ|)/N}$, where $c_1,c_2$ are universal constants.
    Then the expected suboptimality of $\bar{\pi}$ from Algorithm~\ref{alg: PbRL} with pure offline queries is upper bounded by 
    \begin{equation}
    \label{eqn:subopt_restate}
        \emph{\text{SubOpt}}(\bar{\pi}) \leq {\mathcal{O}}\left(\sqrt{\frac{C^\dagger\log (N|\cQ||\Pi|) }{N(1-\gamma)^2}} + \sqrt{\frac{ d_{\text{Elu}}(\Delta\cR,1/K)\log \left( K |\Delta\cR|  \right)}{K(1-\gamma)}} + \sqrt{\frac{C\log (N|\Delta R|) }{N(1-\gamma)}}\right),
    \end{equation}
    where $N$ is the size of the offline dataset, $K$ is the number of queries and 
     $C = \max_{s} \frac{d^{\pi^*}(s)}{\mu(s)}$, where $\mu$ is the distribution that generates the dataset $\mathcal{D}$.
    \end{theorem}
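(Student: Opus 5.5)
The idea is to reduce to the proof of Theorem~\ref{theorem:1_restate}. The only difference is that the queried trajectories $\tau^{k,1},\tau^{k,2}$ are no longer rollouts of the selected policies $\tilde{\pi}^{k,1},\tilde{\pi}^{k,2}$. Instead they are trajectories drawn from the dataset $\cD$, which is generated by $\mu$. I keep the decomposition in \Eqref{eqn:subopt1} and \Eqref{eqn:subopt_decompose} verbatim. That part uses only pessimism (Lemma~\ref{lemma:bellman_consistent_pessimism}), the optimality of $\tilde{\pi}$ for $\widehat{v}_{\tilde{R}}$, the event $R^\star\in\cC_k(\cR)$ (Lemma~\ref{lemma:return_confidence_set}), and the definition of the explorative pair. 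This yields the offline error $5\mathcal{E}_{\text{off}}$ per round plus the policy-level disagreement term
\[
\max_{R_1,R_2\in\cC_k(\cR)}\Big(\Delta_{R_1-R_2}(\tilde{\pi}^{k,1})-\Delta_{R_1-R_2}(\tilde{\pi}^{k,2})\Big),
\]
where $\Delta_{R}(\pi):=V^\pi_R$. Summing over $k$ and converting regret to PAC as before gives the first term of the bound.

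The new step is to replace the policy-level disagreement by a disagreement on the offline trajectories that are actually queried. Write $g=R_1-R_2\in\Delta\cR$, so the disagreement equals $\EE_{\tau\sim\tilde{\pi}^{k,1}}g(\tau)-\EE_{\tau\sim\tilde{\pi}^{k,2}}g(\tau)$. The in-dataset selection picks the pair $(\tau^{k,1},\tau^{k,2})\in\cD$ that maximizes the empirical counterpart of \Eqref{eqn:compute policies}, with the candidate policies' occupancies replaced by trajectories in $\cD$. I would split the gap as
\[
\EE_{d^{\pi}}[g]=\EE_{\mu}\Big[\tfrac{d^{\pi}}{\mu}g\Big]=\Big(\EE_{\mu}\Big[\tfrac{d^{\pi}}{\mu}g\Big]-\tfrac1N\textstyle\sum_{n}\tfrac{d^{\pi}}{\mu}(\tau_n)g(\tau_n)\Big)+\tfrac1N\textstyle\sum_n\tfrac{d^{\pi}}{\mu}(\tau_n)g(\tau_n).
\]
The empirical term is bounded by the disagreement on the maximizing in-dataset pair, up to the importance weight. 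It therefore enters the Eluder-dimension potential argument of Lemma~\ref{lemma:online_self_concentration} exactly as before, and produces the preference error $\sqrt{d_{\text{Elu}}\log(K|\Delta\cR|)/(K(1-\gamma))}$. The martingale/Azuma step is no longer needed, because the queried trajectories are deterministic functions of the data.

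The deviation term is a uniform concentration over the finite class $\Delta\cR$ with $N$ i.i.d. trajectories. The weights are bounded by $C=\max_s d^{\pi^\star}(s)/\mu(s)$ after restricting to near-optimal candidates; I would use the state-wise ratio through a per-step decomposition of the return. The values $g$ are bounded by $1/(1-\gamma)$, and the variance is bounded by $C\cdot\EE_{\mu}g^2\lesssim C/(1-\gamma)$ via Bernstein's inequality. A union bound over $\Delta\cR$ then gives $\sqrt{C\log(N|\Delta\cR|)/(N(1-\gamma))}$, which is the third term. Averaging over $k$ leaves this term unchanged.

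The main obstacle is justifying the change-of-measure with the concentrability $C$ defined relative to $d^{\pi^\star}$, rather than to the arbitrary candidates $\tilde{\pi}^{k,i}$. My first attempt would use the structure of \Eqref{eqn:subopt1}: there, only the $\pi^\star$-side disagreement $\widehat{v}^{\pi^\star}_{R_1}-\widehat{v}^{\pi^\star}_{R_2}$ genuinely needs to be controlled, while the $\tilde{\pi}$ side is handled by pessimism and the in-dataset maximization. This way the importance weighting is applied only to $d^{\pi^\star}$. If that fails, I would settle for $\max_{\pi\in\Pi_k}$ concentrability and argue that $\Pi_k$ shrinks toward $\pi^\star$.
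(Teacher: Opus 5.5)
Your overall architecture matches the paper's proof. You keep \Eqref{eqn:subopt1}--\Eqref{eqn:subopt_decompose}, add a per-round gap for replacing rollouts by dataset trajectories, and run Lemma~\ref{lemma:online_self_concentration} on the queried in-dataset pairs $(\widehat{\tau}^{k,1},\widehat{\tau}^{k,2})$.

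\textbf{The change of measure.} This is where the gap is, and your first attempt fails. In \Eqref{eqn:subopt1}, pessimism and the optimality of $\tilde\pi$ are each used once, to drop $\widehat{v}^{\tilde\pi}_{R^\star}-V^{\tilde\pi}_{R^\star}\le 0$ and $\widehat{v}^{\pi^\star}_{\tilde R}-\widehat{v}^{\tilde\pi}_{\tilde R}\le 0$. What remains is the difference of differences $(\widehat{v}^{\pi^\star}_{R^\star}-\widehat{v}^{\pi^\star}_{\tilde R})-(\widehat{v}^{\tilde\pi}_{R^\star}-\widehat{v}^{\tilde\pi}_{\tilde R})$. Neither half can be controlled on its own. $\cC_k(\cR)$ constrains only elements of $\Delta\cR$, so $R_1-R_2$ may carry an arbitrary constant offset. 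Hence $\EE_{\tau\sim\pi^\star}[R_1(\tau)-R_2(\tau)]$ by itself is neither bounded nor chargeable to the Eluder potential. To compare the $\tilde\pi$ half (after the last step, the $\tilde\pi^{k,1},\tilde\pi^{k,2}$ halves) with returns of dataset trajectories, you must transport $d^{\tilde\pi^{k,i}}$ onto $\mu$. But $C=\max_s d^{\pi^\star}(s)/\mu(s)$ says nothing about those policies. Indeed, \Eqref{eqn:subopt_decompose}, which you keep verbatim, already invokes $|V^\pi_R-\widehat v^\pi_R|\le\mathcal{E}_{\text{off}}$ for $\pi=\tilde\pi^{k,i}$, a coverage requirement beyond $C^\dagger$.

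Your fallback, and your remark that the weights are bounded by $C$ ``after restricting to near-optimal candidates,'' do not repair this. $\Pi_k$ being near $\pi^\star$ in value gives no bound on occupancy ratios. The paper closes this step by fiat. It restricts the query policies to $\Pi_{\text{covered}}=\{\pi:\max_s d^{\pi}(s)/\mu(s)\le C\}$, charges $K\mathcal{E}_{\text{gap}}$ to the regret, keeps the other terms of the online proof (including the Azuma term, which you may harmlessly drop), and omits the derivation of $\mathcal{E}_{\text{gap}}$. You need that restriction, or an explicit concentrability assumption over the candidate policies, as an extra ingredient; it does not follow from the stated hypotheses. Once the weights depend on the data-dependent $\tilde\pi^{k,i}$, your union bound must also range over $\Pi$, giving $\log(N|\Delta\cR||\Pi|)$.

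\textbf{The third term.} Your quantitative route to it also has holes. The state-wise $C$ bounds neither the trajectory-level weight $d^{\pi}(\tau)/\mu(\tau)$, a product of per-step action ratios, nor a state-action ratio. If you switch to a per-step decomposition, you need a Markovian reward structure that the trajectory-level class $\Delta\cR$ does not assume. The empirical term then becomes a weighted average of per-step reward differences over dataset transitions. That average is not bounded by $\max_{n,m}$ of trajectory-pair disagreements, so it no longer feeds the Eluder argument.

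``Up to the importance weight'' must be done with self-normalized weights, for which $\sum_{n,m}\tilde w^{1}_n\tilde w^{2}_m\,[g(\tau_n)-g(\tau_m)]\le\max_{n,m}[g(\tau_n)-g(\tau_m)]$. A multiplicative weight would instead put a factor $C$ in front of the preference error.

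Finally, $g$ is only bounded by $O(1/(1-\gamma))$, so $\EE_\mu g^2$ is $O(1/(1-\gamma)^2)$ in general. Bernstein then yields $\sqrt{C\log(N|\Delta\cR|)/(N(1-\gamma)^2)}$, not the stated $\sqrt{C\log(N|\Delta\cR|)/(N(1-\gamma))}$. The paper asserts the latter rate without proof, so your variance bound would need a genuinely new argument to recover it.
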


\begin{proof}
The main difference between using pure offline queries and using online queries is that we have to use trajectories sampled from the dataset $\widehat{\tau}^{k,1}, \widehat{\tau}^{k,2}$ instead of online sampled trajectories $\tau^{k,1}, \tau^{k,2}$.
This incurs an additional performance gap of $\mathcal{E}_{\text{gap}}=\sqrt{\frac{C\log (N|\Delta R|) }{N(1-\gamma)}}$, since we need to refine our query policies within the covered policy set
$$
\Pi_{\text{covered}} = \left\{\pi \mid \max_{s} \frac{d^{\pi}(s)}{\mu(s)} \leq C\right\}.
$$
The proof for $\mathcal{E}_{\text{gap}}$ is the same as standard offline guarantees, and are omitted for simplicity. 
Then similar to the proof of Theorem~\ref{theorem:1_restate}, the regret can be bounded as 
    \begin{align}
        \text{Reg}(K)\notag\\
        \leq ~& \sum_{k=1}^K \max_{{{R}}_1,{{R}}_2\in \widehat{\cC}_k(\cR)}\left((R_1({\widehat{\tau}^{k,1}}) -R_1({\widehat{\tau}^{k,2}})) - (R_2({\widehat{\tau}^{k,1}}) -R_2({\widehat{\tau}^{k,2}}))\right) \notag\\
        & + K\mathcal{E}_{\text{gap}} + 16\sqrt{\frac{K}{1-\gamma}\log{(\frac{4}{\delta})}} + 5K\mathcal{E}_{\text{off}}.
        % \leq ~& \sum_{k=1}^K \max_{{{R}}_1,{{R}}_2\in \cC_k(\cR)}\left((R_1({\tau^{k,1}}) -R_1({\tau^{k,2}})) - (R_2({\tau^{k,1}}) -R_2({\tau^{k,2}}))\right) \notag\\
        % & + 16\sqrt{\frac{K}{1-\gamma}\log{(\frac{4}{\delta})}} + 5K\mathcal{E}_{\text{off}} \notag \\
    \end{align}
    Then following the proof of Theorem~\ref{theorem:1_restate}, we have
    \begin{align*}
        \text{SubOpt}(\bar{\pi}) \leq c_0\cdot \sqrt{\frac{C^\dagger\log (N|\cV||\Pi|) }{N(1-\gamma)^2}} + c_1 \cdot \sqrt{\frac{d_{\text{Elu}}(\Delta\cR,1/K)\log \left( K |\Delta\cR|  \right)}{K(1-\gamma)}} + c_2\cdot \sqrt{\frac{C\log (N|\Delta R|) }{N(1-\gamma)}}.
    \end{align*}
\end{proof}

\clearpage
\section{Performance Guarantees with Pure Offline Queries}
\label{app:pure_offline}
In pure offline settings, we have the following theorem.
\begin{theorem}[Performance Guarantees with Pure Offline Queries]
    \label{theorem:1_relax}
Suppose (1) $Q^\star \in \cQ, \pi^\star \in \Pi$, and (2) $\mathbb{T}^\pi q \in \mathcal{Q}, \forall \pi\in\Pi, q\in\cQ$. Also we suppose the difference of return functions has a finite Eluder dimension $d_{\text{Elu}}(\Delta R,\alpha)$ and the underlying distribution of the offline dataset admits a finite coverage coefficient $C^\dagger$. Let $\beta_k = c_1\sqrt{\log (K |\Delta \cR|)/K}$ and $\epsilon = c_2\sqrt{\log(N|\Pi||\cQ|)/N}$, where $c_1,c_2$ are universal constants.
    Then the expected suboptimality of $\bar{\pi}$ from Algorithm~\ref{alg: PbRL} with pure offline queries is upper bounded by 
    \begin{equation}
    \label{eqn:subopt_restate}
        \emph{\text{SubOpt}}(\bar{\pi}) \leq {\mathcal{O}}\left(\sqrt{\frac{C^\dagger\log (N|\cQ||\Pi|) }{N(1-\gamma)^2}} + \sqrt{\frac{ d_{\text{Elu}}(\Delta\cR,1/K)\log \left( K |\Delta\cR|  \right)}{K(1-\gamma)}} + \sqrt{\frac{C\log (N|\Delta R|) }{N(1-\gamma)}}\right),
    \end{equation}
    where $N$ is the size of the offline dataset, $K$ is the number of queries and 
     $C = \max_{s} \frac{d^{\pi^*}(s)}{\mu(s)}$, where $\mu$ is the distribution that generates the dataset $\mathcal{D}$.
    \end{theorem}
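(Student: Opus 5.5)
The plan is to reduce the pure-offline statement to the proof of Theorem~\ref{theorem:1_restate} and account separately for the cost of drawing query trajectories from $\cD$ rather than rolling out $\pi_k^1,\pi_k^2$. Steps 1--3 reuse that proof essentially verbatim. The new content, and the main obstacle, is Step 4, which controls the substitution error $\mathcal{E}_{\text{gap}}$.

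\textbf{Step 1 (pessimistic decomposition).} Fix an episode $k$ and any $\tilde R\in\cC_k(\cR)$ with $\tilde\pi=\text{BCP}(\cD,\tilde R)$. Telescoping through $\widehat v$ exactly as in \eqref{eqn:subopt1} uses three facts: pessimism of $\widehat v$, optimality of $\tilde\pi$ for $\widehat v_{\tilde R}$, and the selection rule \eqref{eqn:compute policies}. This bounds $V^{\pi^\star}_{R^\star}-V^{\tilde\pi}_{R^\star}$ by $\mathcal{E}_{\text{off}}$ plus the max over $R_1,R_2\in\cC_k(\cR)$ of the disagreement on the selected pair.

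\textbf{Step 2 (offline error).} Lemma~\ref{lemma:bellman_consistent_pessimism} lets me replace every $\widehat v$ by the true $V$ at cost $\mathcal{E}_{\text{off}}$ each, as in \eqref{eqn:subopt_decompose}. This produces the $5\mathcal{E}_{\text{off}}$ term and hence the first term of the bound.

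\textbf{Step 3 (preference error).} The selected policies are now restricted to $\Pi_k\cap\Pi_{\text{covered}}$. For each chosen $\pi_k^i$, I take the query trajectory $\widehat\tau^{k,i}$ to be a dataset trajectory whose empirical return functional approximates $V^{\pi_k^i}_R$ uniformly over $R\in\Delta\cR$. The confidence set $\widehat\cC_k$ is then built from the actually-queried pairs $(\widehat\tau^{k,1},\widehat\tau^{k,2})$. Because the MLE/confidence-set argument only needs the realized query pairs, event $\mathcal{E}_1$ still holds for $\widehat\cC_k$. The sum $\sum_k\max_{R_1,R_2\in\widehat\cC_k}\big((R_1-R_2)(\widehat\tau^{k,1})-(R_1-R_2)(\widehat\tau^{k,2})\big)$ is then bounded by Lemma~\ref{lemma:online_self_concentration} (Eluder-dimension pigeonhole) by $\sqrt{\kappa\, d_{\text{Elu}}(\Delta\cR,1/K)\,K\log(K|\Delta\cR|)}$, exactly as in the online proof.

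\textbf{Step 4 (substitution gap, main obstacle).} It remains to bound $\big|(V^{\pi}_{R_1}-V^{\pi}_{R_2})-(R_1-R_2)(\widehat\tau)\big|$ for $\pi\in\Pi_{\text{covered}}$, uniformly in $R_1,R_2$. I would write $V^\pi_R$ as an importance-weighted expectation under $\mu$ with weights $d^\pi/\mu\le C$. A Bernstein bound with a union bound over $\Delta\cR$ then gives deviation $\sqrt{C\log(N|\Delta\cR|)/(N(1-\gamma))}$: the variance of a discounted return difference scales as $1/(1-\gamma)$ after normalizing, giving the $(1-\gamma)$ in the denominator.

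The delicate points are two. First, I must justify that $\pi^\star\in\Pi_{\text{covered}}$ so that restricting to covered policies loses nothing; the definition $C=\max_s d^{\pi^\star}(s)/\mu(s)$ makes this automatic. Second, the reweighted estimate must be matched by a single stored trajectory, or a small mixture of stored trajectories. I would handle this by allowing a query to compare weighted trajectory mixtures, in the spirit of the ``weighted trajectory queries'' mentioned in the conclusion, so that the Bradley--Terry link still applies with return differences linear in the weights.

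\textbf{Step 5 (conclusion).} The Azuma term handling the trajectory martingales is no longer needed (or is bounded as before). Summing $K\mathcal{E}_{\text{gap}}$ into the regret, setting $\delta=1/K$, and applying the regret-to-PAC conversion for $\bar\pi$ gives the three-term bound.
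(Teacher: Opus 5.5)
Your skeleton is the paper's. You rerun the proof of Theorem~\ref{theorem:1_restate} with stored trajectories $\widehat\tau^{k,i}$ in place of rollouts, restrict the query policies to $\Pi_{\text{covered}}$ (which contains $\pi^\star$ by the definition of $C$), build $\widehat{\cC}_k(\cR)$ from the realized pairs, add $K\mathcal{E}_{\text{gap}}$ to the regret, and convert to PAC. The paper omits the proof of the $\mathcal{E}_{\text{gap}}$ bound, calling it a standard offline guarantee.

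The gap is in your substitution mechanism in Steps 3--5. No stored trajectory can satisfy $(R_1-R_2)(\widehat\tau)\approx V^{\pi}_{R_1}-V^{\pi}_{R_2}$ uniformly over $\Delta\cR$. A single realized return deviates from its mean by an amount that does not shrink with $N$ or $K$, even when $\widehat\tau\sim\pi$ exactly; this is why the online proof needs its Azuma--Hoeffding step. Your remedy of comparing weighted mixtures leaves the theorem's query model, which gives Bradley--Terry labels on pairs of trajectories. If a mixture query is answered by drawing one trajectory from each mixture, the label probability is an average of $\sigma$ over the drawn pairs, not $\sigma$ of the weighted return difference. The likelihood behind $\mathcal{E}_1$ (Lemma~\ref{lemma:return_confidence_set}) is then misspecified, unless you fit the MLE on the drawn pair, in which case you are back to single-trajectory queries. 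If instead you posit an oracle that returns $\sigma$ of the weighted difference, that is a different feedback model. Its pigeonhole step concerns $\Delta\cR$ lifted to mixtures, and the Eluder bound would have to be re-established for that class, since $d_{\text{Elu}}(\Delta\cR,1/K)$ is defined on trajectory pairs (immediate for linear classes, but it must be argued). So Step 3, which invokes Lemma~\ref{lemma:online_self_concentration} verbatim on single pairs, is incompatible with Step 4. Dropping the Azuma term in Step 5 is also wrong for the version that works.

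The repair is reflected in the paper's display, which keeps both $K\mathcal{E}_{\text{gap}}$ and $16\sqrt{K\log(4/\delta)/(1-\gamma)}$. Query single stored trajectories chosen at random from $\cD$. Then only the conditional mean $\mathbb{E}[(R_1-R_2)(\widehat\tau^{k,i})\mid\mathcal{F}_{k-1},\cD]$ has to be within $\mathcal{E}_{\text{gap}}$ of $V^{\pi_k^i}_{R_1}-V^{\pi_k^i}_{R_2}$, uniformly over $\Delta\cR$ and over the covered policies that can be selected. The realized deviation from this mean is a martingale difference, controlled by Azuma--Hoeffding as in the online proof. $\mathcal{E}_1$ and Lemma~\ref{lemma:online_self_concentration} then apply to $\Delta\cR$ on single pairs without change.

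Your Bernstein-plus-union-bound estimate is the right tool for the conditional-mean bound, but check the weights. $C$ bounds a ratio of state occupancies. That supports reweighting per-step additive quantities such as $\sum_t\gamma^t r(s_t,a_t)$ transition by transition, and even then you need control of the action part of the ratio. It does not support selecting whole stored trajectories whose return distribution mimics that of $\pi$; that needs trajectory-level likelihood ratios, which $C$ does not bound. The paper is silent here as well, so state explicitly the coverage assumption your estimate uses.
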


\begin{proof}
The main difference between using pure offline queries and using online queries is that we have to use trajectories sampled from the dataset $\widehat{\tau}^{k,1}, \widehat{\tau}^{k,2}$ instead of online sampled trajectories $\tau^{k,1}, \tau^{k,2}$.
This incurs an additional performance gap of $\mathcal{E}_{\text{gap}}=\sqrt{\frac{C\log (N|\Delta R|) }{N(1-\gamma)}}$, since we need to refine our query policies within the covered policy set
$$
\Pi_{\text{covered}} = \left\{\pi \mid \max_{s} \frac{d^{\pi}(s)}{\mu(s)} \leq C\right\}.
$$
The proof for $\mathcal{E}_{\text{gap}}$ is the same as standard offline guarantees, and are omitted for simplicity. 
Then similar to the proof of Theorem~\ref{theorem:1_restate}, the regret can be bounded as 
    \begin{align}
        \text{Reg}(K)
        \leq ~& \sum_{k=1}^K \max_{{{R}}_1,{{R}}_2\in \widehat{\cC}_k(\cR)}\left((R_1({\widehat{\tau}^{k,1}}) -R_1({\widehat{\tau}^{k,2}})) - (R_2({\widehat{\tau}^{k,1}}) -R_2({\widehat{\tau}^{k,2}}))\right) \notag\\
        & + K\mathcal{E}_{\text{gap}} + 16\sqrt{\frac{K}{1-\gamma}\log{(\frac{4}{\delta})}} + 5K\mathcal{E}_{\text{off}}.
        % \leq ~& \sum_{k=1}^K \max_{{{R}}_1,{{R}}_2\in \cC_k(\cR)}\left((R_1({\tau^{k,1}}) -R_1({\tau^{k,2}})) - (R_2({\tau^{k,1}}) -R_2({\tau^{k,2}}))\right) \notag\\
        % & + 16\sqrt{\frac{K}{1-\gamma}\log{(\frac{4}{\delta})}} + 5K\mathcal{E}_{\text{off}} \notag \\
    \end{align}
    Then following the proof of Theorem~\ref{theorem:1_restate}, we have
    \begin{align*}
        \text{SubOpt}(\bar{\pi}) \leq c_0\cdot \sqrt{\frac{C^\dagger\log (N|\cV||\Pi|) }{N(1-\gamma)^2}} + c_1 \cdot \sqrt{\frac{d_{\text{Elu}}(\Delta\cR,1/K)\log \left( K |\Delta\cR|  \right)}{K(1-\gamma)}} + c_2\cdot \sqrt{\frac{C\log (N|\Delta R|) }{N(1-\gamma)}}.
    \end{align*}
\end{proof}

\clearpage

\section{Auxiliary Lemmas}

\begin{lemma}
    \label{lemma:return_confidence_set}
     With probability at least $1 - \delta$, the following event $\mathcal{E}_1$ holds
    \[
    R^\star \in \mathcal{C}_k(\cR), \quad \forall k \in [K],
    \]
    where 
    \[
    \label{def:diameter}
    \mathcal{C}_k(\cR) = \left\{ R \in \cR: ((\widehat{R}(\tau_1)-\widehat{R}(\tau_2)) - (R(\tau_1)-R(\tau_2)))^2 \leq c \kappa \log (K|\Delta \mathcal{R}| / \delta) \right\},
    \]
    $c$ is an absolute constant and $\kappa:= \frac{1}{\sigma'(2R_{\text{max}})}$ is the degree of non-linearity of the link function $\sigma$.  
\end{lemma}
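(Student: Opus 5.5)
The plan is a likelihood-ratio martingale argument with a union bound. The one non-routine ingredient is using the self-concordance of the logistic log-partition function to keep the dependence on $\kappa$ linear. Throughout I read the constraint defining $\mathcal{C}_k(\cR)$ as the cumulative one from \Eqref{eqn:c_k}: the sum over the first $k$ queries of squared return-difference errors, with radius $\beta_k = c\kappa\log(K|\Delta\cR|/\delta)$. I take $\sigma$ to be the Bradley--Terry logistic link, write $x_i(R) = R(\tau_i^1)-R(\tau_i^2)$, and note that $|x_i(R)|\le 2R_{\max}$ for every $R$.

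\textbf{Step 1 (martingale structure).} Fix $k$ and a candidate $R\in\cR$. Let $A(x)=\log(1+e^x)$, so that $\ell_i(R)=o_i x_i(R)-A(x_i(R))$ is the log-likelihood of the $i$-th label. Put $d_i = x_i(R)-x_i(R^\star)$. The log-likelihood ratio then decomposes as
$$\ell_i(R^\star)-\ell_i(R) = -\bigl(o_i-\sigma(x_i(R^\star))\bigr)d_i + D_A\bigl(x_i(R),x_i(R^\star)\bigr),$$
where $D_A$ is the Bregman divergence of $A$.
\begin{itemize}
\item The first term is a martingale difference with respect to the filtration generated by past queries and labels. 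This holds because $\tau_i^{1},\tau_i^{2}$ are chosen before $o_i$ is revealed.
\item Its conditional variance is $\sigma'(x_i(R^\star))\,d_i^2$.
\end{itemize}

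\textbf{Step 2 (curvature matching the variance; the key step).} Using only $D_A\ge \tfrac12\min\sigma'\cdot d_i^2$ together with the variance bound $d_i^2/4$ only gives $\kappa^2$, so I will not use it. Instead I use that the logistic $A$ is generalized self-concordant, $|A'''|\le A''$. By the standard integral bound this gives
$$D_A(x_i(R),x_i(R^\star)) \ge \sigma'(x_i(R^\star))\,d_i^2\,\phi(|d_i|),\qquad \phi(t)=\frac{e^{-t}+t-1}{t^2}\ge \frac{1}{2+t}.$$
Since $|d_i|\le 4R_{\max}$, the drift is at least $c_0\,\sigma'(x_i(R^\star))\,d_i^2$ with $c_0 = 1/(2+4R_{\max})$. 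This is the same weighted quantity $W_k(R)=\sum_{i\le k}\sigma'(x_i(R^\star))d_i^2$ that controls the variance, so drift and variance are comparable without any factor of $\kappa$.

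\textbf{Step 3 (Freedman and union bound).} Apply Freedman's inequality to the martingale part. The increments are bounded by $4R_{\max}$ and the predictable variance is $W_k(R)$. Take a union bound over $\Delta R\in\Delta\cR$ and $k\in[K]$ at level $\delta/(K|\Delta\cR|)$. Then with probability at least $1-\delta$, simultaneously for all $R$ and $k$,
$$\sum_{i\le k}\bigl(\ell_i(R^\star)-\ell_i(R)\bigr) \ge c_0W_k(R) - c\sqrt{W_k(R)\log(K|\Delta\cR|/\delta)} - cR_{\max}\log(K|\Delta\cR|/\delta).$$
Everything depends on $R$ only through $\Delta R$, so the union is genuinely over $|\Delta\cR|$.

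Now plug in $R=\widehat R_k$. The MLE satisfies $\sum_{i\le k}\ell_i(\widehat R_k)\ge\sum_{i\le k}\ell_i(R^\star)$, so the left side is $\le 0$. Solving the resulting quadratic inequality in $\sqrt{W_k}$ gives
$$W_k(\widehat R_k)\le c'(1+R_{\max})\log(K|\Delta\cR|/\delta).$$
Finally, $\sigma'(x_i(R^\star))\ge\sigma'(2R_{\max})=1/\kappa$ because $\sigma'$ decreases in $|x|$. Hence
$$\sum_{i\le k}\bigl(x_i(\widehat R_k)-x_i(R^\star)\bigr)^2\le\kappa\,W_k(\widehat R_k)\le c\,\kappa\log(K|\Delta\cR|/\delta),$$
with $R_{\max}$ absorbed into the absolute constant $c$. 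This is exactly the statement $R^\star\in\mathcal{C}_k(\cR)$ for all $k$ on $\mathcal{E}_1$.

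\textbf{Main obstacle.} The delicate point is Step 2: lower-bounding the drift by the same $\sigma'(x_i(R^\star))$-weighted quantity that controls the variance. That is what yields linear rather than quadratic dependence on $\kappa$. I will verify the self-concordance integral bound carefully, including that $\phi$ is decreasing so the bound at $t=4R_{\max}$ is valid. I will also treat $R_{\max}$ as an absolute constant, which the statement implicitly does since only $\kappa$ appears.
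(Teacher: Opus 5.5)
Your argument is sound, and it takes a genuinely different route from the paper's. The paper never works with the likelihood ratio directly. It quotes the FLAMBE MLE guarantee (Lemma~\ref{lemma:MLE}), $\sum_i\|\mathbb{P}(\cdot\mid\widehat R)-\mathbb{P}(\cdot\mid R^\star)\|_{\mathrm{TV}}^2\le 2\log(|\Delta\cR|/\delta)$, inverts the link using $|\sigma(a)-\sigma(b)|\ge |a-b|/\kappa$ on $[-2R_{\max},2R_{\max}]$, and union-bounds over $k$. That proof is short and fully black-box, but it has two weaknesses:
\begin{itemize}
\item The inversion actually gives $(a-b)^2\le\kappa^2|\sigma(a)-\sigma(b)|^2$, so as written it yields a radius of order $\kappa^2$, not $\kappa$.
\item The FLAMBE bound is an expectation over each round's query distribution, not a statement about the realized pairs that define $\cC_k$.
\end{itemize}
You split the log-likelihood ratio into a martingale term plus the Bregman divergence of $A$, and use generalized self-concordance so that drift and conditional variance carry the same weight $\sigma'(x_i(R^\star))$. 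This handles the realized adaptive queries directly and does give first-power $\kappa$. The cost is a polynomial factor in $R_{\max}$ coming from $\phi(4R_{\max})$.

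Two small repairs are needed. First, solving $c_0W\le c\sqrt{WL}+cR_{\max}L$ gives $W\lesssim L/c_0^2+R_{\max}L/c_0=O((1+R_{\max})^2L)$, not $O((1+R_{\max})L)$. Second, because $W_k$ is random, you need either a peeled Freedman or the simpler fixed-$\eta$ form $M_k\le(e-2)\eta W_k+\log(K|\Delta\cR|/\delta)/\eta$ with $\eta\asymp c_0\le 1/(4R_{\max})$, which suffices here.

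Also note that $\kappa=\Theta(e^{2R_{\max}})$. So what you prove is $c\,\kappa(1+R_{\max})^2\log(K|\Delta\cR|/\delta)=\widetilde O(\kappa)$. Here $R_{\max}$ bounds a discounted return and scales with the horizon, so ``absorbing'' it into an absolute constant is not really legitimate.

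If you want the statement exactly, with an absolute $c$, replace TV by Hellinger:
\begin{itemize}
\item The mean-one martingale $\prod_{i\le k}\sqrt{p_R(o_i\mid x_i)/p_{R^\star}(o_i\mid x_i)}/\mathrm{BC}_i$, plus a union bound over $R$ and $k$, gives $\sum_{i\le k}(1-\mathrm{BC}_i)\le\log(K|\Delta\cR|/\delta)$ at the MLE.
\item For Bernoulli laws, $\mathrm{BC}_i=\cos(\rho_i/2)$, where the Fisher--Rao length is $\rho_i=|\int_{x_i(R^\star)}^{x_i(\widehat R_k)}\sqrt{\sigma'(z)}\,dz|\in[|d_i|/\sqrt{\kappa},\pi]$.
\item Hence $1-\mathrm{BC}_i=2\sin^2(\rho_i/4)\ge d_i^2/(2\pi^2\kappa)$, and therefore $\sum_{i\le k}d_i^2\le 2\pi^2\kappa\log(K|\Delta\cR|/\delta)$.
\end{itemize}
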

\begin{proof}
    Using Lemma~\ref{lemma:MLE}, we have that 
    \[
    \sum_{i=1}^{k} \left\| \mathbb{P}(\tau^1_k\succ\tau^2_k|\widehat{R}) - \mathbb{P}(\tau_k^1\succ\tau_k^2|R^\star) \right\|^2_{\text{TV}} \leq 2 \log (|\Delta \mathcal{R}| / \delta).
    \]
    Note that $\mathbb{P}(\tau^1_k\succ\tau^2_k|R) = \sigma (R(\tau_1)-R(\tau_2))$, and $R(\tau)$ is bounded by $R_{\text{max}}$, we have
    \[
    \sum_{i=1}^{k} ((\widehat{R}(\tau_1)-\widehat{R}(\tau_2)) - (R^\star(\tau_1)-R^\star(\tau_2)))^2 \leq c \kappa \log (|\Delta \mathcal{R}| / \delta)
    \]
    
    Then, by the union bound, we have the conclusion immediately.
\end{proof}

\begin{lemma}
    \label{lemma:online_self_concentration}
    Under event $\mathcal{E}_1$ in Lemma~\ref{lemma:return_confidence_set}, it holds that
    % \begin{equation}
    % \sum_{k=1}^{K} \left|(R_1({\tau^{k,1}}) -R_1({\tau^{k,2}})) - (R_2({\tau^{k,1}}) -R_2({\tau^{k,2}}))\right| \leq O \left( \sqrt{d K \log \left( K \mathcal{N} \left( \mathcal{Q}_{\mathcal{T}}, 1 / K, \| \cdot \|_{\infty} \right) / \delta \right)} \right).
    % \end{equation}
    \begin{equation}
        \sum_{k=1}^{K} \left|(R_1({\tau^{k,1}}) -R_1({\tau^{k,2}})) - (R_2({\tau^{k,1}}) -R_2({\tau^{k,2}}))\right| \leq O \left( \sqrt{d_{\text{Elu}}(\Delta\cR,\delta) K \log \left( K |\Delta\cR| / \delta \right)} \right).
        \end{equation}
\end{lemma}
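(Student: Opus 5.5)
The plan is the standard Eluder-dimension width-summation argument of \citet{russo2013eluder}, applied to the difference class $\Delta\cR$ with inputs $x_k = (\tau^{k,1},\tau^{k,2})$. For $R_1,R_2\in\cC_k(\cR)$, set $g=\Delta R_1-\Delta R_2$ where $\Delta R(x)=R(\tau^{k,1})-R(\tau^{k,2})$. Each summand is then $|g(x_k)|$. Define the width
\[
w_k \coloneqq \sup_{R_1,R_2\in\cC_k(\cR)} |\Delta R_1(x_k)-\Delta R_2(x_k)| .
\]
By the triangle inequality through $\widehat{R}_k$ and the definition of $\cC_k(\cR)$ (with radius $\beta\asymp \kappa\log(K|\Delta\cR|/\delta)$ from Lemma~\ref{lemma:return_confidence_set}), any such $g$ satisfies
\[
\sum_{i<k} g(x_i)^2 \le 4\beta .
\]
It therefore suffices to bound $\sum_k w_k$. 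Since $R_1,R_2$ are allowed to depend on $k$, the left-hand side of the lemma is at most $\sum_k w_k$.

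\textbf{Step 1 (counting large widths).} I will show that for any $\alpha>0$,
\[
\#\{k: w_k>\alpha\} \le \left(\frac{4\beta}{\alpha^2}+1\right) d_{\text{Elu}}(\Delta\cR,\alpha).
\]
The proof is the usual pigeonhole argument. If $w_k>\alpha$, then $x_k$ is $\alpha$-dependent on at most $4\beta/\alpha^2$ disjoint subsequences of earlier large-width points. Otherwise there would be $g$ with $\sum_{i<k}g(x_i)^2 > (4\beta/\alpha^2)\alpha^2 = 4\beta$, contradicting the confidence bound. Conversely, a sequence of length $L$ can be split into $L/d_{\text{Elu}}$ blocks, and some element is dependent on every block. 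Combining the two facts gives the count.

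\textbf{Step 2 (summing).} Sort the widths in decreasing order, $w_{(1)}\ge w_{(2)}\ge\cdots$. Step 1 gives
\[
w_{(t)} \le \min\Bigl\{B,\ \sqrt{4\beta d/(t-d)}\Bigr\}, \qquad d=d_{\text{Elu}}(\Delta\cR,\alpha_0),
\]
for $t>d$, where $B=2R_{\max}$ bounds the widths. I will take $\alpha_0=\delta$ (or $1/K$) and absorb the widths below $\alpha_0$ into a $K\alpha_0=O(1)$ term. Summing then gives
\[
\sum_k w_k \le K\alpha_0 + dB + \sqrt{4\beta d}\sum_{t\le K} t^{-1/2} = O\bigl(\sqrt{d\beta K}\bigr).
\]
Plugging in $\beta$ yields the stated $O\bigl(\sqrt{d_{\text{Elu}} K\log(K|\Delta\cR|/\delta)}\bigr)$, with $\kappa$ and $R_{\max}$ absorbed into the constant.

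\textbf{Main obstacle.} The delicate step is making the confidence radius consistent. Lemma~\ref{lemma:return_confidence_set} as written controls a single squared deviation, while the argument above needs the cumulative sum over past queries, as in \Eqref{eqn:c_k}. I will read $\cC_k(\cR)$ in the cumulative form, which is what Lemma~\ref{lemma:MLE} delivers. I will also verify that the union bound over $k$ only costs the $\log K$ factor already present in the bound.
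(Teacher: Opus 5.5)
Your proposal is correct and follows essentially the same route as the paper. The paper bounds the prefix diameter of $\cC_k(\cR)$ on $(\tau^{i,1},\tau^{i,2})_{i\le k}$ by $2\sqrt{\kappa\log(K|\Delta\cR|/\delta)}$, which, as you noticed, requires the cumulative form of the confidence set in \Eqref{eqn:c_k} (this is also what the proof of Lemma~\ref{lemma:return_confidence_set} actually establishes), and then applies the width-summation bound of Lemma~\ref{lemma:eluder} as a black box. You instead re-derive that lemma via the standard pigeonhole and sorted-widths argument; this also makes explicit that the widths below $\alpha_0$ contribute $K\alpha_0$, so the scale should be taken as $\alpha_0\lesssim 1/K$, consistent with the theorem's choice $\delta=1/K$.
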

    
\begin{proof}    
    % This lemma follows from the direct application of Lemma~\ref{lemma:eluder}. 
    Under event $\mathcal{E}_1$, we have $\max_{1 \leq k \leq K} \operatorname{diam}(\mathcal{B}_{(\tau_{1}, \tau_{2})_{1:k}}(\cC_k(\mathcal{R}))) \leq 2\sqrt{\kappa \log \left( K |\Delta\cR| / \delta \right)}$ by Lemma~\ref{lemma:MLE}, where
    $$\mathcal{B}_{(\tau_{1}, \tau_{2})_{1:k}}(\cF) \coloneqq \sup_{f_1,f_2\in\cF} \left(\sum_{t=1}^k ((f_1(\tau_1^t)-f_1(\tau_2^t)) - (f_2(\tau_1^t)-f_2(\tau_2^t)))^2\right)^{1/2}.$$
    Therefore, following Lemma~\ref{lemma:eluder}, we have
    \begin{align}
    &\sum_{k=1}^{K} \left|(R_1({\tau^{k,1}}) -R_1({\tau^{k,2}})) - (R_2({\tau^{k,1}}) -R_2({\tau^{k,2}}))\right| & \notag \\
    \leq~&  \sum_{k=1}^{K} \mathcal{B}_{(\tau^k_{1}, \tau^k_{2})}(\cR_k)  \notag\\
    \leq~& O \left( \sqrt{d_{\text{Elu}}(\Delta\cR,\delta) K  \log \left( K |\Delta\cR| / \delta \right)} \right).
    \end{align}

\end{proof}

\begin{lemma}[Lemma 5 of \citet{russo2014learning}]
    \label{lemma:eluder}
    . Let $\mathcal{V} \in \mathcal{B}_{\infty}(\mathcal{X}, C)$ be a set of functions bounded by $C > 0$, $(\mathcal{V}_t)_{t \geq 1}$ and $(x_t)_{t \geq 1}$ be sequences such that $\mathcal{V}_t \subseteq \mathcal{V}$ and $x_t \in \mathcal{X}$ hold for $t \geq 1$. Let $\mathcal{V} |_{x_{1:t}} = \{ (f(x_1), \ldots, f(x_t)) : f \in \mathcal{V} \} (\subseteq \mathbb{R}^t)$ and for $S \subseteq \mathbb{R}$, let $\operatorname{diam}(S) = \sup_{u, v \in S} \| u - v \|_2$ be the diameter of $S$. Then, for any $T \geq 1$ and $\alpha > 0$ it holds that
    \begin{equation}
    \sum_{t=1}^{T} \operatorname{diam} \left( \mathcal{V}_{t} |_{x_t} \right) \leq \alpha + C (d \wedge T) + 2 \delta_T \sqrt{d T},
    \end{equation}
    where $\delta_T = \max_{1 \leq t \leq T} \operatorname{diam} \left( \mathcal{V}_{t} |_{x_{1:t}} \right)$ and $d = \operatorname{dim}_{\epsilon}(\mathcal{V}, \alpha)$.
\end{lemma}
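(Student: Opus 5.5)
The plan is to follow the standard Russo--Van Roy argument. Write $w_t \coloneqq \operatorname{diam}(\mathcal{V}_t|_{x_t})$. First I will bound how many of the widths $w_t$ can be large. Then I will sum the sorted widths.

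\emph{Step 1 (large width forces few disjoint dependent subsequences).} Fix $\epsilon>0$ and suppose $w_t>\epsilon$. Then there exist $f,g\in\mathcal{V}_t$ with $|f(x_t)-g(x_t)|>\epsilon$. Since $\mathcal{V}_t|_{x_{1:t}}$ has diameter at most $\delta_T$, we get $\sum_{s<t}(f(x_s)-g(x_s))^2\le \delta_T^2$. Now suppose $x_t$ is $\epsilon$-dependent on each of $L$ disjoint subsequences of $x_1,\dots,x_{t-1}$. By the definition of $\epsilon$-dependence, the pair $(f,g)$ must then accumulate more than $\epsilon^2$ of squared difference on each of these subsequences. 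Disjointness gives $L\epsilon^2<\delta_T^2$.

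\emph{Step 2 (pigeonhole via the Eluder dimension).} Let $d=\operatorname{dim}(\mathcal{V},\epsilon)$. I will show that in any sequence $y_1,\dots,y_\tau$ there is an element that is $\epsilon$-dependent on at least $\lfloor\tau/d\rfloor-1$ disjoint subsequences of its predecessors. To do this, greedily build $L=\lfloor\tau/d\rfloor-1$ buckets. Each new element is appended to some bucket on which it is independent. If no such bucket exists, we have found the desired element. Otherwise each bucket stays an independent sequence, so it has length at most $d$, and the buckets run out of room before $\tau$ elements are placed. Apply this to the subsequence of indices with $w_t>\epsilon$ and combine with Step 1. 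The number of such indices is then at most $(\delta_T^2/\epsilon^2+1)d$.

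\emph{Step 3 (summation).} Sort the widths in decreasing order, $w_{(1)}\ge w_{(2)}\ge\dots\ge w_{(T)}$. I use the Eluder dimension at scale $\alpha/T$; the statement's $d$ should be read at this scale, and monotonicity of $\operatorname{dim}(\mathcal{V},\cdot)$ handles it. The terms with $w_{(j)}\le\alpha/T$ contribute at most $\alpha$ in total. For any remaining $j$, take $\epsilon$ just below $w_{(j)}$ in Step 2. This yields $j\le (\delta_T^2/w_{(j)}^2+1)d$. Hence for $j>d$ we have $w_{(j)}\le \delta_T\sqrt{d/(j-d)}$. The first $d\wedge T$ terms are each at most $C$, because the functions are bounded by $C$; I take $C$ to absorb the factor $2$ from the range. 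Summing the remaining terms gives $\sum_{j=d+1}^{T}\delta_T\sqrt{d/(j-d)}\le 2\delta_T\sqrt{dT}$, which yields the claim.

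The main obstacle is Step 2. It requires correctly matching the paper's definition of $\alpha'$-independence ("some $\alpha'\ge\alpha$") with the single scale $\epsilon$ used in Step 1, and checking that the threshold $\alpha$ versus $\alpha/T$ in the statement aligns with the scale at which $d$ is evaluated. I would first do the argument at a single fixed scale and then reconcile the constants.
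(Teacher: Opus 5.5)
The paper gives no proof of this lemma; it only cites Russo and Van Roy. Your three steps are exactly their argument: large width forces few disjoint dependent subsequences, then the greedy bucket pigeonhole, then summing the sorted widths. So the approach is right. The $\alpha'\ge\alpha$ clause you worried about is harmless. Each bucket is independent at $\alpha'=\epsilon$, and that clause is precisely what makes $\dim(\mathcal{V},\cdot)$ nonincreasing, as you use in Step 3.

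There is one slip in Step 2. With $L=\lfloor\tau/d\rfloor-1$ buckets, Step 1 only yields $\lfloor\tau/d\rfloor-1<\delta_T^2/\epsilon^2$. That gives $\tau<(\delta_T^2/\epsilon^2+2)d$, not the $(\delta_T^2/\epsilon^2+1)d$ you state; for instance, when $\tau<2d$ you have no buckets at all and learn nothing. Carried into Step 3, this would only give $w_{(j)}\le\delta_T\sqrt{d/(j-2d)}$ for $j>2d$, and a $C(2d\wedge T)$ term. The fix is immediate, because your greedy construction works for any $L$ with $Ld<\tau$. Take $L=\lceil\tau/d\rceil-1\ge\tau/d-1$. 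Then Step 1 gives $\tau/d-1<\delta_T^2/\epsilon^2$, which is exactly the count you need; the case $\tau\le d$ is trivial.

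Reading $d$ at scale $\alpha/T$ is not just convenient but necessary; the printed version with $d=\dim(\mathcal{V},\alpha)$ is false. Take $\mathcal{V}_t=\mathcal{V}=\{f_0\equiv 0,\ f_1\equiv \alpha/2\}$ and $T\ge 3$. For $\alpha'\ge\alpha$, no point is $\alpha'$-independent even of the empty sequence, so $d=0$. The right-hand side is then $\alpha$, while the left-hand side is $T\alpha/2$.

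Similarly, bounding each of the top $d\wedge T$ widths by $C$ requires $C$ to bound the range of the functions, i.e. values in $[0,C]$ as in Russo and Van Roy. If bounded by $C$ means $|f|\le C$, your argument gives $2C(d\wedge T)$ instead.
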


    The following lemmas summarizes the results regarding General Function Estimator. 
    % $\mathcal{E}(f, \pi; \mathcal{D})$, we defer the full proof of Theorem A.1 and Theorem A.2 to Appendix A.1.2.
    
    \begin{lemma}[Theorem A.1 in \cite{xie2021bellman}]
        \label{lemma:value_confidence_set}
    For any $\pi \in \Pi$, let $q_{\pi}$ be defined as follows,
    \begin{equation}
    q_{\pi} \coloneqq \arg\min_{q \in \mathcal{Q}} \sup_{\text{admissible } \nu} \| q - T^{\pi} q \|^2_{2, \nu}.
    \end{equation}
    Then the following event $\mathcal{E}_2$ holds with probability as least $1-\delta$:
    \begin{equation}
    \mathcal{E}(q_{\pi}, \pi; \mathcal{D}) \leq \frac{139 \log \frac{|\mathcal{Q}| |\Pi|}{\delta}}{n(1-\gamma)},
    \end{equation}
    where $\mathcal{E}(q, \pi; \mathcal{D})\coloneqq \cL(q,q,\pi;\cD) - \min_{q'\in\cV} \cL(q',q,\pi;\cD)$.
    \end{lemma}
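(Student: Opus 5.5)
The plan is to prove this as a uniform fast-rate concentration bound for least-squares regression, with Bellman completeness ensuring the regression target lies in $\mathcal{Q}$. Fix a policy $\pi\in\Pi$ and a \emph{target} function $q\in\mathcal{Q}$. For each transition $(s,a,r,s')$ in $\mathcal{D}$, consider the regression label $y = r + \gamma q(s',\pi)$. By definition, $\mathbb{E}[y\mid s,a] = (\mathbb{T}^\pi q)(s,a)$, and by assumption (2) of Theorem~\ref{theorem:1_restate} this function lies in $\mathcal{Q}$. So $\min_{q'}\cL(q',q,\pi;\cD)$ is an empirical least-squares problem whose Bayes regressor is realizable.

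For every pair $g,q'\in\mathcal{Q}$, define the per-sample excess loss
\[
X_i(g,q') = \bigl(g(s_i,a_i)-y_i\bigr)^2-\bigl(q'(s_i,a_i)-y_i\bigr)^2 .
\]
Its conditional mean is $\|g-\mathbb{T}^\pi q\|_{2,\mu}^2-\|q'-\mathbb{T}^\pi q\|_{2,\mu}^2$. All quantities are bounded by $V_{\max}=1/(1-\gamma)$, so $|X_i|=O(V_{\max}^2)$ and, crucially, $\mathrm{Var}(X_i)\le O(V_{\max}^2)\,\|g-q'\|_{2,\mu}^2$. I will then apply Bernstein's inequality together with a union bound over $(g,q',q,\pi)\in\mathcal{Q}^3\times\Pi$, which only costs $\log(|\mathcal{Q}||\Pi|/\delta)$ up to constants. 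This gives, simultaneously for all these tuples,
\[
\Bigl|\tfrac1n\textstyle\sum_i X_i - \mathbb{E}X\Bigr|\le \sqrt{\tfrac{c V_{\max}^2\|g-q'\|_{2,\mu}^2\log(|\mathcal{Q}||\Pi|/\delta)}{n}}+\tfrac{cV_{\max}^2\log(|\mathcal{Q}||\Pi|/\delta)}{n}.
\]

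Next I specialize to $q=g=q_\pi$ and an arbitrary competitor $q'$. Under completeness and realizability, $q_\pi$ can be taken with $\|q_\pi-\mathbb{T}^\pi q_\pi\|_{2,\nu}=0$ for all admissible $\nu$, and in particular under $\mu$. The mean of $X(q_\pi,q')$ is then $-\|q'-\mathbb{T}^\pi q_\pi\|_{2,\mu}^2$. Moreover, $\|q_\pi-q'\|_{2,\mu}=\|q'-\mathbb{T}^\pi q_\pi\|_{2,\mu}$, so the variance is controlled by the very quantity appearing with a negative sign in the mean. Writing $u=\|q'-\mathbb{T}^\pi q_\pi\|_{2,\mu}$ and $L=\log(|\mathcal{Q}||\Pi|/\delta)$, the empirical excess $\cL(q_\pi,q_\pi,\pi)-\cL(q',q_\pi,\pi)$ (normalized by $n$) is at most
\[
-u^2 + u\sqrt{cV_{\max}^2L/n}+cV_{\max}^2L/n .
\]
Maximizing over $u\ge0$ bounds this by $O(V_{\max}^2L/n)$. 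Taking the supremum over $q'$ bounds $\mathcal{E}(q_\pi,\pi;\cD)$ by the same rate. Tracking the Bernstein constants and the scaling of the loss (rewards in $[0,1]$, discounted sum) then yields the stated $139\,L/(n(1-\gamma))$; the exact power of $(1-\gamma)$ depends on how the loss is normalized, which I would fix to match \citet{xie2021bellman}.

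The main obstacle is this self-bounding step: obtaining a fast $1/n$ rate rather than $1/\sqrt n$. That requires the variance of the excess loss to be bounded by its mean, which holds only because the target $\mathbb{T}^\pi q_\pi$ is realizable in $\mathcal{Q}$. I would first try the clean exact-completeness argument above. If $q_\pi$ has nonzero Bellman error (the approximate case), I would carry an additive misspecification term $\|q_\pi-\mathbb{T}^\pi q_\pi\|_{2,\mu}^2$ through the same quadratic-in-$u$ argument.
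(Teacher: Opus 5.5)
Your argument is the standard proof of Theorem A.1 of \citet{xie2021bellman}, which the paper cites without proof. The ingredients are all correct: the variance bound $\mathrm{Var}(X)\le 4V_{\max}^2\|g-q'\|_{2,\mu}^2$, Bernstein with a union bound, and the self-bounding step $-u^2+bu\le b^2/4$. What actually drives the fast rate is that $q_\pi=\mathbb{T}^\pi q_\pi$ on the support of $\mu$, not merely that $\mathbb{T}^\pi q_\pi\in\mathcal{Q}$.

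The problem is your last step. What this argument gives is
\[
\mathcal{E}(q_\pi,\pi;\mathcal{D})\le \frac{c\,V_{\max}^2\log(|\mathcal{Q}||\Pi|/\delta)}{n}=\frac{c\,\log(|\mathcal{Q}||\Pi|/\delta)}{n(1-\gamma)^2},
\]
which matches the $V_{\max}^2$ form of the cited theorem. It does not give the $(1-\gamma)^{-1}$ written in the statement. No normalization of $\mathcal{L}$ repairs this: dividing by $n$ only accounts for the $1/n$, and $\mathcal{E}$ scales quadratically in the value scale.

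In fact the stated $(1-\gamma)^{-1}$ bound is false in general. Take a single action and $\mu$ a point mass at $s_0$. Let $s_0$ give reward $0$ and move with probability $1/2$ each to absorbing states with rewards $1$ and $0$. Let $\mathcal{Q}$ consist of the functions that agree with $Q^\pi$ off $s_0$ and take values at $s_0$ on a grid of spacing $V_{\max}/(2\sqrt{n})$ containing $\gamma V_{\max}/2$. Then:
\begin{itemize}
\item completeness holds, since every $\mathbb{T}^\pi q$ equals $Q^\pi$;
\item the targets $\gamma q_\pi(s')$ have variance of order $V_{\max}^2$;
\item $\mathcal{E}(q_\pi,\pi;\mathcal{D})\ge(\bar y-\gamma V_{\max}/2)^2-V_{\max}^2/(16n)$, which is of order $V_{\max}^2/n$ with constant probability.
\end{itemize}
This exceeds $139\log(|\mathcal{Q}|/\delta)/(n(1-\gamma))$ once $\gamma$ is close to $1$, because the grid size does not depend on $\gamma$. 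So you should conclude with the $V_{\max}^2$ bound and flag the exponent in the statement as a transcription error, rather than claim that tracking constants recovers it.

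Two smaller points.
\begin{itemize}
\item The paper assumes only $Q^\star\in\mathcal{Q}$, but you need $Q^\pi\in\mathcal{Q}$ for every $\pi$. This does follow from completeness because $\mathcal{Q}$ is finite: the iterates $(\mathbb{T}^\pi)^k q$ stay in $\mathcal{Q}$ and converge in sup norm, so they are eventually constant and hence equal $Q^\pi$. Then set $q_\pi=Q^\pi$ explicitly, so its Bellman error vanishes pointwise, and in particular under $\mu$ even if $\mu$ is not among the admissible $\nu$.
\item Like \citet{xie2021bellman}, you are treating $\mathcal{D}$ as $n$ i.i.d.\ transition tuples with $(s,a)\sim\mu$. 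With the trajectory data of Algorithm~\ref{alg:BCP}, Bernstein has to be replaced by a martingale (Freedman-type) inequality.
\end{itemize}
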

    
    The following lemma shows that $\mathcal{E}(q, \pi; \mathcal{D})$ could effectively estimate $\| q - T^{\pi} q \|^2_{2, \mu}$.
    
    \begin{lemma}[Theorem A.2 in \cite{xie2021bellman}]
        \label{lemma:bellman_error_confidence_set}
    For any $\pi \in \Pi, q \in \mathcal{Q}, h\in [H]$, and any $\epsilon > 0$, if $\mathcal{E}(q, \pi; \mathcal{D}) \leq \epsilon$, Then the following event $\mathcal{E}_3$ holds with probability as least $1-\delta$:
    \begin{equation}
    \| q - T^{\pi} q' \|_{2, \mu} \leq \sqrt{\frac{231 \log \frac{|\mathcal{Q}| |\Pi|}{\delta}}{n(1-\gamma)} } + \sqrt{\epsilon} \coloneqq \epsilon_b.
    \end{equation}
    \end{lemma}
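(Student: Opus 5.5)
The plan is to compare the empirical excess loss $\mathcal{E}(q,\pi;\mathcal{D})$ with the population Bellman error $\|q-\mathbb{T}^\pi q\|_{2,\mu}^2$ through a Bernstein-type concentration argument made uniform over $\mathcal{Q}\times\Pi$, and then solve a quadratic inequality. This follows the route of Theorem A.2 in \cite{xie2021bellman}. Throughout I write $n$ for the number of transitions and $V_{\max}$ for the range of functions in $\mathcal{Q}$, which is $O(1/(1-\gamma))$. I also take the data to be i.i.d.\ draws $(s,a,r,s')$ with $(s,a)\sim\mu$, as in \cite{xie2021bellman}.

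\textbf{Step 1 (remove the inner minimum).} By Bellman completeness (assumption (2) of Theorem~\ref{theorem:1_restate}), $\mathbb{T}^\pi q\in\mathcal{Q}$. Hence $\min_{q'}\mathcal{L}(q',q,\pi;\mathcal{D})\le\mathcal{L}(\mathbb{T}^\pi q,q,\pi;\mathcal{D})$, which gives
\[
\mathcal{L}(q,q,\pi;\mathcal{D})-\mathcal{L}(\mathbb{T}^\pi q,q,\pi;\mathcal{D})\le\mathcal{E}(q,\pi;\mathcal{D})\le\epsilon .
\]
The left side is a sum of $n$ terms $X_i=(q(s_i,a_i)-y_i)^2-((\mathbb{T}^\pi q)(s_i,a_i)-y_i)^2$, where $y_i=r_i+\gamma q(s_i',\pi)$.

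\textbf{Step 2 (mean and variance of $X_i$).} Since $\mathbb{E}[y_i\mid s_i,a_i]=(\mathbb{T}^\pi q)(s_i,a_i)$, the cross term vanishes and $\mathbb{E}[X_i]=\|q-\mathbb{T}^\pi q\|_{2,\mu}^2$. Writing $X_i=(q-\mathbb{T}^\pi q)(q+\mathbb{T}^\pi q-2y_i)$ gives $|X_i|\lesssim V_{\max}^2$ and $\mathrm{Var}(X_i)\le\mathbb{E}[X_i^2]\lesssim V_{\max}^2\|q-\mathbb{T}^\pi q\|_{2,\mu}^2$. This variance bound, which is proportional to the mean itself, is the essential feature that yields a fast rate rather than a $1/\sqrt n$ one.

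\textbf{Step 3 (uniform Bernstein).} I apply Bernstein's inequality to each fixed pair $(q,\pi)$ and take a union bound over $|\mathcal{Q}||\Pi|$ pairs; the function $\mathbb{T}^\pi q$ is determined by $(q,\pi)$, so no extra union is needed. Setting $x=\|q-\mathbb{T}^\pi q\|_{2,\mu}$ and $\iota=\log(|\mathcal{Q}||\Pi|/\delta)$, the following holds with probability $1-\delta$ for all pairs:
\[
n x^2-\sum_i X_i\le c\sqrt{nV_{\max}^2x^2\iota}+cV_{\max}^2\iota .
\]

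\textbf{Step 4 (solve the quadratic).} Combining Steps 1 and 3 gives $x^2\le\epsilon/n+c\,x\sqrt{V_{\max}^2\iota/n}+cV_{\max}^2\iota/n$. Solving this quadratic in $x$ and using $\sqrt{a+b}\le\sqrt a+\sqrt b$ yields $x\le\sqrt{c'V_{\max}^2\iota/n}+\sqrt{\epsilon/n}$. Matching this to the stated form ($\epsilon_b$ with constant $231$, the $(1-\gamma)$ factor, and $\sqrt{\epsilon}$) is a matter of normalization, namely whether $\mathcal{L}$ is averaged and how rewards are scaled; I would adopt the per-sample averaged normalization of \cite{xie2021bellman}.

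The statement bounds $\|q-\mathbb{T}^\pi q'\|$. Steps 1–4 give the case $q'=q$, which is what Lemma~\ref{lemma:bellman_consistent_pessimism} actually uses, so I would prove that case and read the statement with $q'=q$. I do not plan to treat a general $q'$.

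\textbf{Main obstacle.} The delicate point is the variance bound in Step 2. It must depend on the mean $x^2$ itself in order to obtain the $1/n$ rate, and it requires $y_i$ to be conditionally unbiased for $\mathbb{T}^\pi q$. Trajectory data introduces dependence across time steps, so I would treat $\mathcal{D}$ as i.i.d.\ transitions, as \cite{xie2021bellman} does. Failing that, I would replace Bernstein's inequality by Freedman's inequality along the trajectory filtration, and I would expect only the constants to change.
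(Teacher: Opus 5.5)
Your proposal is correct and is essentially the argument of Theorem A.2 in \cite{xie2021bellman}, which the paper cites without proof: completeness removes the inner minimum, the per-sample excess losses have mean $\|q-\mathbb{T}^\pi q\|_{2,\mu}^2$ and variance $O(V_{\max}^2)$ times that mean, Bernstein plus a union bound over $\mathcal{Q}\times\Pi$ gives a uniform event, and solving the quadratic finishes; reading the statement with $q'=q$ is the right interpretation. The one claim to correct is that the horizon factor is mere normalization: your bound $V_{\max}\sqrt{c'\log(|\mathcal{Q}||\Pi|/\delta)/n}+\sqrt{\epsilon}$ gives the stated $(1-\gamma)^{-1}$ under the root only if values are normalized so that $V_{\max}^2\lesssim(1-\gamma)^{-1}$ (e.g.\ $V_{\max}=1$), whereas with your declared $V_{\max}=O(1/(1-\gamma))$ it gives $(1-\gamma)^{-2}$, which is what the cited theorem (with $V_{\max}^2$ in that slot) actually provides and which cannot be improved in general, since the targets $r+\gamma q(s',\pi)$ can have variance of order $V_{\max}^2$.
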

    
    % We also define $\sqrt{\epsilon_b}$ when setting $\epsilon = \epsilon_r$ in Eq. (A.3), i.e.,
    % \begin{equation}
    % \sqrt{\epsilon_b} \coloneqq V_{\max} \sqrt{\frac{231 \log \frac{|\mathcal{F}| |\Pi|}{\delta}}{n} + \sqrt{\epsilon_{\mathcal{F}, \mathcal{F}} + \sqrt{\epsilon_{\mathcal{F}, \mathcal{F}} + \epsilon_r}}}.
    % \end{equation}

    \begin{lemma}[Theorem 21 in ~\citet{agarwal2020flambe}]
        \label{lemma:MLE}
        Fix $\delta \in (0, 1)$, assume $|\mathcal{F}| < \infty$ and $f^\star \in \mathcal{F}$. Then with probability at least $1 - \delta$
        \[
        \sum_{i=1}^{n} \mathbb{E}_{x \sim \mathcal{D}_i} \left\| \hat{f}(x, \cdot) - f^\star(x, \cdot) \right\|^2_{\text{TV}} \leq 2 \log (|\mathcal{F}| / \delta).
        \]
    \end{lemma}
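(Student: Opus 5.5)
The statement to prove is the last lemma, Lemma~\ref{lemma:MLE} (Theorem 21 of \citet{agarwal2020flambe}). This is the MLE generalization bound for a finite realizable class under an adaptively chosen sequence of contexts. My plan is the standard log-likelihood-ratio martingale argument combined with a Hellinger-to-TV comparison.

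\textbf{Step 1: reduce to a uniform bound.} Let $x_i\sim\mathcal{D}_i$, where $\mathcal{D}_i$ may depend on the past, and let $y_i\sim f^\star(x_i,\cdot)$. For each fixed $f\in\mathcal{F}$, consider
\[
Z_i(f)=\tfrac12\log\frac{f(x_i,y_i)}{f^\star(x_i,y_i)} .
\]
Conditionally on the history and on $x_i$, we have
\[
\mathbb{E}[\exp Z_i(f)]=\sum_y \sqrt{f(x_i,y)f^\star(x_i,y)}=1-H^2\bigl(f(x_i,\cdot),f^\star(x_i,\cdot)\bigr),
\]
where $H^2$ is the squared Hellinger distance.

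\textbf{Step 2: exponential supermartingale.} Taking the expectation over $x_i\sim\mathcal{D}_i$ as well, the process
\[
\exp\Bigl(\textstyle\sum_{i\le n} Z_i(f)-\sum_{i\le n}\log\bigl(1-\mathbb{E}_{x\sim\mathcal{D}_i}H^2\bigr)\Bigr)
\]
is a nonnegative martingale with mean $1$. By Markov's inequality and a union bound over the finite class $\mathcal{F}$, with probability at least $1-\delta$, simultaneously for all $f$,
\[
\sum_i -\log\bigl(1-\mathbb{E}_{\mathcal{D}_i}H^2(f,f^\star)\bigr)\le \log(|\mathcal{F}|/\delta)-\sum_i Z_i(f).
\]

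\textbf{Step 3: plug in the MLE.} Apply this to $f=\hat f$. Because $\hat f$ maximizes the likelihood and $f^\star\in\mathcal{F}$, we have $\sum_i Z_i(\hat f)\ge 0$. Since $-\log(1-u)\ge u$, this gives
\[
\sum_i \mathbb{E}_{\mathcal{D}_i}H^2(\hat f,f^\star)\le\log(|\mathcal{F}|/\delta).
\]

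\textbf{Step 4: convert to total variation.} Finally use $\|p-q\|_{\mathrm{TV}}^2\le 2H^2(p,q)$, with $H^2$ normalized as $1-\sum_y\sqrt{pq}$, to obtain the claimed $2\log(|\mathcal{F}|/\delta)$.

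\textbf{Main obstacle.} The delicate part is Step 2 under adaptivity. The contexts $x_i$ are chosen depending on past outcomes, so I have to define the filtration carefully, so that both $x_i$ and $\mathcal{D}_i$ are predictable. Then the conditional expectation of $\exp Z_i$ equals $1-\mathbb{E}_{\mathcal{D}_i}H^2$, and the telescoping product is exactly a martingale. A second point needs care: the union bound must be taken over fixed $f$ before $\hat f$ is substituted, because $\hat f$ itself depends on the data.
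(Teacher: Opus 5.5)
The paper gives no proof of this lemma; it only cites Theorem~21 of \citet{agarwal2020flambe}. Your argument is correct and is essentially the argument behind that result: the exponential moment of $\tfrac12\log(f/f^\star)$ is normalized by its predictable conditional mean $1-\mathbb{E}_{x\sim\mathcal{D}_i}H^2$, the MLE property discards the empirical term, and then $-\log(1-u)\ge u$ and the Hellinger-to-TV comparison finish it, with your per-$f$ Markov bound plus union bound playing the role of their tangent-sequence decoupling lemma, whose $\log|\mathcal{F}|$ term is exactly that union bound. One point is worth making explicit: the constant $2$ in Step~4 requires the convention $\|p-q\|_{\mathrm{TV}}=\sup_A|p(A)-q(A)|=\tfrac12\|p-q\|_1$. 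With the unnormalized $\ell_1$ norm, $\|p-q\|^2\le 2H^2$ fails (e.g.\ for disjoint point masses), and the same argument gives $8\log(|\mathcal{F}|/\delta)$.
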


\clearpage
\section{Additional Experimental Results}
\label{sec: add exp}
\paragraph{Experiments on Meta-World}
Table~\ref{tab: meta-world complete result complete} shows the complete experimental results in Meta-World.

\begin{table*}[h]
    \centering
    \begin{tabular}{c|ccccc}
        \toprule
          Task & OPRL & PT & PT+PDS & IDRL & OPRIDE \\ % & Oracle \\
         \midrule
   assembly-v2 & 10.1$\pm$0.5 & 10.2$\pm$0.7 & 12.8$\pm$0.6 & 10.3$\pm$1.9 & \textbf{14.2$\pm$1.3} \\ % & 18.3$\pm$6.9\\
   basketball-v2 & 11.7$\pm$10.2 & 80.7$\pm$0.1 & 78.7$\pm$2.0 & \textbf{82.7$\pm$2.5} & 61.4$\pm$2.3 \\ % & 87.3$\pm$0.5\\
   bin-picking-v2 & 82.0$\pm$5.6 & 31.9$\pm$16.2 & 53.4$\pm$19.0 & 84.7$\pm$2.9 & \textbf{93.3$\pm$3.2} \\ % & 94.9$\pm$2.7\\
   button-press-wall-v2 & 51.7$\pm$1.6 & 58.8$\pm$0.9 & 59.4$\pm$0.9 & 69.0$\pm$1.0 & \textbf{77.7$\pm$0.1} \\ % & 63.0$\pm$2.1\\
   box-close-v2 & 15.0$\pm$0.7 & \textbf{17.7$\pm$0.1} & 17.2$\pm$0.3 & 16.9$\pm$0.6 & 16.8$\pm$0.4 \\ % & 99.2$\pm$1.0\\
   coffee-push-v2 & 1.7$\pm$1.7 & 1.3$\pm$0.5 & 1.3$\pm$0.5 & 42.0$\pm$3.8 & \textbf{59.4$\pm$24.8} \\ % & 20.4$\pm$2.7\\
   disassemble-v2 & 8.4$\pm$0.8 & 6.0$\pm$0.4 & 7.6$\pm$0.2 & 7.4$\pm$1.9 & \textbf{26.6$\pm$4.9} \\ % & 44.7$\pm$9.0\\
   door-close-v2 & 61.2$\pm$1.3 & 65.1$\pm$10.1 & 62.4$\pm$8.7 & 78.1$\pm$3.2 & \textbf{94.8$\pm$1.1} \\ % & 79.1$\pm$2.3\\
   door-unlock-v2 & \textbf{79.2$\pm$2.3} & 73.7$\pm$5.4 & 73.6$\pm$4.8 & 71.2$\pm$2.9 & 71.0$\pm$2.3 \\ % & 72.9$\pm$1.9\\
   drawer-open-v2 & 53.0$\pm$3.3 & 59.7$\pm$1.3 & 58.3$\pm$0.1 & 62.5$\pm$2.0 & \textbf{68.7$\pm$3.0} \\ % & 70.2$\pm$0.5\\
   faucet-close-v2 & 60.8$\pm$1.0 & 57.8$\pm$0.9 & 46.2$\pm$0.2 & 61.5$\pm$3.2 & \textbf{73.1$\pm$0.8} \\ % & 55.1$\pm$3.1 \\
   hammer-v2 & 16.4$\pm$1.0 & 30.2$\pm$1.7 & 32.6$\pm$0.8 & 33.6$\pm$2.8 & \textbf{50.3$\pm$3.2} \\ % & 96.3$\pm$0.6\\
   hand-insert-v2 & 5.2$\pm$3.2 & 18.7$\pm$0.1 & 20.3$\pm$0.6 & 41.9$\pm$2.7 & \textbf{61.8$\pm$4.9} \\ % & 47.0$\pm$19.9 \\
   handle-press-v2 & \textbf{28.7$\pm$4.0} & 27.9$\pm$0.2 & 28.2$\pm$0.2 & 28.0$\pm$0.4 & \textbf{28.7$\pm$0.1} \\ % & 96.7$\pm$0.4\\
   lever-pull-v2 & \textbf{63.2$\pm$10.4} & 49.2$\pm$3.7 & 51.7$\pm$0.1 & 33.1$\pm$1.2 & 51.8$\pm$1.6 \\ % & 39.7$\pm$0.1 \\
   peg-insert-side-v2 & 3.5$\pm$1.8 & 16.8$\pm$0.1 & 12.4$\pm$1.4 & 67.4$\pm$0.1 & \textbf{79.0$\pm$0.2} \\ % & 73.5$\pm$1.1\\
   plate-slide-v2 & 77.4$\pm$1.6 & 4.9$\pm$0.0 & 37.3$\pm$2.3 & \textbf{79.6$\pm$3.5} & \textbf{79.9$\pm$4.6} \\ % & 79.3$\pm$3.6\\
   push-v2 & 10.6$\pm$1.5 & 16.7$\pm$5.0 & 1.8$\pm$0.4 & 30.7$\pm$5.3 & \textbf{59.1$\pm$5.4} \\ % & 59.8$\pm$1.4\\
   push-back-v2 & 0.8$\pm$0.0 & 1.1$\pm$0.4 & 1.1$\pm$0.1 & 14.0$\pm$1.1 & \textbf{17.7$\pm$2.0} \\ % & 55.2$\pm$10.1\\
   push-wall-v2 & 7.4$\pm$4.2 & 74.8$\pm$14.4 & 3.4$\pm$0.9 & 89.2$\pm$3.2 & \textbf{102.2$\pm$1.2} \\ % & 90.8$\pm$0.1\\
   reach-v2 & 63.5$\pm$2.9 & 82.0$\pm$0.8 & 84.3$\pm$0.9 & 75.8$\pm$1.8 & \textbf{88.0$\pm$0.5} \\ % & 87.3$\pm$1.4\\
   soccer-v2 & 34.3$\pm$4.0 & \textbf{51.3$\pm$4.1} & 41.5$\pm$11.9 & 44.3$\pm$2.1 & 45.4$\pm$3.9 \\ % & 70.5$\pm$15.6 \\
   sweep-into-v2 & 37.1$\pm$13.9 & 9.8$\pm$0.2 & 9.2$\pm$0.1 & 63.1$\pm$3.5 & \textbf{71.6$\pm$0.1} \\ % & 77.5$\pm$0.1\\
   sweep-v2 & 6.8$\pm$1.8 & 8.0$\pm$0.4 & 8.0$\pm$0.1 & 73.0$\pm$2.8 & \textbf{78.5$\pm$1.0} \\ % & 85.8$\pm$0.4\\
    \bottomrule
    \end{tabular}
    \caption{
    Performance of offline RL algorithm on the reward-labeled dataset with different preference reward learning methods on the Meta-World tasks.}
    \label{tab: meta-world complete result complete}
\end{table*}

\paragraph{Experiments on Mujoco and Kitchen}
We conduct a wider range of experiments on MuJoCo and Kitchen tasks. The experimental results in Table~\ref{tab: mujoco} show that OPRIDE achieves superior performance compared with other baselines. 
The experimental results also demonstrate that the In-Dataset Exploration and Variance-based Discount Scheduling mechanisms we proposed can be effectively applied to different tasks.

\begin{table*}[h]
\centering
\begin{tabular}{c|c|cccc}
\toprule
Domain & Tasks & OPRL & PT & PT+PDS & OPRIDE\\
\midrule
\multirow{8}{*}{Mujoco} & hopper-medium & 23.0$\pm$0.1 & 36.9$\pm$2.1 & 35.8$\pm$1.8 & \textbf{38.5$\pm$2.2}\\
 & hopper-medium-expert & 57.7$\pm$23.7 & 68.0$\pm$2.6 & 69.1$\pm$1.7 & \textbf{92.3$\pm$15.8}\\
 & walker2d-medium & 70.6$\pm$1.1 & \textbf{71.7$\pm$2.6} & 70.9$\pm$1.8 & \textbf{72.7$\pm$1.8}\\
 & walker2d-medium-expert & 108.3$\pm$3.8 & \textbf{109.4$\pm$0.3} & 108.4$\pm$0.5 & \textbf{110.3$\pm$0.2}\\
 & halfcheetah-medium & 41.9$\pm$0.1 & \textbf{42.1$\pm$0.1} & 41.5$\pm$0.1 & \textbf{42.4$\pm$0.1}\\
 & halfcheetah-medium-expert & 81.8$\pm$0.6 & 81.9$\pm$0.1 & 82.4$\pm$0.2 & \textbf{86.5$\pm$1.5}\\
 & kitchen-partial & 34.6$\pm$0.2 & 48.2$\pm$4.1 & \textbf{51.1$\pm$2.3} & 38.7$\pm$3.7\\
 & kitchen-mixed & 46.9$\pm$0.1 & 42.5$\pm$1.0 & 44.9$\pm$1.9 & \textbf{49.8$\pm$0.1}\\
 & kitchen-partial & \textbf{62.6$\pm$1.7} & 47.5$\pm$2.5 & 49.8$\pm$4.5 & \textbf{63.7$\pm$1.1}\\
\bottomrule
\end{tabular}
\caption{Performance of offline RL algorithm on the reward-labeled dataset with different preference reward learning methods on the Mujoco tasks.}
\label{tab: mujoco}
\end{table*}

\begin{table*}[t]
\centering
\begin{tabular}{c|cccccc}
\toprule
$\gamma_{\text{\rm small}}$ & 0.5 & 0.6 & 0.7 & 0.8 & 0.9 & 0.95 \\
\midrule
bin-picking & 72.1$\pm$23.9 & 87.8$\pm$2.7 & 93.3$\pm$3.2 & 84.6$\pm$9.4 & 74.8$\pm$33.5 & 70.9$\pm$9.4\\
button-press-wall & 77.6$\pm$0.3 & 77.4$\pm$0.3 & 77.7$\pm$0.1 & 71.0$\pm$0.7 & 69.2$\pm$0.9 & 68.1$\pm$9.7 \\
door-close & 88.4$\pm$0.8 & 89.9$\pm$0.7 & 94.8$\pm$1.1 & 90.0$\pm$2.1 & 91.1$\pm$1.5 & 87.6$\pm$0.7 \\
faucet-close & 58.1$\pm$5.2 & 58.2$\pm$12.5 & 73.1$\pm$0.8 & 61.4$\pm$2.7 & 55.1$\pm$3.7 & 57.4$\pm$12.1 \\
\bottomrule
\end{tabular}
\caption{Performance of offline RL algorithm on the reward-labeled dataset with various discount factor values $\gamma_{\text{\rm small}}$ on the high variance data points.}
\vspace{-0.5cm}
\label{tab: discount}
\end{table*}

\paragraph{Ablation about In-Dataset Exploration module}
The choice to emphasize value functions over reward functions is crucial due to their ability to guide policy optimization effectively. 
Intuitively, while maximizing the information gain concerning the reward function (e.g., difference over the reward function) can help learn a well-calibrated reward function, it can still be sample inefficient in determining the optimal policy since we are not interested in the accuracy of the reward function in low-return regions. 
For instance, suppose we have actions $a_1$ and $a_2$ that lead to a terminal state $s_0$, and their immediate rewards are highly uncertain, ranging from $[-1, 1]$. 
And we have actions $a_3$ and $a_4$ that lead to high return states $s_1$ but yield a known fixed immediate reward of zero.
By maximizing the reward differences, we will compare $a_1$ and $a_2$, but such comparison contains no information in determining the optimal policy, which will not choose $a_1$ and $a_2$ at all. Theoretically, maximizing the information gain with respect to the reward function is insufficient to derive a performance guarantee for PbRL.

We conduct additional ablation studies for these two mechanisms. The experimental results in Table~\ref{tab: ablation 1} show that maximizing information gain about the optimal policy can achieve better performance than the reward function. 

\begin{table*}[h]
\centering
\begin{tabular}{c|c|cc}
\toprule
Domain & Tasks & OPRIDE (Reward Difference) & OPRIDE (Value Function Difference)\\
\midrule
\multirow{7}{*}{Metaworld} & bin-picking & 78.5$\pm$17.8 & \textbf{93.3$\pm$3.2}\\
 & button-press-wall  & 67.4$\pm$5.4  & \textbf{77.7$\pm$0.1} \\
 & door-close  & 88.3$\pm$0.7  & \textbf{94.8$\pm$1.1}\\
 & faucet-close  & 48.7$\pm$0.6  & \textbf{73.1$\pm$0.8}\\
 & peg-insert-side  & 9.7$\pm$8.5  & \textbf{79.0$\pm$0.2}\\
 & reach  & \textbf{86.6$\pm$0.1}  & \textbf{88.0$\pm$0.5}\\
 & sweep  & 18.2$\pm$2.9  & \textbf{78.5$\pm$1.0}\\
\bottomrule
\end{tabular}
\caption{Ablation study on the metaworld tasks.}
\label{tab: ablation 1}
\end{table*}

\paragraph{Ablation about Variance-based
Discount Scheduling module}
The choice of using a pessimistic discount factor in offline RL draws on theoretical guarantees discussed in prior works~\citep{jiang2015dependence,hu2022role}. While prior methods may utilize a smaller fixed discount factor~\citep{jiang2015dependence} or tuned values in imitation learning~\citep{liu2023imitation}, our approach innovatively employs variance-based discount scheduling to mitigate reward overestimation issues specific to offline Preference-based RL.

A smaller discount factor serves a dual purpose: it regulates optimality against sample efficiency trade-offs~\citep{hu2022role} and aligns with model-based pessimism principles, ensuring robust policy learning. Conversely, multiplicative adjustments to rewards lack theoretical grounding and often yield suboptimal performance, as evidenced in Table~\ref{tab: ablation 2}.

\begin{table*}[h]
\centering
\begin{tabular}{c|c|cc}
\toprule
Domain & Tasks & OPRIDE (Penalise Reward) & OPRIDE (Penalise Discount Factor)\\
\midrule
\multirow{7}{*}{Metaworld} & bin-picking & 53.4$\pm$19.0 & \textbf{93.3$\pm$3.2}\\
 & button-press-wall  & 59.4$\pm$0.9 & \textbf{77.7$\pm$0.1} \\
 & door-close  & 62.4$\pm$8.7 & \textbf{94.8$\pm$1.1}\\
 & faucet-close  & 46.2$\pm$0.2 & \textbf{73.1$\pm$0.8}\\
 & peg-insert-side  & 12.4$\pm$1.4 & \textbf{79.0$\pm$0.2}\\
 & reach  & 84.3$\pm$0.9 & \textbf{88.0$\pm$0.5}\\
 & sweep  & 8.0$\pm$0.1 & \textbf{78.5$\pm$1.0}\\
\bottomrule
\end{tabular}
\caption{Ablation studies about penalizing rewards and the discount factor.}
\label{tab: ablation 2}
\end{table*}

\paragraph{Ablation for hyperparameter $m$}
We conduct an ablation study on the hyperparameter $m$, with the results presented in Table~\ref{tab: m}. Our findings indicate a clear trade-off. An excessively large $m$ reduces the discount factor $\gamma$ for too many data points, leading to an overly pessimistic value estimation. Conversely, a value of $m$ that is too small provides an insufficient penalty for estimation uncertainty. Based on this, we recommend using a larger $m$ for more complex tasks (which tend to have higher estimation noise) and a smaller $m$ for simpler tasks. 

\begin{table*}[h]
\centering
\begin{tabular}{c|c|c|c|c|c}
\toprule
$m\%$ & $10\%$& $20\%$ & $30\%$ & $40\%$ & $50\%$ \\
\midrule
bin-picking & 71.4$\pm$3.3 & 85.9$\pm$3.9 & 93.3$\pm$3.2 & 88.7$\pm$2.6 & 73.2$\pm$2.7 \\
button-press-wall & 67.2$\pm$2.1 & 70.2$\pm$0.6 & 77.7$\pm$0.1 & 77.5$\pm$0.3 & 77.5$\pm$0.6\\
door-close & 87.9$\pm$1.2 & 89.6$\pm$1.0 & 94.8$\pm$1.1 & 89.6$\pm$0.8 & 87.3$\pm$0.9 \\
faucet-close & 56.8$\pm$2.6 & 62.3$\pm$1.6 & 73.1$\pm$0.8 & 58.7$\pm$1.9 & 57.6$\pm$2.1 \\
\bottomrule
\end{tabular}
\caption{Ablation study for the hyperparameter $m$ from 10\% to 50\%.}
\label{tab: m}
\end{table*}

\paragraph{Softer confidence discount mechanism}
We investigate a more adaptive, "soft" confidence discount mechanism and compared it with our current threshold-based approach. Specifically, we implement a continuous annealing strategy where the smaller discount factor, $\gamma_{\text{small}}$, is adjusted based on the variance of the ensemble's value function estimates, $\text{Var}[Q_{\phi_i}(s,a)]_{i=1}^M$. In this setup, $\gamma_{\text{small}}$ decreases as the variance increases, governed by the formula:
$$
\gamma_{\text{small}} = \frac{\gamma}{\max(1, \alpha \cdot \text{Var}[Q_{\phi_i}(s,a)]_{i=1}^M)}.
$$
The experimental results, presented in Table~\ref{tab: soft gamma}, show that the performance of this continuous annealing approach is comparable to our hard-penalty method. Given its comparable performance and simpler implementation, we opted for the threshold-based approach in our final model.

\begin{table*}[h]
\centering
\begin{tabular}{c|c|c}
\toprule
Tasks & Continuous annealing & Threshold-based \\
\midrule
bin-picking & 94.1$\pm$3.7 & 93.3$\pm$3.2 \\
button-press-wall & 77.4$\pm$0.3 & 77.7$\pm$0.1 \\
door-close & 94.3$\pm$1.7 & 94.8$\pm$1.1 \\
faucet-close & 71.6$\pm$0.9 & 73.1$\pm$0.8 \\
peg-insert-side & 80.5$\pm$0.3 & 79.0$\pm$0.2 \\
reach & 87.5$\pm$0.6 & 88.0$\pm$0.5 \\
sweep & 79.8$\pm$1.2 & 78.5$\pm$1.0 \\
\bottomrule
\end{tabular}
\caption{Ablation study for the discount factor mechanism on the Meta-World tasks.}
\label{tab: soft gamma}
\end{table*}

\paragraph{Computational cost}
We present the computational cost of OPRIDE on the GeForce RTX 3090 GPU device, as shown in Table~\ref{tab: cost}. The reported time is the sum of the reward function training time and the policy training time.The experimental results indicate that as the number of ensembles $M$ increases, the computational cost does not increase significantly. This is because we have adopted a multi-head mechanism instead of separate training, thereby saving training time.
% Therefore, the waiting time for each query will not be too long, and using V-estimation to select queries is computationally feasible.

\begin{table*}[h]
\centering
\begin{tabular}{c|c|c|c}
\toprule
$M$ & 2 & 5 & 10 \\
\midrule
bin-picking & 1.1h & 1.2h & 1.3h \\
button-press-wall & 1.1h & 1.1h & 1.2h\\
door-close & 1.1h & 1.2h & 1.3h \\
faucet-close & 0.9h & 1.0h & 1.2h\\
peg-insert-side & 1.0h & 1.1h & 1.3h\\
% reach & 1.1h & 1.2h & 1.3h\\
% sweep & 1.2h & 1.3h & 1.4h\\
\bottomrule
\end{tabular}
\caption{Computational Cost for OPRIDE with ensemble $M$, where $h$ is the hour.
The report time is the sum of the reward function training time and the policy training time.}
\label{tab: cost}
\end{table*}

\clearpage

\paragraph{Ablation study of query number}
We conduct experiments with various query numbers.
The experimental results in Table~\ref{tab: query number} show that performance gains accelerate most rapidly between 0.1\% and 0.2\% of queries. Beyond 0.3\%, improvements gradually taper off, ultimately stabilizing at around 1\%. At this point, adding further queries yields only marginal performance benefits.

\begin{table*}[h]
\centering
\begin{tabular}{c|c|c|c|c|c|c|c}
\toprule
Query Number & 1 (BC) & 10 (0.1\%) & 13 & 20 (0.2\%) & 30 (0.3\%) & 50 (0.5\%) &100 (1.0\%) \\
\midrule
Hammer-v2 & 13.2$\pm$3.3 & 50.3$\pm$3.2 & 55.4$\pm$2.6 & 75.9$\pm$2.2 & 79.1$\pm$2.6 & 79.5$\pm$2.7 & 79.9$\pm$2.3 \\
Coffee-push-v2 & 2.5$\pm$1.1 & 56.7$\pm$2.8 & 57.6$\pm$2.5 & 55.3$\pm$3.1 & 58.9$\pm$3.7 & 59.4$\pm$3.5 & 59.5$\pm$3.4 \\
Disassemble-v2 & 9.3$\pm$1.7 & 26.6$\pm$2.8 & 27.8 $\pm$ 3.1 & 30.6$\pm$5.2 & 35.7$\pm$4.8 & 37.2$\pm$4.6 & 38.3$\pm$5.1\\
Push-v2 & 1.2$\pm$0.6 & 50.9$\pm$1.8 & 63.7$\pm$2.1 & 76.8$\pm$2.4 & 82.7$\pm$2.6 & 83.6$\pm$2.3 & 84.0 $\pm$2.2\\
\bottomrule
\end{tabular}
\caption{Performance of OPRIDE with various query numbers.}
\label{tab: query number}
\end{table*}

\paragraph{Comparison with one-stage and  sequential ranked list methods}
We compare our method with one-stage framework (IPL~\citep{hejna2023inverse}, CPL~\citep{hejnacontrastive}) and sequential ranked list method (LiRE)~\citep{choi2024listwise}.
The experimental results in Table~\ref{tab: addi compare} show that compared to PT, one-stage framework and LiRE achieve better performance by either bypassing reward modeling or enhancing query sampling mechanisms.
Meanwhile, OPRIDE outperforms all of them, demonstrating that our iterative two-stage framework can explore more valuable queries.

\begin{table*}[h]
\centering
\begin{tabular}{c|c|c|c|c|c}
\toprule
Tasks & PT & IPL & CPL & LiRE & OPRIDE\\
\midrule
lever-pull & 49.2$\pm$3.7 & 50.2$\pm$2.1 & 50.1$\pm$2.5 & 51.2$\pm$1.8 & 51.8$\pm$1.6\\
peg-insert-side&16.8$\pm$0.1&53.8$\pm$0.4 &  54.7$\pm$0.3&63.1$\pm$0.2 & 79.0$\pm$0.2\\
plate-slide & 4.9$\pm$0.0&60.9$\pm$5.6 & 61.7$\pm$4.8&68.3$\pm$4.3&79.9$\pm$4.6  \\
push&16.7$\pm$5.0&38.5$\pm$4.2&39.2$\pm$3.9&45.5$\pm$3.7&59.1$\pm$5.4\\
push-back&1.1$\pm$0.4&9.8$\pm$1.6&9.3$\pm$1.8&13.2$\pm$1.5&17.7$\pm$2.0\\
push-wall &74.8$\pm$14.4&85.8$\pm$4.7&87.3$\pm$3.8&90.2$\pm$3.6 & 102.2$\pm$1.2\\
reach&82.0$\pm$0.8&84.2$\pm$0.2&84.7$\pm$0.6&85.4$\pm$0.3 & 88.0$\pm$0.5\\
soccer&51.3$\pm$4.1&52.3$\pm$4.7&53.6$\pm$3.9&54.2$\pm$3.6&45.4$\pm$3.9\\
sweep-into&9.8$\pm$0.2&54.2$\pm$0.3&56.8$\pm$0.3&62.9$\pm$0.2&71.6$\pm$0.1\\
sweep&8.0$\pm$0.4&69.3$\pm$1.1&68.2$\pm$1.3&71.3$\pm$1.6&78.5$\pm$1.0\\
\bottomrule
\end{tabular}
\caption{Comparison with one-stage framework and sequential ranked list method.}
\label{tab: addi compare}
\end{table*}

\paragraph{Pixel-based and actual human-in-the-loop environments}
we conducted experiments on pixel-based Atari environments to rigorously test OPRIDE's applicability beyond vector-state domains.  
In addition, these experiments incorporated actual human preference feedback rather than synthetic scripted teachers.  
We recruited 15 human evaluators with prior gaming experience to provide trajectory comparisons through an intuitive interface.  Each evaluator performed 50 pairwise comparisons per task, with queries selected by OPRIDE's exploration mechanism.

As shown in Table~\ref{tab: pixel human}, OPRIDE achieves state-of-the-art results across all five Atari games.  Notably, these improvements persist despite the added complexity of image-based state representations and inherent noise in human labeling.  This validates OPRIDE's robustness to visual inputs and its effectiveness with genuine human feedback.
We attribute this success to two key design features: (1) The exploration strategy's focus on segment-level value differences remains effective when states are represented as latent features from CNN encoders, and (2) The variance-based discount scheduling mitigates overoptimization risks amplified by noisy human labels.

\begin{table*}[h]
\centering
\begin{tabular}{c|c|c|c|c|c}
\toprule
Tasks & OPRL&PT&PT+PDS&IDRL&OPRIDE\\
\midrule
Pong&9.6$\pm$1.4&9.4$\pm$1.0&8.5$\pm$1.8&15.3$\pm$1.1&17.8$\pm$1.3\\
Breakout&125.9$\pm$14.2&79.9$\pm$13.9&86.3$\pm$13.8&153.7$\pm$14.5&256.7$\pm$14.3\\
Q*bert&7924.6$\pm$376.1&7482.9$\pm$353.4&6844.2$\pm$394.6&8294.1$\pm$359.7&13535.2$\pm$327.2\\
Seaquest&2784.1$\pm$72.7&2538.3$\pm$78.9&2459.6$\pm$74.5&2941.2$\pm$69.2&3478.4$\pm$71.3\\
Asterix&164.9$\pm$21.5&155.8$\pm$24.6&146.3$\pm$27.4&357.4$\pm$30.1&426.9$\pm$28.7\\
\bottomrule
\end{tabular}
\caption{Experiments on  pixel-based Atari tasks with human-in-the-loop.}
\label{tab: pixel human}
\end{table*}

\clearpage
\section{Experiment Details}
\label{sec: exp details}
\paragraph{Experimental Setup}
For the Meta-World tasks, each dataset consists of 1000 trajectories. 50 trajectories of which are collected by the corresponding scripted policy added with a Gaussian noise $\cN(0,0.8)$ to increase diversity, and the rest 950 trajectories are collected with a policy that is a $\epsilon$-greedy variant to the former noisy policy and select random actions with probability $\epsilon=0.8$. 
For the Antmaze tasks, we use the standard dataset in the D4RL benchmark but remove the reward labels.

\paragraph{OPRL}
We use the official implementation~\footnote{\url{https://github.com/danielshin1/oprl}}, which uses 7 ensembles.
Each ensemble is initially trained with 1 randomly selected query and then performs 3 rounds of active querying and training, and in each round, 1 query is acquired, making a total of 10 queries. 
\paragraph{PT}
We use the official implementation~\footnote{\url{https://github.com/csmile-1006/PreferenceTransformer/tree/main}}.
We follow its original hyper-parameter settings, and change the number of queries to 10.
\paragraph{IDRL} 
We use the official implementation~\footnote{\url{https://github.com/david-lindner/idrl}}.
We follow its original hyper-parameter settings.
\paragraph{Survival Instinct}
We use the official implementation~\footnote{\url{https://survival-instinct.github.io}}.
We follow its original hyper-parameter settings.
\paragraph{OPRIDE} Our code is built on PT.
We use the same transformer architecture and hyper-parameter with PT.
The ensemble number $N$ is 2.
The size of $\mathcal{D}$ is 10000.
The offline pre-training step for $V_i(\cdot, \cdot)$ in the Equation~\ref{eqn:compute traj} is $10000 \times c$, where $c$ is the $c$-th selected query. 
Please refer to Table~\ref{tab: parameters} for detailed parameters.
 
\begin{table}[h]
    \centering
    \begin{tabular}{ll}
    \toprule
      Hyperparameter   & Value \\
      \midrule
      \hspace{0.3cm} Optimizer & Adam \\
      \hspace{0.3cm} Critic learning rate & 3e-4 \\
      \hspace{0.3cm} Actor learning rate & 3e-4 \\
      \hspace{0.3cm} Mini-batch size & 256 \\
      \hspace{0.3cm} Discount factor & 0.99 \\
      \hspace{0.3cm} Target update rate & 5e-3 \\
      \hspace{0.3cm} IQL parameter $\tau$ & 0.7\\
      \hspace{0.3cm} IQL parameter $\alpha$ & 3.0\\
      \hspace{0.3cm} Query Number & 10 \\
      \midrule
      OPRL & Value \\
      \midrule
      \hspace{0.3cm} Ensemble Number & 7 \\
      \midrule
      OPRIDE & Value \\
      \midrule
      \hspace{0.3cm} Ensemble Number $N$ & 2 \\
      \hspace{0.3cm} Size of $\mathcal{D}$ & 10000 \\
      \hspace{0.3cm} Offline Pre-training step & $10000 \times c$ \\
      \hspace{0.3cm} Top~$m\%$ & Top~30\% \\
      \hspace{0.3cm} $\gamma_{\text{\rm small}}$ & 0.7 \\
      \hspace{0.3cm} $S$ & 1000\\
      \bottomrule
    \end{tabular}
    \caption{Hyper-parameters sheet of Algorithms.}
    \label{tab: parameters}
\end{table}

\end{document}